\theoremstyle{plain}
\newtheorem{theorem}{Theorem}[section]
\newtheorem{lemma}[theorem]{Lemma}
\theoremstyle{definition}
\theoremstyle{remark}
\icmltitlerunning{Entropy Meets Importance: A Unified Head Importance–Entropy Score for Stable and Efficient Transformer Pruning}
\begin{document}

\twocolumn[
  \icmltitle{Entropy Meets Importance: A Unified Head Importance--Entropy Score for Stable and Efficient Transformer Pruning}

\icmlsetsymbol{equal}{*}

\begin{icmlauthorlist}
    \icmlauthor{Minsik Choi}{equal,ku}
    \icmlauthor{Hyegang Son}{equal,ku}
    \icmlauthor{Changhoon Kim}{ssu}
    \icmlauthor{Young Geun Kim}{ku}
\end{icmlauthorlist}

\icmlaffiliation{ku}{
  Department of Computer Science and Engineering, Korea University, Seoul, South Korea
}
\icmlaffiliation{ssu}{
  School of Software, Soongsil University, Seoul, South Korea
}
\icmlcorrespondingauthor{Changhoon Kim}{changhoon.kim@ssu.ac.kr}
\icmlcorrespondingauthor{Young Geun Kim}{younggeun\_kim@korea.ac.kr}

  \icmlkeywords{Machine Learning, ICML}

  \vskip 0.3in
]



\printAffiliationsAndNotice{\icmlEqualContribution}

\begin{abstract}
Transformer-based models have achieved remarkable performance in NLP tasks. However, their structural characteristics—multiple layers and attention heads—introduce efficiency challenges in inference and deployment. To address these challenges, various pruning methods have recently been proposed. Notably, gradient-based methods using Head Importance Scores (HIS) have gained traction for interpretability, efficiency, and ability to identify redundant heads. However, HIS alone has limitations as it captures only the gradient-driven contribution, overlooking the diversity of attention patterns. To overcome these limitations, we introduce a novel pruning criterion, \textbf{HIES (Head Importance-Entropy Score)}, which integrates head importance scores with attention entropy, providing complementary evidence on per-head contribution. Empirically, HIES‐based pruning yields up to 15.2\% improvement in model quality and $2.04\times$ improvement in stability over HIS‐only methods, enabling substantial model compression without sacrificing either accuracy or stability. Code will be released upon publication.
\end{abstract}
\section{Introduction}


Recent advances in Large Language Models (LLMs) have led to remarkable performance. In pursuit of better modeling of long-range dependencies, LLMs have scaled up both context lengths and attention head counts, guided by empirical scaling laws that correlate model capacity with performance~\citep{kaplan2020scaling, chen2025cost}. This scaling, however, incurs substantial computational and memory costs during inference, resulting in prohibitive latency and energy consumption~\citep{yang2020procrustes, kim2020autoscale, hoefler2021sparsity, zhou2024survey}. These constraints become critical barriers when LLMs are deployed to resource-constrained environments such as consumer-grade mobile devices or edge devices, for applications including real-time translation, intelligent voice assistants, and personalized recommendation systems.

To improve deployability of LLMs for resource-constrained environments, various pruning methods have been proposed~\citep{ma2023llm, yang2024laco}. Typically, these methods selectively reduce computations by removing less important weights, channels, or attention heads. Among them, \emph{head pruning} has gained considerable attention due to its structural simplicity, interpretability, and ability to directly target redundancy within the attention mechanism. Existing head pruning methods typically identifies less important heads based on Head Importance Score (HIS), which quantifies the gradient-based contribution of each head to the loss function. By leveraging gradient-based sensitivity to the loss, HIS prioritizes heads that have the most direct impact on accuracy of model inference. 

However, HIS-based methods often exhibit limited stability in their performance. For clarity, prior works \citep{bair2023adaptive, blanchet2024stability} motivate treating stability as a practical surrogate for robustness—namely, a model’s resilience to input perturbations and pruning-induced distributional shifts. Such stability is crucial in real-world deployments where distribution shifts are common and aggressive compression is often required. In our observations, this instability appears to stem from two key factors. First, existing HIS-based methods solely rely on the loss gradient with respect to each head's output, which fails to capture token-level attention allocation or its alignment with the task’s empirical distribution. Consequently, a concentrated head and a diffuse head can yield similar important scores, concealing their functionally distinct roles on the task-specific data manifold. Second, a uniform, layer-agnostic criterion precludes layer-specific adaptation despite evidence that different layers require distinct attention behaviors~\citep{artzy2024attend}. Lacking such layer-specific characteristics often results in imbalanced pruning—preserving redundant heads in some layers while removing functionally critical ones in others. This imbalance not only degrades accuracy but also undermines stability, leading to unpredictable performance fluctuations across inputs or compression levels, particularly under aggressive pruning ratios.

This work aims to address the aforementioned limitations by proposing an \emph{Entropy-Aware Pruning Criterion}, termed \textbf{HIES (Head Importance-Entropy Score)}, which jointly considers a head’s gradient-based contribution to the loss and the distributional structure of its attention—specifically, the extent to which its attention is concentrated or dispersed across input tokens. We compute the HIS to quantify a head’s loss relevance and Attention Entropy (AE) to measure how evenly a head distributes attention over input tokens. Their principled combination in HIES enables layer-adaptive pruning decisions and preserves functionally important heads. This allows for more balanced pruning across layers, improving both accuracy and stability under aggressive compression. Empirically, HIES yields up to a 15.2\% improvement in model quality and $2.04\times$ improvement in stability over HIS‐only methods. By preserving both accuracy and stability even under aggressive pruning ratios, HIES represents a more practical and robust solution compared to existing pruning methods. It is expected to offer more stable performance in resource-constrained environments.
\section{Background}
\label{sec:background}

\textbf{Attention head pruning.} To compress large language models efficiently, structured pruning methods~\citep{han2015learning, wang2019eigendamage, xia2022structured, ma2023llm, ashkboos2024slicegpt}, which remove specific architectural components from Transformer models, have been widely adopted. Among these, attention head pruning has gained traction. This is largely because it directly reduces attention FLOPs and KV-cache memory while preserving the layer topology, thereby simplifying checkpoint compatibility and serving integration. Consequently, large‐scale studies adopt head-level pruning as a practical axis in LLM compression pipelines \citep{jaradat2024hybrid, muralidharan2024compact}\footnote{For more detailed discussions on related work, please refer to Appendix~\ref{related_work}.}. Attention head pruning removes selected heads from a trained Transformer’s multi‑head attention with minimal impact on end‑task performance \citep{vaswani2017attention}. 
A widely adopted criterion is the HIS of \citet{michel2019sixteen}, which introduces mask variables $m_h \in \{0,1\}$ multiplying the output of head $h$ and defines importance as the expected first-order loss increase under masking:\vspace{-1.5em}

\begin{equation}
\text{HIS}_h 
= \mathbb{E}_{x \sim \mathcal{D}}\!\left|\frac{\partial \mathcal{L}(x)}{\partial m_h}\right|
= \mathbb{E}_{x \sim \mathcal{D}}\!\left|\mathrm{A}_h(x)^{\top} \frac{\partial \mathcal{L}(x)}{\partial \mathrm{A}_h(x)}\right|,
\end{equation} \vspace{-1.1em}

where $\mathcal{D}$ denotes an input sample drawn from the data distribution $\mathcal{D}$, $\mathcal{L}(x)$ is the loss for sample $x$, and $\mathrm{A}_h(x)$ is the output of head $h$. The second equality follows from the chain rule and the observation that gating scales the head’s activation. Heads are then ranked by $\text{HIS}_h$ and pruned in ascending order of importance.

\noindent\textbf{Attention Entropy and Stability.}
\citet{zhai2023stabilizing} quantify the concentration of each attention head’s focus over input tokens via the entropy of its attention weight distribution \({\text{AE}}_h =(H\!\left(p^{(h)}\right) = -\sum_{i=1}^{n} p^{(h)}_{i}\,\log p^{(h)}_{i}\), where \(p^{(h)}_{i}\) is the normalized attention probability assigned by head \(h\) to the \(i\)-th input token subject to \(\sum_{i=1}^{n} p^{(h)}_{i}=1\).
Higher entropy indicates a diffuse focus over the sequence, whereas lower entropy corresponds to highly concentrated attention patterns. Their empirical findings reveal a strong correlation between persistently low entropy (i.e., entropy collapse) and instability during training, including oscillations in the loss landscape and even divergence across various model scales and tasks.

\bibliographystyle{plainnat}

\section{Motivation}
\label{moti}
\begin{figure*} [t]
    \vspace{0.2cm}
    \centering
    \includegraphics[width=2\columnwidth]{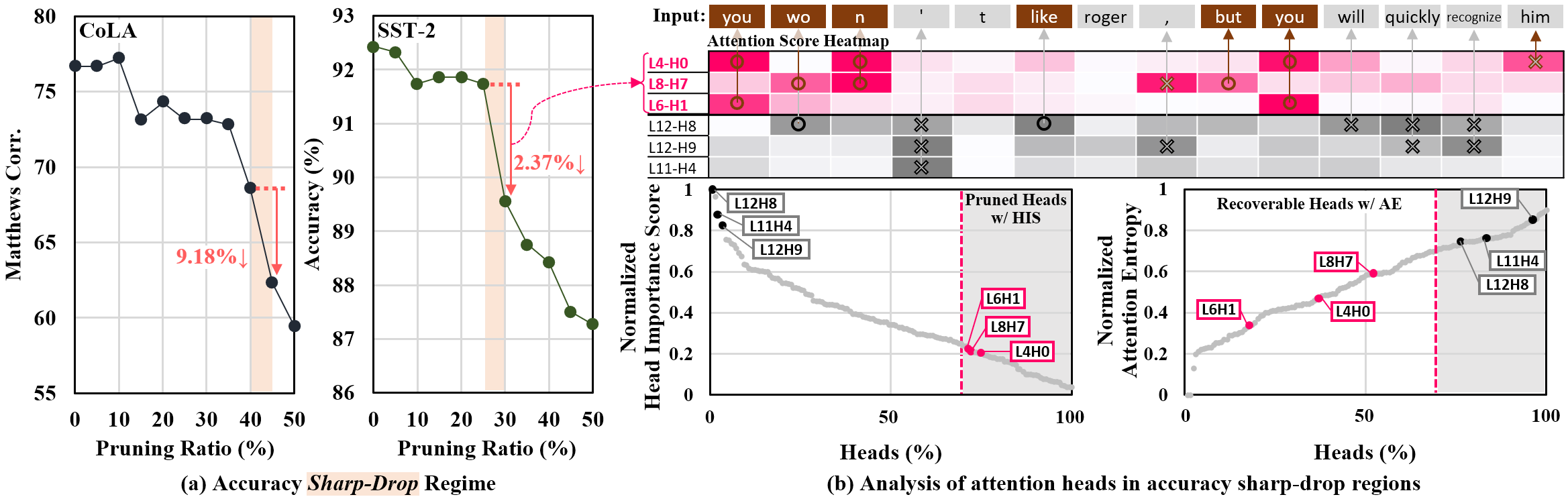}
    \caption{Analysis of accuracy degradation and head behaviors under HIS-based pruning. In our diagnostic study, we analyze the phenomena of pruning by HIS on BERT, focusing on detailed attention head behaviors during inference. 
    (a) Accuracy curves of HIS-based pruning on CoLA and SST-2. The bold color highlights the sharp-drop regime of HIS-based pruning.
    (b) Head-level analysis on SST-2 with a validation example misclassified by the HIS-pruned model. The attention score heatmap shows heads on each column, where pruned low-HIS heads are indicated with \textcolor{magenta}{colored layer–head labels} and unpruned high-HIS heads with \textcolor{gray}{gray table}. It then shows token-wise distributions, where tokens deemed important for classification (e.g., sentiment-discriminative tokens in SST-2) are marked with \textbf{O}, and non-critical tokens with \textbf{X}. The left plot shows the distribution of heads by normalized HIS, and the right plot shows the distribution by normalized AE, where pruned low-HIS heads are highlighted with \fcolorbox{magenta}{white!0}{\textcolor{magenta}{red boxes}} and unpruned high-HIS heads with \fcolorbox{gray}{white!0}{\textcolor{gray}{gray boxes}}. See Appendix~\ref{Experimental Setups} for experimental setup details.}
    \label{fig:fig_motivation}
    \vspace{-5mm}
\end{figure*}

Pruning Transformer models is most commonly driven by gradient-based criteria, such as HIS and variants used in recent pruning frameworks (e.g., LLM-Pruner) \citep{michel2019sixteen, ma2023llm}. While gradient-based methods are often effective at moderate sparsity, they exhibit sharp accuracy degradation once the pruning ratio exceeds a certain threshold, as shown in Figure~\ref{fig:fig_motivation} (a). Such \textit{sharp-drops} have been widely observed across various attention variants and tasks~\citep{ma2023llm,li2023towards,ghattas2025pruning}, underscoring the generality of this phenomenon.

We focus on this “\textit{sharp-drop}” regime and contrast two groups of heads. The first group consists of low-HIS heads that are pruned during the sharp-drop of accuracy. The second group consists of high-HIS heads that remain unpruned. The attention score heatmap in Figure~\ref{fig:fig_motivation} (b) reveals that some pruned heads (red table) assign high attention scores—i.e., the weights computed by the softmax over token-token similarity that indicate how strongly a token attends to another—to sentiment-discriminative tokens (tokens relevant for classification). Nonetheless, these heads are pruned due to their low HIS values and end up causing the sharp accuracy drop observed in Figure~\ref{fig:fig_motivation} (a). In contrast, some unpruned heads (gray table) often allocate strong attention to non-informative tokens. These heads, however, have high HIS values and thus remain unpruned, though they contribute little to overall model quality. These analysis results demonstrate that the gradient-based $\mathrm{HIS}$ is insufficient to capture the token-level attention score distributions (the detailed mathematical analysis is provided in Section~\ref{subsec:HIS}), thereby resulting in suboptimal pruning decisions for heads focusing on decisive tokens.


The \textit{sharp-drop} observed in pruning can be interpreted as a collapse of structural diversity in attention behaviors, caused by the elimination of heads that concentrate on decisive tokens. To capture and prevent such collapse of structural diversity in attention, we employ attention entropy, a measure widely used to prevent policy collapse in reinforcement learning \citep{bharadhwaj2020model, xiao2021entropy, wang2025harnessing}. As shown in the bottom-right plots of Figure~\ref{fig:fig_motivation} (b), incorporating AE helps retain low-entropy heads by recognizing their concentrated focus on decisive tokens, thereby preventing them from being pruned and mitigating the \textit{sharp-drop} in accuracy.

Building on our analysis, we posit that attention entropy captures structural signals that reflect unstable behavior during deployment, leading to the following hypothesis: 
\begin{center} \textls[-20]{\textit{Attention entropy serves as an indicator of inference-time stability, mitigating accuracy sharp-drops}}
\end{center}
In particular, low-entropy heads may correlate with increased sensitivity to input perturbations, leading to unstable predictions under distribution shifts. This perspective motivates our investigation of entropy as a proxy for robustness and consistency during inference.

\section{Proposed Method}
\label{HIES_proposed_method}
\begin{figure*} [t]
    \centering
    \includegraphics[width=1\linewidth]{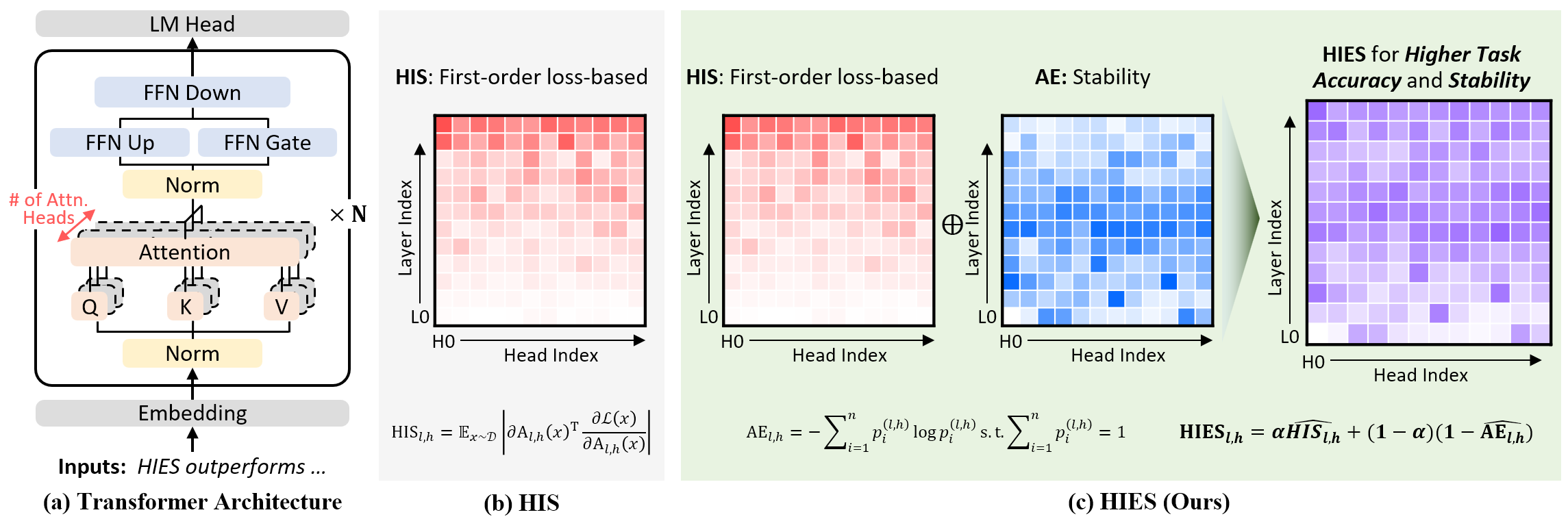}
    \caption{Design overview of Head Importance-Entropy Score (HIES). Darker cells correspond to values closer to 1, while lighter cells correspond to values closer to 0. Note the heatmap utilizes each metric across attention heads in BERT on the CoLA dataset.}
    \label{fig:fig_1}
\end{figure*}

Figure~\ref{fig:fig_1} provides an overview of our proposed \emph{Head Importance–Entropy Score (HIES)}. Figure~\ref{fig:fig_1} (a) outlines the Transformer architecture, where HIS computes the importance score of each of the $h$ attention heads within a Transformer block. Figure~\ref{fig:fig_1} (b) displays the heatmap of the first-order, loss-based HIS. Combining the normalized HIS and AE yields the HIES heatmap in Figure~\ref{fig:fig_1} (c), which integrates complementary signals provide a more stable assessment of attention heads across layers. This design captures the key intuition of HIES: HIS can be reinforced by AE, providing a more robust basis for pruning head selections. Section~\ref{4.1} formalizes HIES, and Section~\ref{4.2} develops a risk-decomposition analysis that clarifies the complementary roles of HIS and AE and motivates HIES as a robust criterion for head selection in pruning.


\subsection{Head Importance Entropy Score}
\label{4.1}
We define the \textbf{Head Importance–Entropy Score (HIES)} as a weighted combination:
\begingroup
\[
  \text{HIES}_h = \alpha \,\widehat{\text{HIS}}_h + (1-\alpha)(1-\,\widehat{\text{AE}}_h),
  \quad \alpha \in [0,1),
  \tag{1}\label{HIES}
\]
\endgroup

\noindent where \(\alpha\)\footnote{To determine the optimal combination of HIS and AE for each task, we adopt a task-specific tuning procedure, where the trade-off hyperparameter \(\alpha\) is systematically explored under each compression setting. Sensitivity to \(\alpha\) is analyzed in Appendix~\ref{sensitivity}.} is a tunable hyperparameter.

\noindent\textbf{Min-Max Normalization}
Directly comparing raw HIS and AE is inherently problematic, as the two metrics reside on different scales and encode distinct types of signals. To enable meaningful integration and ranking, we apply \emph{min–max normalization} to both metrics, rescaling their values to the interval \([0,1]\):
\(
  \widehat{\text{HIS}}_h = \frac{\text{HIS}_h - \min(\text{HIS})}{\max(\text{HIS}) - \min(\text{HIS})}, \quad
  \widehat{\text{AE}}_h  = \frac{\text{AE}_h  - \min(\text{AE})}{\max(\text{AE})  - \min(\text{AE})}.
\)
This distribution-agnostic normalization improves cross-criterion interpretability; lower normalized scores denote higher pruning priority. Prior studies show min–max scaling outperforms $z$-score standardization in stability and reproducibility across diverse tasks~\citep{de2023choice, lima2023large}.


\subsection{Theoretical Analysis}
\label{4.2}
We analyze pruning through a risk decomposition that combines a loss-increase term controlled by HIS, with a generalization-gap term upper-bounded in terms of AE via its token-wise deficit. We further show that the gradients of HIS and AE are orthogonal in expectation, indicating complementary axes: magnitude of contribution (HIS) and dispersion of attention (AE). This perspective motivates the composite importance measure HIES. By retaining heads with high HIES, we simultaneously minimize our theoretical bound and enhance pruning stability. Conceptually, this analysis formalizes importance-based selection into a principled framework and offers a rigorous rationale for HIES’s safety and effectiveness.

\subsubsection{Loss-Increase Control via Head Importance (HIS)}
\label{subsec:HIS}
\paragraph{Setup.} Let $n$ be the sequence length, $|\text{H}|$ the number of heads, $d_v$ the value dimension per head, and $d=|\text{H}|\,d_v$ the model width (i.e., hidden size).
For head $h$, let $A_h \in \mathbb{R}^{n \times d_v}$ denote the head output,
i.e., the value-projected attention representation
$A_h = \mathrm{softmax}\!\left(\frac{Q_h K_h^\top}{\sqrt{d_k}}\right) V_h,$ where $Q_h = X W_h^Q \in \mathbb{R}^{n \times d_k}$, 
$K_h = X W_h^K \in \mathbb{R}^{n \times d_k}$, and 
$V_h = X W_h^V \in \mathbb{R}^{n \times d_v}$ 
are the query, key, and value projections of the input 
$X \in \mathbb{R}^{n \times d_{\text{model}}}$, with parameter matrices 
$W_h^Q \in \mathbb{R}^{d_{\text{model}} \times d_k}$, 
$W_h^K \in \mathbb{R}^{d_{\text{model}} \times d_k}$, and 
$W_h^V \in \mathbb{R}^{d_{\text{model}} \times d_v}$.

We then define
$\mathbf y = \operatorname{Concat}(A_1, \ldots, A_{|\text{H}|}) \in \mathbb{R}^{n \times d}$
as the pre-projection representation, which is subsequently projected through
$W^O \in \mathbb{R}^{d \times d}$.
Head removal is modeled by mask variables $m_h\in\{0,1\}$:
$\delta A_h=-(1-m_h)A_h$ and $\delta\mathbf y=\operatorname{Concat}(\delta A_1,\ldots,\delta A_{|\text{H}|})$.
Formal definitions and implementation notes are deferred to Appendix~\ref{app:impl}.
\vspace{-3mm}
\paragraph{Head Importance Score (HIS).}
We define
\begin{equation}
\label{eq:his-def}
\begin{gathered}
\mathrm{HIS}_h
~:=~
\mathbb{E}_{x\sim\mathcal D}\!\left|\frac{\partial \mathcal{L}(x)}{\partial m_h}\right| \\
~=~
\mathbb{E}_{x\sim\mathcal D}\!\left|\,\big\langle \nabla_{A_h(x)}\mathcal L(x),\,A_h(x)\big\rangle_F\,\right| \\
~=~
\mathbb E_{\mathcal D}\!\Bigl[\,\left|\,\langle\nabla_{A_h}\mathcal L,\;A_h\rangle_F\,\right|\,\Bigr].
\end{gathered}
\end{equation}
\vspace{-3mm}

This quantity is a first-order activation–gradient correlation whose absolute value prevents cross-sample cancellation, yielding the additive upper bound $\sum_h \mathrm{HIS}_h$ on the loss (cf. Appendix~\ref{app:his-abs}).

\begin{lemma}[Loss-increase upper bound under head masking]
\label{thm:loss-bound}
Let $\beta_y:=\|\nabla_{\mathbf y}^2\mathcal L\|_2$. For any mask variables $\{m_h\}_{h=1}^{|\mathrm{H}|}$, 

\begin{equation}
\label{eq:main-global}
\begin{aligned}
\Delta\mathcal L \coloneqq
\mathbb E_{\mathcal D}\!\bigl[
\mathcal L(\mathbf y+\delta\mathbf y)
- \mathcal L(\mathbf y)
\bigr]
\;\le\; \\
\sum_{h=1}^{|\mathrm{H}|} (1-m_h)\,\mathrm{HIS}_h
+ \frac{\beta_y}{2}
\sum_{h=1}^{|\mathrm{H}|} (1-m_h)\,\|A_h\|_F^2 .
\end{aligned}
\end{equation}

Moreover, under \emph{binary (sigmoid) cross-entropy} we have $\|\nabla_{\mathbf z}^2\mathcal L\|_2\le \tfrac14$; with the linear projection $\mathbf z=\mathbf y W^O$, this yields $\beta_y\le \tfrac14\,\|W^O\|_2^2$, hence

\vspace{-6mm}
\begin{equation}
\label{eq:main-so}
\Delta\mathcal L
~\le~
\sum_{h=1}^{|\text{H}|} (1-m_h)\,\mathrm{HIS}_h
~+~
\frac{1}{8}\,\|W^O\|_2^2
\sum_{h=1}^{|\text{H}|} (1-m_h)\,\|A_h\|_F^2 .
\end{equation}

\endgroup

\noindent \emph{Remark.} For \emph{multiclass softmax} cross-entropy, $\|\nabla_{\mathbf z}^2\mathcal L\|_2\le\tfrac12$ (cf. Appendix~\ref{app:softmax}); thus the quadratic coefficient becomes $\tfrac{1}{4}$ instead of $\tfrac{1}{8}$.

\paragraph{Implication for pruning.}
Eq.~\eqref{eq:main-so} shows that, for a fixed pruning
fraction $\rho=\tfrac{1}{\mathrm{|H|}}\sum_h(1-m_h)$, selecting heads with the smallest
$\mathrm{HIS}_h$ minimizes the dominant first-order term, while the quadratic term is controlled
by $\|W^O\|_2$ (or blockwise norms) and token-averaged activations.
Under standard normalization, the quadratic contribution is typically dominated by
the first-order term (cf. Appendix~\ref{app:quad-negligible}), justifying the use of
$\mathrm{HIS}_h$ as a practical surrogate importance for head pruning.

However, since $A_h = \mathrm{Attn}_h V_h$, $\mathrm{HIS}_h$ in
Eq.~\eqref{eq:his-def} depends solely on $A_h$ and
$\nabla_{A_h}\mathcal L$, without explicitly incorporating the distribution $\mathrm{Attn}_h$. Consequently, two heads with very different attention patterns (e.g., sharply focusing on tokens versus spreading over
tokens) can yield similar $\mathrm{HIS}_h$ whenever the resulting $A_h$ is comparable. This is supported by our empirical analysis in Section~\ref{moti}.

\vspace{-3mm}

\subsubsection{Generalization Gap and Attention Entropy (AE)}
\label{subsec:gen-gap-ae}

\paragraph{Setup.}
Let $\mathcal{S}=\{(x_i,y_i)\}_{i=1}^{N}\!\sim\!\mathcal{D}$. We write ${\mathbb{E}}_{\mathcal S}$ and $\mathbb{E}_{\mathcal D}$ for empirical and population expectations. We assume the per-example loss $\ell$ is $L_\ell$-Lipschitz in its first argument, a standard assumption that yields stability-based generalization bounds~\citep{bousquet2002stability,shalev2014understanding,hardt2016train}.
\vspace{-4mm}

\paragraph{Notation (attention entropy deficit).}
For head $h$ and query token $t\in[n(x)]$, let $\boldsymbol{\alpha}^{(h)}_t(x)\in\Delta^{n(x)-1}$ denote the attention over keys and
$H(\mathbf p):=-\sum_j p_j\log p_j$. We define the token-averaged, length-normalized deficit (AD)
\vspace{-1mm}
\begin{equation}
\label{eq:ad-def}
\begin{gathered}
\mathrm{AD}_h(x)
:= \frac{1}{n(x)\,\log n(x)}
\sum_{t=1}^{n(x)}\Bigl(\log n(x)-H\!\bigl(\boldsymbol{\alpha}^{(h)}_t(x)\bigr)\Bigr) \\
\;=\; 1 - \mathrm{AE}_h(x) \in [0,1].
\end{gathered}
\end{equation}
\vspace{-9mm}

\paragraph{Main bound (loss--entropy link).}
Let $M:=\max_{h}\max_{j}\|V_h(j,:)\|_2$ and $C_{\mathrm{AE}}:=\sqrt{8}\,M\,\sqrt{|\text{H}|\rho\,\log n}$ for a representative effective length $n$.
For the pruned model $f_{\mathcal S,m}$, the expected generalization gap ($\mathcal{G}$) satisfies
\vspace{-2mm}
\begin{equation}
\label{eq:gap-main}
\begin{aligned}
\mathcal{G}
&:=\mathbb{E}_{\mathcal D}\!\bigl[\ell(f_{\mathcal S,m}(x),y)\bigr]
 \;-\; {\mathbb{E}}_{\mathcal S}\!\bigl[\ell(f_{\mathcal S,m}(x),y)\bigr] \\[6pt]
&\le\; 2L_\ell\,C_{\mathrm{AE}}\,
\sqrt{\sum_{h=1}^{|\text{H}|} (1-m_h)\;{\mathbb{E}}_{\mathcal S}\bigl[\overline{\mathrm{AD}}_h(x)\bigr]}
\;+\; \frac{B}{N}\,.
\end{aligned}
\vspace{-2mm}
\end{equation}
\vspace{-7mm}

\paragraph{Interpretation.} For a fixed pruning ratio $\rho$, pruning heads with smaller deficit (i.e., higher entropy) minimizes the bound’s increase; pruning low-entropy (high-deficit) heads worsens it. Proof details, operator-norm assumptions, and variable-length handling are deferred to Appendix~\ref{app:ae-appendix}.

\subsubsection{Risk Upper Bound and HIES Minimization}
\label{subsec:optimal}
\paragraph{Composite Risk Bound.}
Given the HIES defined above, the overall risk upper bound is
\vspace{-2mm}
\begin{equation}
\begingroup
  \mathcal{R}(m)
  \;:=\;
  \sum_{h=1}^{|\text{H}|} (1-m_h)\,\text{HIES}_h
  \label{R}
\endgroup
\vspace{-2.5mm}
\end{equation}

\paragraph{Pruning Objective (fixed budget).}
Let $k:=(1-\rho)|\text{H}|$ be the number of heads to retain.
We solve the cardinality-constrained selection problem

\vspace{-6mm}
\begin{equation}
\begingroup
  \min_{m\in\{0,1\}^{|\text{H}|}}
  \;\sum_{h=1}^{|\text{H}|} (1-m_h)\,\text{HIES}_h
  \quad
  \text{s.t.}\;
  \sum_{h=1}^{|\text{H}|} m_h
  \;=\;
  k.
  \label{knapsack}
\endgroup
\end{equation}

\begin{lemma}[Optimality]
\label{thm:optimal-hies}
Selecting the $k$ heads with the largest HIES values (equivalently, pruning the $|\text{H}|-k$ heads with the smallest HIES values) yields the globally optimal mask $m^{*}$ that minimizes~\eqref{R} subject to~\eqref{knapsack}.
\end{lemma}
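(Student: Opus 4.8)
The plan is to reduce the cardinality-constrained minimization in~\eqref{knapsack} to a linear selection problem and then close it with a standard greedy exchange argument. First I would rewrite the objective: since
$\sum_{h=1}^{|\text{H}|}(1-m_h)\,\text{HIES}_h = \sum_{h=1}^{|\text{H}|}\text{HIES}_h - \sum_{h=1}^{|\text{H}|} m_h\,\text{HIES}_h$
and the leading sum $C:=\sum_{h=1}^{|\text{H}|}\text{HIES}_h$ is a constant independent of $m$, minimizing $\mathcal{R}(m)$ subject to $\sum_h m_h = k$ is equivalent to \emph{maximizing} the linear functional $\sum_{h=1}^{|\text{H}|} m_h\,\text{HIES}_h$ over all binary masks that retain exactly $k$ heads. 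This is the maximization of a linear objective over the vertices of a uniform-matroid (``choose $k$'') polytope, so the optimality of the ``top-$k$'' choice is expected; I would nonetheless give a self-contained proof rather than invoke matroid theory.

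Second, I would establish optimality by an exchange (swap) argument. Let $m^{\star}$ denote any mask that retains the $k$ heads of largest HIES, with ties broken arbitrarily, and let $m$ be an arbitrary feasible mask. If the retained sets differ, there exist a head $i$ with $m_i=1$, $m^{\star}_i=0$ and a head $j$ with $m_j=0$, $m^{\star}_j=1$; by construction of $m^{\star}$ we have $\text{HIES}_j \ge \text{HIES}_i$. Forming $m'$ from $m$ by setting $m'_i=0$ and $m'_j=1$ preserves $\sum_h m'_h = k$ and changes the maximization objective by $\text{HIES}_j-\text{HIES}_i\ge 0$, i.e.\ it does not decrease it (equivalently, it does not increase $\mathcal{R}$), while strictly decreasing the size of the symmetric difference between the retained set and that of $m^{\star}$. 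Iterating this swap at most $k$ times transforms $m$ into $m^{\star}$ without ever increasing $\mathcal{R}$, hence $\mathcal{R}(m^{\star})\le\mathcal{R}(m)$; since $m$ was arbitrary, $m^{\star}$ is globally optimal.

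Finally I would add a remark on ties: if the $k$-th and $(k+1)$-th largest HIES values coincide, the minimizer need not be unique, but the argument above shows that every top-$k$ selection attains the same optimal value, so the statement ``select the $k$ heads with the largest HIES'' is a well-posed optimality claim. The only step that requires a bit of care is the feasibility bookkeeping inside the swap — checking that each exchange keeps the retained count equal to $k$ and monotonically shrinks the symmetric difference with $m^{\star}$, which is what guarantees termination — but there is no genuine obstacle here: the result is essentially the rearrangement inequality specialized to $\{0,1\}$ weights.
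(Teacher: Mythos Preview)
Your proposal is correct and follows essentially the same exchange/swap argument as the paper's proof: both show that any feasible mask that does not retain the top-$k$ HIES heads can be improved (or at least not worsened) by swapping a retained low-HIES head for a pruned high-HIES head. Your version is slightly more thorough---you first reduce to the equivalent maximization, iterate the swap to completion rather than stopping at a single contradiction, and explicitly handle ties---but the underlying idea is identical.
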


\subsubsection{Orthogonality and Complementarity}
\label{subsec:orth-comp}

\begin{lemma}[Orthogonality]
\label{thm:grad-orth}
Let
\[
\begin{gathered}
u_h \;:=\; \operatorname{sign}\!\bigl(\boldsymbol{\alpha}^{(h)\!\top} g_h\bigr)\,g_h,
v_h \;:=\; \mathbf{1}+\log\boldsymbol{\alpha}^{(h)}, \\
\tilde u_h := P\,u_h,\;\; \tilde v_h := P\,v_h,
\end{gathered}
\]

where $P := I - \tfrac{1}{n}\mathbf{1}\mathbf{1}^{\!\top}$ projects onto $\{w:\mathbf{1}^{\!\top}w=0\}$.
Assume $\mathrm{Cov}(\tilde u_h,\tilde v_h)=0$ (the cross-covariance matrix is zero).
If, in addition, either $\mathbb{E}_{\mathcal S}[\tilde u_h]=0$ \emph{or} more generally
$\langle \mathbb{E}_{\mathcal S}[\tilde u_h],\,\mathbb{E}_{\mathcal S}[\tilde v_h]\rangle=0$\footnote{We provide an empirical sanity check supporting the assumption, which is consistently validated on both $\text{BERT}_\text{base}$ and $\text{LLaMA-2}_\text{7B}$ in Appendix~\ref{app:orthogonality_analysis}.
}, then
\begin{equation}
\begingroup
  \mathbb{E}_{\mathcal{S}}\!\Bigl[
    \bigl\langle
      \widetilde{\nabla}_{\boldsymbol{\alpha}^{(h)}}\mathrm{HIS}_h,\,
      \widetilde{\nabla}_{\boldsymbol{\alpha}^{(h)}}\mathrm{AE}_h
    \bigr\rangle
  \Bigr] \;=\; 0,
\endgroup
\end{equation}
i.e., the two gradient directions are orthogonal in expectation\footnote{Detailed derivations and preliminaries are deferred to Appendix~\ref{app:orth}.}.
\end{lemma}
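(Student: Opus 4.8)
\textbf{Proof proposal for Lemma~\ref{thm:grad-orth} (Orthogonality).}

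The plan is to reduce the claimed expected-inner-product identity to a bias–variance (covariance) style decomposition on the simplex tangent space, then kill each surviving term using the stated hypotheses. First I would compute the two gradients explicitly. Since $\mathrm{HIS}_h = \mathbb{E}_{\mathcal D}\lvert\langle \nabla_{A_h}\mathcal L, A_h\rangle_F\rvert$ and, with $g_h$ the (value-side) gradient factor and $A_h = \boldsymbol{\alpha}^{(h)}V_h$ up to the per-token structure, differentiating the scalar $\langle\boldsymbol{\alpha}^{(h)\top}g_h,\cdot\rangle$ with respect to the attention weights $\boldsymbol{\alpha}^{(h)}$ and applying the chain rule through the absolute value produces exactly $\nabla_{\boldsymbol{\alpha}^{(h)}}\mathrm{HIS}_h = \operatorname{sign}(\boldsymbol{\alpha}^{(h)\top}g_h)\,g_h = u_h$ (the absolute value contributes the sign, which is why $u_h$ carries it). Similarly, $\mathrm{AE}_h$ is the (length-normalized) Shannon entropy of $\boldsymbol{\alpha}^{(h)}$, so $\nabla_{\boldsymbol{\alpha}^{(h)}}\mathrm{AE}_h = -(\mathbf{1}+\log\boldsymbol{\alpha}^{(h)}) = -v_h$ up to the normalization constant; the sign is irrelevant for the orthogonality claim. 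Because attention weights live on the simplex, only the component of each gradient tangent to $\{w:\mathbf{1}^\top w=0\}$ matters, which is why we pass to $\tilde u_h = Pu_h$ and $\tilde v_h = Pv_h$ with $P=I-\tfrac1n\mathbf{1}\mathbf{1}^\top$; note $P$ is symmetric and idempotent, so $\langle\tilde u_h,\tilde v_h\rangle = \langle Pu_h,v_h\rangle = \langle u_h,Pv_h\rangle$, which I will use freely.

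Next I would expand the expectation using the definition of covariance. Write $\bar u := \mathbb{E}_{\mathcal S}[\tilde u_h]$ and $\bar v := \mathbb{E}_{\mathcal S}[\tilde v_h]$. Then
\[
  \mathbb{E}_{\mathcal S}\bigl[\langle \tilde u_h,\tilde v_h\rangle\bigr]
  \;=\; \operatorname{Tr}\!\bigl(\mathbb{E}_{\mathcal S}[\tilde v_h\tilde u_h^\top]\bigr)
  \;=\; \operatorname{Tr}\!\bigl(\operatorname{Cov}(\tilde u_h,\tilde v_h)\bigr) + \langle \bar u,\bar v\rangle,
\]
where $\operatorname{Cov}(\tilde u_h,\tilde v_h) := \mathbb{E}_{\mathcal S}[(\tilde u_h-\bar u)(\tilde v_h-\bar v)^\top]$ is the cross-covariance matrix. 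By the first hypothesis this matrix is identically zero, so its trace vanishes. For the remaining mean term, the second hypothesis says either $\bar u = 0$ (in which case $\langle\bar u,\bar v\rangle=0$ trivially) or, more generally, $\langle\bar u,\bar v\rangle=0$ directly. In either case the right-hand side is $0$. Finally, since $\widetilde{\nabla}_{\boldsymbol{\alpha}^{(h)}}\mathrm{HIS}_h$ and $\widetilde{\nabla}_{\boldsymbol{\alpha}^{(h)}}\mathrm{AE}_h$ equal $\tilde u_h$ and $-\tilde v_h$ respectively (up to positive normalization constants that do not affect whether an inner product is zero), we conclude $\mathbb{E}_{\mathcal S}[\langle \widetilde{\nabla}_{\boldsymbol{\alpha}^{(h)}}\mathrm{HIS}_h, \widetilde{\nabla}_{\boldsymbol{\alpha}^{(h)}}\mathrm{AE}_h\rangle]=0$.

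The conceptual core is therefore a one-line trace identity plus the two hypotheses; the only genuinely delicate point — and the step I expect to require the most care — is the gradient computation of $\mathrm{HIS}_h$, specifically justifying that differentiating through the absolute value and through the softmax-normalized attention weights yields exactly the sign-weighted vector $u_h$ and not some extra Jacobian term. I would handle this by working per query token $t$: the map $\boldsymbol{\alpha}^{(h)}_t\mapsto\langle\boldsymbol{\alpha}^{(h)}_t, g_{h,t}\rangle$ is linear, its gradient is simply $g_{h,t}$, the outer absolute value contributes $\operatorname{sign}$ by the chain rule (on the set where the argument is nonzero, which holds a.s.), and averaging over tokens and the sign pattern is absorbed into the definition of $u_h$; the entropy gradient $-(\mathbf 1+\log\boldsymbol{\alpha}^{(h)}_t)$ is the standard one. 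A secondary subtlety worth a sentence is measurability/differentiability at the measure-zero set $\{\boldsymbol{\alpha}^{(h)\top}g_h=0\}$, which does not affect the expectation. Everything else — symmetry and idempotence of $P$, the $\operatorname{Tr}$–$\operatorname{Cov}$ bookkeeping — is routine.
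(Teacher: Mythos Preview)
Your proposal is correct and follows essentially the same route as the paper: identify $\widetilde{\nabla}\mathrm{HIS}_h=\tilde u_h$ and $\widetilde{\nabla}\mathrm{AE}_h=-\tilde v_h$, write the expected inner product as the trace of $\mathbb{E}_{\mathcal S}[\tilde u_h\tilde v_h^{\!\top}]$, decompose into cross-covariance plus outer product of means, and kill both pieces with the stated hypotheses. Your added care about the $\mathrm{sign}$ subgradient at $\boldsymbol{\alpha}^{(h)\top}g_h=0$ and the normalization constants mirrors the paper's technical remarks and is not a departure in method.
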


\paragraph{Complementarity.}
Because the gradients point along statistically orthogonal directions,
HIS captures the magnitude of loss sensitivity whereas
AE captures the dispersion of attention.
Thus, they serve complementary roles: HIS emphasizes the magnitude of contribution, while AE characterizes distributional concentration. Combined, they balance pruning minimally influential heads and preserving heads important for generalization, underpinning HIES’s effectiveness.

\section{Experimental Results}

\begin{table*}[t]
\vspace{-0mm}
\centering
\renewcommand{\arraystretch}{1.2}
\captionsetup{skip=2pt}
\caption{Experimental results with \(\text{BERT}_\text{base}\) on natural language understanding task. We report percentage improvements in \textcolor{blue}{blue}.}
\label{table_1}

\fontsize{7}{7}\selectfont 
\setlength{\tabcolsep}{2pt}

\begin{tabular}{cc|cccccccc|cr}
\toprule
Pruning & Method & SST-2 & CoLA & MRPC & QQP & STS-B & QNLI & MNLI & RTE & \multicolumn{2}{c}{Average} \\
Ratio &  & Accuracy & Matthews corr & F1 Score & Accuracy & Pearson corr & Accuracy & Accuracy & Accuracy & & \\
\midrule

0\% & \(\text{BERT}_\text{base}\)
& 92.43 & 76.69 & 91.35 & 91.27 & 94.02 & 91.54 & 84.57 & 72.56 & 86.80 & \\
\midrule

\multirow{9}{*}{10\%}
& Random & 92.09 & 75.25 & 89.90 & 91.00 & 93.88 & 89.87 & 82.62 & \underline{70.76} & 85.67 & {\fontsize{5pt}{6pt}\selectfont\textcolor{blue}{0.00\%}} \\
& AD & \underline{92.55} & 75.48 & 90.56 & 91.10 & 93.65 & \underline{90.87} & \underline{83.50} & 70.40 & 86.01 & {\fontsize{5pt}{6pt}\selectfont\textcolor{blue}{+0.40\%}} \\
& HIS & 91.74 & \textbf{77.21} & \underline{90.65} & \underline{91.23} & \textbf{94.04} & 90.63 & \textbf{84.04} & \textbf{71.84} & 86.42 & {\fontsize{5pt}{6pt}\selectfont\textcolor{blue}{+0.88\%}} \\
& L2 & 90.37 & 74.00 & 83.78 & 69.23 & 83.12 & 60.81 & 67.75 & 49.82 & 72.36 & {\fontsize{5pt}{6pt}\selectfont\textcolor{blue}{-15.54\%}} \\
& LLM-Pruner (Channel) & 91.97 & 76.05 & 79.19 & 83.53 & 93.06 & 87.39 & 80.46 & 67.51 & 82.40 & {\fontsize{5pt}{6pt}\selectfont\textcolor{blue}{-3.82\%}} \\
& LLM-Pruner (Block) & 91.06 & \underline{76.69} & 89.83 & 84.96 & 93.77 & 87.42 & 82.03 & 64.62 & 83.80 & {\fontsize{5pt}{6pt}\selectfont\textcolor{blue}{-2.19\%}} \\
& SliceGPT (w/o tune) & 51.38 & 49.86 & 81.46 & 63.18 & 58.92 & 53.91 & 36.84 & 49.46 & 55.63 & {\fontsize{5pt}{6pt}\selectfont\textcolor{blue}{-35.07\%}} \\
& SliceGPT (w/ tune) & 86.47 & 61.87 & 82.30 & 88.28 & 62.47 & 83.45 & 77.34 & 54.51 & 74.59 & {\fontsize{5pt}{6pt}\selectfont\textcolor{blue}{-12.94\%}} \\

\rowcolor[HTML]{F2F2F2}
& \textbf{HIES (ours)}
& \textbf{92.66} & 75.48 & \textbf{91.04} & \textbf{91.93} & \underline{94.00} & \textbf{91.03} & \textbf{84.04} & \textbf{71.84} & \textbf{86.50} & \textbf{{\fontsize{5pt}{6pt}\selectfont\textcolor{blue}{+0.97\%}}} \\
\midrule

\multirow{9}{*}{30\%}
& Random & \underline{90.29} & 69.02 & 85.27 & 84.00 & 92.90 & 79.80 & 75.15 & 60.29 & 79.59 & {\fontsize{5pt}{6pt}\selectfont\textcolor{blue}{0.00\%}} \\
& AD & 86.58 & 50.00 & 84.53 & 84.57 & 81.94 & 67.82 & 77.00 & 56.68 & 73.64 & {\fontsize{5pt}{6pt}\selectfont\textcolor{blue}{-7.47\%}} \\
& HIS & 89.56 & 73.17 & \textbf{89.37} & \underline{89.95} & 93.82 & \underline{89.04} & \underline{82.25} & \underline{68.59} & 84.47 & {\fontsize{5pt}{6pt}\selectfont\textcolor{blue}{+6.13\%}} \\
& L2 & 86.58 & 67.52 & 81.58 & 64.83 & 76.52 & 51.09 & 56.00 & 50.90 & 66.88 & {\fontsize{5pt}{6pt}\selectfont\textcolor{blue}{-15.97\%}} \\
& LLM-Pruner (Channel) & 88.53 & 70.36 & 86.89 & 81.23 & 92.50 & 67.38 & 66.67 & 64.26 & 77.23 & {\fontsize{5pt}{6pt}\selectfont\textcolor{blue}{-2.97\%}} \\
& LLM-Pruner (Block) & 88.99 & \underline{73.93} & 84.76 & 80.09 & \underline{93.40} & 82.68 & 78.33 & 66.79 & 81.12 & {\fontsize{5pt}{6pt}\selectfont\textcolor{blue}{+1.92\%}} \\
& SliceGPT (w/o tune) & 50.80 & 53.29 & 78.05 & 63.18 & 54.64 & 53.18 & 34.90 & 51.99 & 55.00 & {\fontsize{5pt}{6pt}\selectfont\textcolor{blue}{-30.90\%}} \\
& SliceGPT (w/ tune) & 83.49 & 60.14 & 81.80 & 85.80 & 60.89 & 67.36 & 75.50 & 54.87 & 71.23 & {\fontsize{5pt}{6pt}\selectfont\textcolor{blue}{-10.49\%}} \\

\rowcolor[HTML]{F2F2F2}
& \textbf{HIES (ours)}
& \textbf{91.86} & \textbf{74.97} & \underline{88.81} & \textbf{90.37} & \textbf{93.89} & \textbf{89.13} & \textbf{82.50} & \textbf{70.04} & \textbf{85.20} & \textbf{{\fontsize{5pt}{6pt}\selectfont\textcolor{blue}{+7.04\%}}} \\
\midrule

\multirow{9}{*}{50\%}
& Random & 78.74 & 61.02 & 72.53 & 66.25 & 91.40 & 67.53 & 67.32 & 53.79 & 69.82 & {\fontsize{5pt}{6pt}\selectfont\textcolor{blue}{0.00\%}} \\
& AD & 82.91 & 50.00 & 54.50 & 76.18 & 75.00 & 68.94 & 68.00 & 55.96 & 66.44 & {\fontsize{5pt}{6pt}\selectfont\textcolor{blue}{-4.84\%}} \\
& HIS & 87.27 & 59.48 & \underline{86.52} & \textbf{85.91} & 92.61 & \textbf{82.68} & \underline{78.67} & \underline{62.82} & 79.50 & {\fontsize{5pt}{6pt}\selectfont\textcolor{blue}{+13.84\%}} \\
& L2 & 82.80 & 60.98 & 85.30 & 64.83 & 69.76 & 50.54 & 44.42 & 47.29 & 63.24 & {\fontsize{5pt}{6pt}\selectfont\textcolor{blue}{-9.39\%}} \\
& LLM-Pruner (Channel) & 86.47 & 61.64 & 83.92 & 81.47 & 89.74 & 60.66 & 67.42 & \underline{62.82} & 74.27 & {\fontsize{5pt}{6pt}\selectfont\textcolor{blue}{+6.37\%}} \\
& LLM-Pruner (Block) & \underline{87.84} & \textbf{70.09} & 83.84 & 78.80 & \textbf{92.69} & \underline{72.60} & 73.60 & 61.73 & 77.65 & {\fontsize{5pt}{6pt}\selectfont\textcolor{blue}{+11.20\%}} \\
& SliceGPT (w/o tune) & 50.92 & 52.79 & 81.22 & 63.19 & 47.70 & 50.98 & 34.93 & 50.90 & 54.08 & {\fontsize{5pt}{6pt}\selectfont\textcolor{blue}{-22.54\%}} \\
& SliceGPT (w/ tune) & 83.49 & 57.45 & 81.37 & 82.16 & 55.05 & 65.79 & 71.70 & 51.62 & 68.58 & {\fontsize{5pt}{6pt}\selectfont\textcolor{blue}{-1.78\%}} \\

\rowcolor[HTML]{F2F2F2}
& \textbf{HIES (ours)}
& \textbf{90.71} & \underline{68.52} & \textbf{86.80} & \underline{85.73} & \underline{92.65} & \textbf{82.68} & \textbf{79.00} & \textbf{65.34} & \textbf{81.43} & \textbf{{\fontsize{5pt}{6pt}\selectfont\textcolor{blue}{+16.63\%}}} \\
\bottomrule

\end{tabular}

\vspace{-2mm}
\end{table*}

\subsection{Experimental Setup}
\label{exp_setting}
\noindent\textbf{Model.} We use publicly available $\text{BERT}_\text{base}$ checkpoints that have been fine-tuned and released by prior work \citep{devlin2019bert}, and $\text{LLaMA-2}_\text{7B}$ checkpoint from Hugging Face \citep{touvron2023llama}. To examine the generalizability to attention variants and tasks, we further employ $\text{ViT}_\text{Large}$ \citep{dosovitskiy2020image} and $\text{LLaVA-1.5}_\text{7B}$ \citep{liu2023visual}.

\noindent\textbf{Datasets.} We evaluate on various widely-adopted benchmarks: GLUE~\citep{wang2018glue}, HellaSwag~\citep{zellers2019hellaswag}, Winogrande~\citep{sakaguchi2021winogrande}, the AI2 Reasoning Challenge—ARC-e/ARC-c~\citep{clark2018think}, OBQA~\citep{mihaylov2018can}, ImageNet1k~\citep{deng2009imagenet}, CIFAR-100~\citep{krizhevsky2009learning}, Food-101~\citep{bossard2014food}, Fashion MNIST~\citep{xiao2017fashion}, MMLU~\citep{hendrycks2020measuring}, GSM8K~\citep{cobbe2021training}, VizWiz-VQA~\citep{gurari2018vizwiz}, and MM-Vet~\citep{yu2023mm}.

\noindent\textbf{Baselines.}
\begin{itemize}[leftmargin=10pt, itemsep=1pt, topsep=0pt, parsep=1pt, partopsep=1pt]
  \item \textbf{Random}: Prune attention heads uniformly at random.
  \item \textbf{L2-Norm}: Prune attention heads with smaller weight magnitudes under the $\ell_2$ norm. This criterion leverages parameter norms as a direct measure of structural salience.
  \item \textbf{HIS}
  \citep{michel2019sixteen}: Prune attention heads with the smallest head-importance first.
  \item \textbf{Attention Deficit (AD; \(1{-}\)Attention Entropy)}
  \citep{zhai2023stabilizing}: Prune attention heads with smaller attention entropy first, i.e., heads exhibiting more concentrated attention patterns.
  \item \textbf{LLM-Pruner (Channel-wise)} \citep{ma2023llm}: Prune attention-layer channels based on first-order Taylor expansion of the loss. Importance scores are computed per attention channel, and pruning proceeds while preserving the most critical channels.
  \item \textbf{LLM-Pruner (Block-wise)} \citep{ma2023llm}: Extend the channel-wise pruning strategy to whole attention blocks, guided by global importance ranking. Following the best-performing configuration reported in prior work, we retain the first three layers and the final layer, pruning the others. This variant also restricts pruning to attention layers.
  \item \textbf{SliceGPT}: Project activations onto principal components estimated from calibration data, removing directions corresponding to less important subspaces \citep{ashkboos2024slicegpt}. This preserves semantic subspaces while reducing redundancy.
\end{itemize}

\begin{table*}[t]
\vspace{-0mm}
\centering
\renewcommand{\arraystretch}{1.1}

\caption{Stability results with \(\text{BERT}_\text{base}\) on GLUE tasks. We report stability (\%) against the unpruned model. Percentage improvements (in \textcolor{blue}{blue}) are relative to HIS within each pruning ratio.}
\label{table_stability}

\fontsize{7}{7}\selectfont 
\setlength{\tabcolsep}{5pt}

\begin{tabular}{cc|cccccccc|cr}
\toprule
Pruning Ratio & Method & SST-2 & CoLA & MRPC & QQP & STS-B & QNLI & MNLI & RTE & \multicolumn{2}{c}{Average} \\
\midrule

\multirow{5}{*}{10\%}
& HIS & 97.71 & \textbf{96.55} & \textbf{94.36} & \underline{97.30} & \textbf{99.47} & \underline{95.79} & 95.66 & \underline{94.22} & 96.38 & {\fontsize{5pt}{6pt}\selectfont\textcolor{blue}{0.00\%}} \\
& L2 & 95.18 & 76.41 & 26.72 & 68.63 & 21.87 & 60.33 & 72.44 & 39.71 & 57.66 & {\fontsize{5pt}{6pt}\selectfont\textcolor{blue}{-40.17\%}} \\
& AD & \textbf{98.51} & \underline{96.36} & 93.63 & \textbf{97.80} & 89.07 & \textbf{97.25} & \textbf{96.15} & \textbf{94.95} & 95.47 & {\fontsize{5pt}{6pt}\selectfont\textcolor{blue}{-0.94\%}} \\
& LLM-Pruner & 97.02 & 85.81 & 75.25 & 85.40 & 39.53 & 90.39 & 84.98 & 86.28 & 80.58 & {\fontsize{5pt}{6pt}\selectfont\textcolor{blue}{-16.39\%}} \\

\rowcolor[HTML]{F2F2F2}
& \textbf{HIES (ours)}
& \underline{98.17} & \underline{96.36} & \underline{94.12} & \textbf{97.80} & \underline{96.67} & \textbf{97.25} & \underline{95.74} & 93.50 & \textbf{96.20} & \textbf{{\fontsize{5pt}{6pt}\selectfont\textcolor{blue}{-0.19\%}}} \\
\midrule

\multirow{5}{*}{30\%}
& HIS & \underline{94.38} & \underline{90.03} & \textbf{90.03} & \underline{94.77} & \underline{78.93} & \underline{92.95} & \textbf{91.56} & \underline{87.36} & 90.00 & {\fontsize{5pt}{6pt}\selectfont\textcolor{blue}{0.00\%}} \\
& L2 & 90.94 & 83.13 & 26.72 & 63.90 & 20.73 & 50.39 & 59.23 & 42.24 & 54.66 & {\fontsize{5pt}{6pt}\selectfont\textcolor{blue}{-39.27\%}} \\
& AD & 88.53 & 81.21 & 78.68 & 87.90 & 24.53 & 63.04 & \underline{79.07} & 69.68 & 71.58 & {\fontsize{5pt}{6pt}\selectfont\textcolor{blue}{-20.47\%}} \\
& LLM-Pruner & 92.66 & 63.09 & 84.80 & 82.37 & 62.13 & 68.70 & 68.29 & 70.04 & 74.01 & {\fontsize{5pt}{6pt}\selectfont\textcolor{blue}{-17.77\%}} \\

\rowcolor[HTML]{F2F2F2}
& \textbf{HIES (ours)}
& \textbf{97.13} & \textbf{93.38} & \underline{89.46} & \textbf{94.97} & \textbf{86.67} & \textbf{93.23} & \textbf{91.56} & \textbf{88.81} & \textbf{91.90} & \textbf{{\fontsize{5pt}{6pt}\selectfont\textcolor{blue}{+2.11\%}}} \\
\midrule

\multirow{5}{*}{50\%}
& HIS & \underline{90.02} & 40.84 & \underline{83.33} & \textbf{88.80} & \underline{69.13} & \textbf{85.54} & \textbf{81.47} & \underline{78.70} & 77.23 & {\fontsize{5pt}{6pt}\selectfont\textcolor{blue}{0.00\%}} \\
& L2 & 86.47 & \underline{82.26} & 26.72 & 63.90 & 20.40 & 49.81 & 45.58 & 34.30 & 51.18 & {\fontsize{5pt}{6pt}\selectfont\textcolor{blue}{-33.73\%}} \\
& AD & 52.18 & 81.21 & 51.72 & 72.33 & 21.87 & 73.49 & 68.49 & 62.82 & 60.51 & {\fontsize{5pt}{6pt}\selectfont\textcolor{blue}{-21.65\%}} \\
& LLM-Pruner & 89.68 & 41.32 & 78.19 & 83.27 & 44.13 & 61.58 & 67.52 & 75.81 & 67.69 & {\fontsize{5pt}{6pt}\selectfont\textcolor{blue}{-12.35\%}} \\

\rowcolor[HTML]{F2F2F2}
& \textbf{HIES (ours)}
& \textbf{95.07} & \textbf{83.13} & \textbf{84.56} & \underline{88.27} & \textbf{75.20} & \textbf{85.54} & \underline{78.29} & \textbf{81.23} & \textbf{83.91} & \textbf{{\fontsize{5pt}{6pt}\selectfont\textcolor{blue}{+8.65\%}}} \\
\bottomrule

\end{tabular}

\vspace{-3mm}
\end{table*}

\begin{figure} [t]
    \centering
    \includegraphics[width=\linewidth]{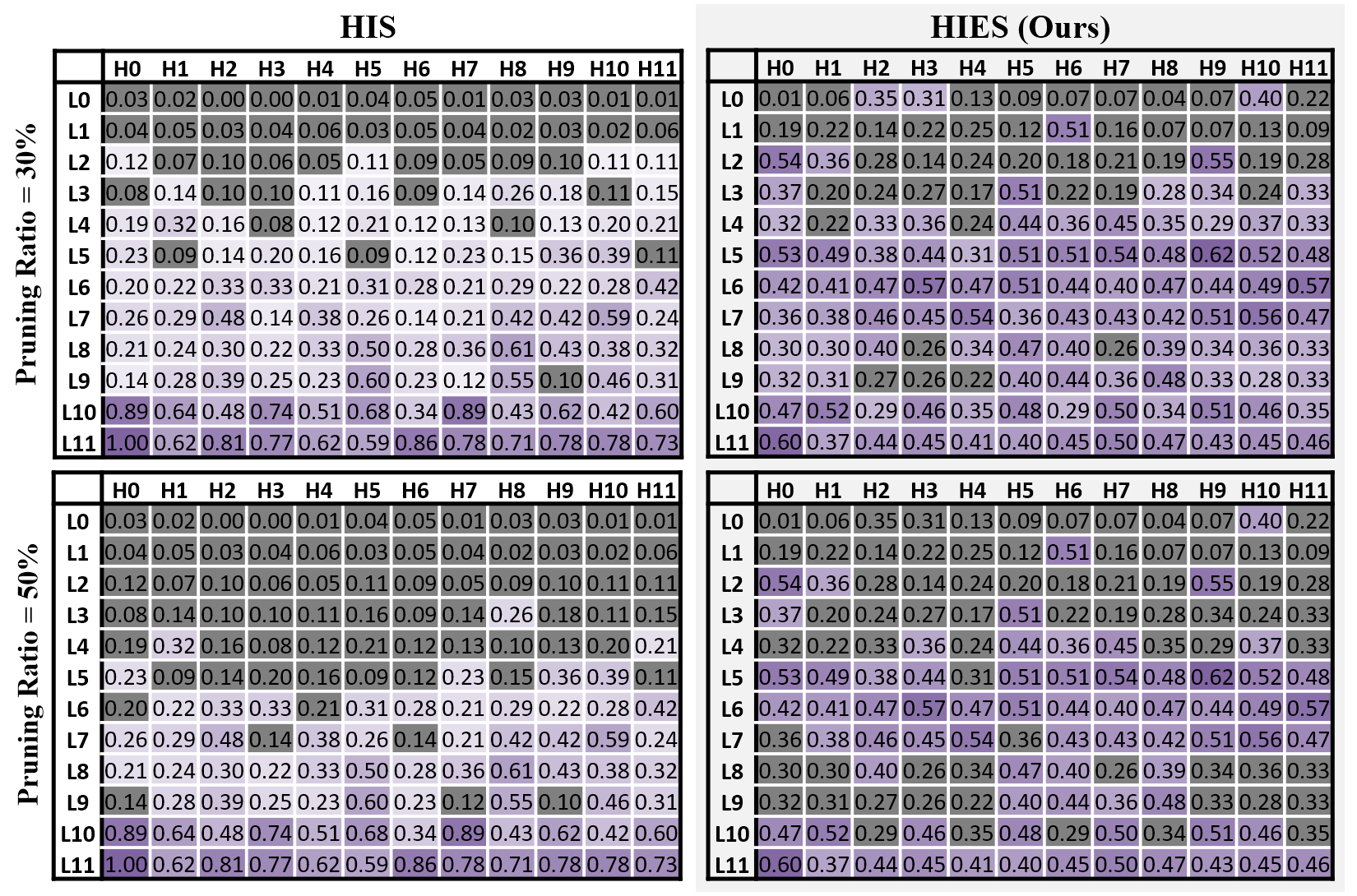}
    \caption{Head-level pruning patterns on the CoLA dataset across pruning ratios from 30\% to 50\% pruning ratios. Pruned heads are shaded in \textcolor{gray}{gray}. }
    \vspace{-16pt}
    \label{fig:fig_3}
\end{figure}

\subsection{Main Results}
\noindent We evaluate HIES  using two key metrics: model quality and stability\footnote{Experimental details are provided in Appendix~\ref{Experimental Setups}. We also conduct sensitivity analysis across multiple random seeds and observe consistent improvements for both $\text{BERT}_\text{base}$ and $\text{LLaMA-2}_\text{7B}$. Details are provided in Appendix~\ref{app:seed-sensitivity}. Note that we use a calibration size of 32 for computing HIES; additional analyses are
reported in Appendix~\ref{app:calibration}.}. As reported in Table~\ref{table_1}, HIES improves model quality by 8.21\% on average. Table~\ref{table_stability} further demonstrates a 3.3\% average stability gain over HIS. Notably, at a pruning ratio of 50\%, HIES achieves gains of up to 15.2\% in model quality and $2.04\times$ in stability compared to the best-performing baseline. These results corroborate our theoretical analysis, demonstrating that HIES preserves end-task performance while markedly enhancing robustness, both critical for reliable and efficient deployment.


\subsection{Extended Experimental Results}


\subsubsection{Head Removal Patterns (Heatmap)}



In our main results, HIES exhibits more stable performance gains than HIS at aggressive pruning ratios (e.g., $\geq$,30\%). At lower ratios (e.g., $\leq$,10\%), HIS and HIES perform comparably, while HIS achieves marginally higher accuracy in a few cases. We posit distinct prioritization. HIS, a gradient-based metric tends to retain redundant, low-risk heads at low sparsity, while HIES adds attention entropy to capture stability, thereby preserving specialized low-entropy heads that enhance robustness.

Pruning heatmap analysis provides empirical support for this distinction. As shown in  Figure~\ref{fig:fig_3}, HIS (left panels) tends to remove heads primarily from the lower layers, producing an approximately bottom-up pattern. This behavior is guided by a one-step gradient saliency, which estimates the importance of each attention head based on a single backward pass through the model--- this assigns higher importance to heads whose activations have a larger immediate effect on the loss. Meanwhile, HIES (right panels) yields a more dispersed selection spanning lower, middle, and upper layers --- other tasks also exhibit similar patterns (details are provided in Appendix~\ref{app:heatmap-analysis}). We attribute this to the entropy-aware term, which leverages structural properties of the attention distribution (concentration vs. dispersion) in addition to gradient sensitivity, thereby promoting diversity across layers in pruning decisions. Consequently, HIES exhibits more stable performance across pruning ratios, yielding flatter accuracy–sparsity curves than HIS. 

\subsubsection{Scalability to Larger Transformer Models}
\begin{figure*}[t]
    \centering
    \includegraphics[width=1\linewidth]{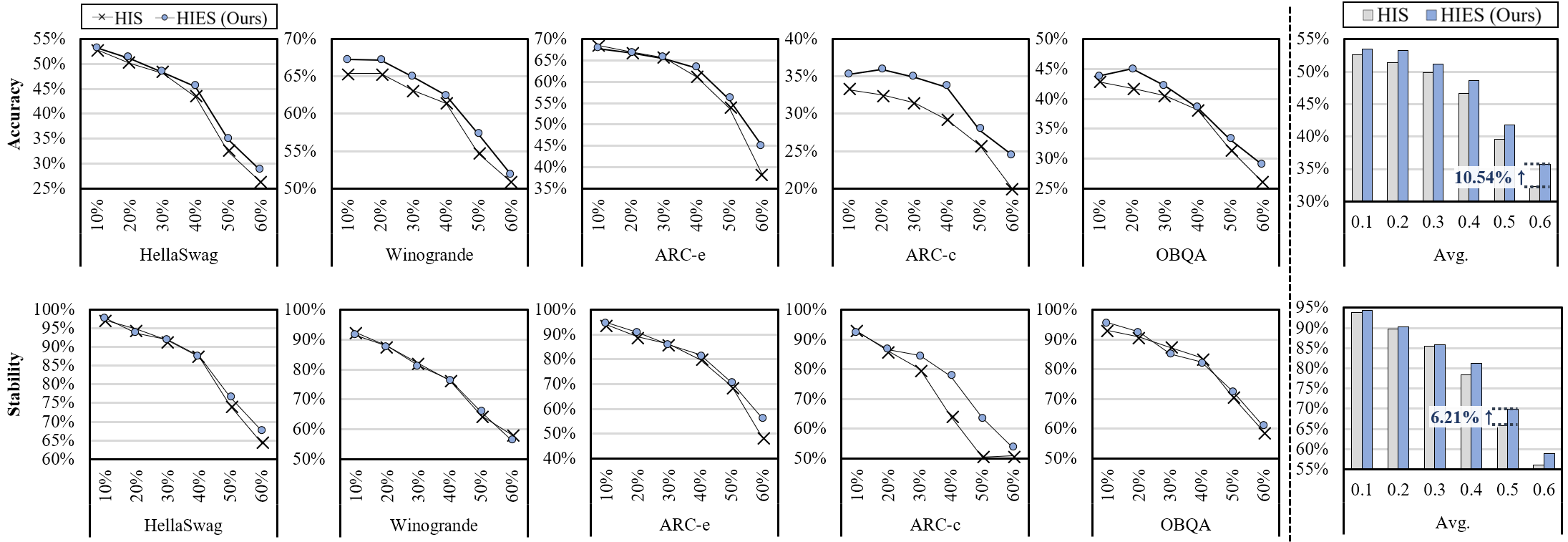}
    \caption{Comparison of HIES (Ours) and HIS on HellaSwag, Winogrande, ARC-e, ARC-c and OBQA across pruning ratios from 10\% to 60\% (x-axis). The top row reports task accuracy, while the bottom row reports stability. The rightmost panels summarize the mean over tasks.}
    \label{fig:fig_llama}
\end{figure*}

On $\text{LLaMA-2}_\text{7B}$, we evaluate pruning on HellaSwag, Winogrande, ARC-Easy (ARC-e), ARC-Challenge (ARC-c), and Open Book Question Answering (OBQA), comparing HIES with HIS across pruning ratios of 10–60\%. HIES improves accuracy by up to +10.54\% and stability by up to +6.21\% relative to HIS, averaged over tasks. This advantage persists uniformly across pruning ratios, indicating that the same pruning mechanism scales effectively to larger models with higher head counts. Notably, ARC-c is the most difficult benchmark in this suite, as reflected by its lowest base accuracy. Even under this challenging setting, HIES achieves consistent and often larger accuracy gains relative to HIS, underscoring its robustness not only on easier tasks but also on the hardest ones.




\subsubsection{Generalizability to Attention Variants and Tasks}
\begin{table*}[t]
\centering
\caption{Experimental results with $\text{ViT}_\text{Large}$ on image classification benchmarks and $\text{LLaVA-1.5}_\text{Large}$ on multi-modal tasks. Relative improvements of HIES (Ours) over HIS are shown in \textcolor{blue}{blue}.}
\renewcommand{\arraystretch}{1.05}
\fontsize{7.5}{7}\selectfont 
\setlength{\tabcolsep}{2pt}
\label{tab:multimodal}
\begin{tabular}{c|cc|cc|cc|c|cc|cc|cc}
    \toprule
     & \multicolumn{6}{c|}{Vision Transformer   Model ($\text{ViT}_\text{Large}$)}          
     & \multicolumn{1}{|c|}{}
     & \multicolumn{6}{c}{Vision-Langauge Model ($\text{LLaVA-1.5}_\text{7B}$)}   \\ \midrule{}
     & \multicolumn{4}{c|}{Image Classification}       
     & \multicolumn{2}{c|}{}
     & \multicolumn{1}{|c}{}
     & \multicolumn{2}{|c|}{\scalebox{0.85}{Visual Question Answering}} 
     & \multicolumn{2}{c|}{\scalebox{0.85}{Complex Multimodal}}
     & \multicolumn{1}{c}{}    
     & \multicolumn{1}{c}{}    \\ \midrule{}
     & \multicolumn{2}{c|}{ImageNet1k}                          
     & \multicolumn{2}{c|}{CIFAR-100}    
     & \multicolumn{2}{c|}{Avg.}
     & \multicolumn{1}{|c}{}
     & \multicolumn{2}{|c|}{VizWiz-VQA}                                
     & \multicolumn{2}{c|}{MM-Vet}                                
     & \multicolumn{2}{c}{Avg.}              \\
     & HIS                         
     & \multicolumn{1}{c|}{\cellcolor[HTML]{F2F2F2}\textbf{HIES (Ours)}} 
     & \multicolumn{1}{c}{HIS} 
     & \multicolumn{1}{c|}{\cellcolor[HTML]{F2F2F2}\textbf{HIES (Ours)}} 
     & \multicolumn{1}{c}{HIS} 
     & \multicolumn{1}{c|}{\cellcolor[HTML]{F2F2F2}\textbf{HIES (Ours)}}
     & \multicolumn{1}{|c}{}
     & \multicolumn{1}{|c}{HIS}                         
     & \multicolumn{1}{c|}{\cellcolor[HTML]{F2F2F2}\textbf{HIES (Ours)}} 
     & \multicolumn{1}{c}{HIS} 
     & \multicolumn{1}{c|}{\cellcolor[HTML]{F2F2F2}\textbf{HIES (Ours)}} 
     & \multicolumn{1}{c}{HIS} 
     & \multicolumn{1}{c}{\cellcolor[HTML]{F2F2F2}\textbf{HIES (Ours)}} \\ \midrule
    10\% 
    & \multicolumn{1}{c}{\textbf{86.40\%}} 
    & \cellcolor[HTML]{F2F2F2}84.40\%                          
    & 91.40\%            
    & \cellcolor[HTML]{F2F2F2}\textbf{92.40\%}                                
    & \textbf{88.90\%  }                      
    & \cellcolor[HTML]{F2F2F2}88.40\% {\fontsize{5pt}{6pt}\selectfont\textcolor{blue}{-0.56\%}}                          
    & \multicolumn{1}{|c|}{10\%}      
    & \multicolumn{1}{c}{\textbf{48.00\%}} 
    & \cellcolor[HTML]{F2F2F2}47.33\%                         
    & \textbf{41.20\% }                
    & \cellcolor[HTML]{F2F2F2}39.40\%                         
    & \textbf{44.60\%}                 
    & \cellcolor[HTML]{F2F2F2}43.37\% {\fontsize{5pt}{6pt}\selectfont\textcolor{blue}{-2.84\%}}     \\
    20\% 
    & \multicolumn{1}{c}{44.80\%} 
    & \cellcolor[HTML]{F2F2F2}\textbf{80.20\%}                      
    & 65.60\%                        
    & \cellcolor[HTML]{F2F2F2}\textbf{85.20\%}                                  
    & 55.20\%                        
    & \cellcolor[HTML]{F2F2F2}\textbf{82.70\%} {\fontsize{5pt}{6pt}\selectfont\textcolor{blue}{49.82\%}}                          
    & \multicolumn{1}{|c|}{30\%}       
    & \multicolumn{1}{c}{44.00\%} 
    & \cellcolor[HTML]{F2F2F2}\textbf{47.00\%}                         
    & 29.80\%                 
    & \cellcolor[HTML]{F2F2F2}\textbf{34.00\%}                         
    & 36.90\%                 
    & \cellcolor[HTML]{F2F2F2}\textbf{40.50\%} {\fontsize{5pt}{6pt}\selectfont\textcolor{blue}{8.89\%}}       \\
    30\% 
    & \multicolumn{1}{c}{6.20\%} 
    & \cellcolor[HTML]{F2F2F2}\textbf{55.40\%}                         
    & 19.40\%                         
    & \cellcolor[HTML]{F2F2F2}\textbf{40.00\%}                                
    & 12.80\%                        
    & \cellcolor[HTML]{F2F2F2}\textbf{47.70\%} {\fontsize{5pt}{6pt}\selectfont\textcolor{blue}{256.66\%}}                          & \multicolumn{1}{|c|}{50\%}      
    & \multicolumn{1}{c}{32.67\%} 
    & \cellcolor[HTML]{F2F2F2}\textbf{39.67\%}                         
    & 24.60\%                 
    & \cellcolor[HTML]{F2F2F2}\textbf{25.20\%}                         
    & 28.64\%                 
    & \cellcolor[HTML]{F2F2F2}\textbf{32.43\%} {\fontsize{5pt}{6pt}\selectfont\textcolor{blue}{11.71\%}}   \\ \bottomrule                      
\end{tabular}
\vspace{-3mm}
\end{table*}

Table~\ref{tab:multimodal} evaluates the generalizability of HIES across attention variants and task domains, spanning vision classification with $\text{ViT}_\text{Large}$ and visual-language reasoning with $\text{LLaVA-1.5}_\text{7B}$. 

First, across different attention-based variants, including ViT and LLaVA, HIES consistently shifts the region of sharp accuracy drop to more aggressive sparsity levels compared to HIS. In particular, at high pruning ratios where HIS accuracy collapses, HIES achieves 49.82\% higher accuracy on $\text{ViT}_\text{Large}$ at 20\% sparsity, compared to HIS (with 55.20\% of accuracy). This indicates that HIES effectively captures structural importance regardless of the specific model configuration, enabling stable and reliable pruning.

Second, the advantages of HIES are further extended on complex multi-modal tasks. HIES shows improvement of 11.71\% at 50\% sparsity compared to HIS on VizWiz-VQA and MM-Vet. HIS frequently suffers from sharp accuracy drops on these tasks, as it relies solely on gradient-based head importance and fails to account for the structural diversity of attention heads. As a result, pruning based on HIS alone may remove heads that are essential for preserving cross-modal alignment and semantic grounding. In contrast, HIES leverages AE to capture these structural signals, resulting in more stable performance across challenging vision-language benchmarks.

Third, we evaluate HIES on large-scale reasoning benchmarks, including GSM8K and MMLU, to assess its robustness beyond classification-style tasks. The results show consistent gains under reasoning-intensive settings and are reported in Appendix~\ref{large_scale_reasoning}. 
We also report downstream evaluations on CIFAR-100, Food-101, and Fashion-MNIST, demonstrating that the benefits of HIES extend to diverse vision benchmarks. Detailed results are provided in Appendix~\ref{app:downstream}.


Overall, these results confirm that combining HIS with AE produces pruning signals that generalize across architectures and modalities, preserving accuracy and stability. This underscores the potential of HIES as a broadly applicable criterion for efficient and reliable pruning of Transformer models.

\section{Conclusion}
In this paper, we present HIES, a novel pruning criterion that jointly leverages gradient-based head importance and attention entropy to better characterize per-head contributions. By combining complementary structural and behavioral signals, HIES outperforms HIS and other baselines, delivering both higher accuracy and greater inference-time stability. Beyond empirical gains, our analysis highlights the critical role of entropy. We believe HIES offers a principled direction for stable and efficient pruning of Transformer-based models, with potential to extend toward broader structured sparsity and large-scale model deployment.

\newpage
\section*{Impact Statements}
This paper presents work whose goal is to advance the field of machine learning. There are many potential societal consequences of our work, none of which we feel must be specifically highlighted here.

\bibliography{icml_reference.bib}

@article{chen2025cost,
  title={Cost-Optimal Grouped-Query Attention for Long-Context Modeling},
  author={Chen, Yingfa and Wu, Yutong and Song, Chenyang and Thai, Zhen Leng and Shen, Xingyu and Han, Xu and Liu, Zhiyuan and Sun, Maosong},
  journal={arXiv preprint arXiv:2503.09579},
  year={2025}
}

@article{vaswani2017attention,
  title={Attention is all you need},
  author={Vaswani, Ashish and Shazeer, Noam and Parmar, Niki and Uszkoreit, Jakob and Jones, Llion and Gomez, Aidan N and Kaiser, {\L}ukasz and Polosukhin, Illia},
  journal={Advances in neural information processing systems},
  volume={30},
  year={2017}
}

@article{michel2019sixteen,
  title={Are sixteen heads really better than one?},
  author={Michel, Paul and Levy, Omer and Neubig, Graham},
  journal={Advances in neural information processing systems},
  volume={32},
  year={2019}
}

@inproceedings{zhai2023stabilizing,
  title={Stabilizing transformer training by preventing attention entropy collapse},
  author={Zhai, Shuangfei and Likhomanenko, Tatiana and Littwin, Etai and Busbridge, Dan and Ramapuram, Jason and Zhang, Yizhe and Gu, Jiatao and Susskind, Joshua M},
  booktitle={International Conference on Machine Learning},
  pages={40770--40803},
  year={2023},
  organization={PMLR}
}

@article{voita2019analyzing,
  title={Analyzing multi-head self-attention: Specialized heads do the heavy lifting, the rest can be pruned},
  author={Voita, Elena and Talbot, David and Moiseev, Fedor and Sennrich, Rico and Titov, Ivan},
  journal={arXiv preprint arXiv:1905.09418},
  year={2019}
}

@article{bousquet2002stability,
  title={Stability and generalization},
  author={Bousquet, Olivier and Elisseeff, Andr{\'e}},
  journal={Journal of machine learning research},
  volume={2},
  number={Mar},
  pages={499--526},
  year={2002}
}

@article{ma2023llm,
  title={Llm-pruner: On the structural pruning of large language models},
  author={Ma, Xinyin and Fang, Gongfan and Wang, Xinchao},
  journal={Advances in neural information processing systems},
  volume={36},
  pages={21702--21720},
  year={2023}
}

@article{ashkboos2024slicegpt,
  title={Slicegpt: Compress large language models by deleting rows and columns},
  author={Ashkboos, Saleh and Croci, Maximilian L and Nascimento, Marcelo Gennari do and Hoefler, Torsten and Hensman, James},
  journal={arXiv preprint arXiv:2401.15024},
  year={2024}
}

@article{yang2024laco,
  title={Laco: Large language model pruning via layer collapse},
  author={Yang, Yifei and Cao, Zouying and Zhao, Hai},
  journal={arXiv preprint arXiv:2402.11187},
  year={2024}
}

@article{lima2023large,
  title={A large comparison of normalization methods on time series},
  author={Lima, Felipe Tomazelli and Souza, Vinicius MA},
  journal={Big Data Research},
  volume={34},
  pages={100407},
  year={2023},
  publisher={Elsevier}
}

@inproceedings{devlin2019bert,
  title={Bert: Pre-training of deep bidirectional transformers for language understanding},
  author={Devlin, Jacob and Chang, Ming-Wei and Lee, Kenton and Toutanova, Kristina},
  booktitle={Proceedings of the 2019 conference of the North American chapter of the association for computational linguistics: human language technologies, volume 1 (long and short papers)},
  pages={4171--4186},
  year={2019}
}

@article{touvron2023llama,
  title={Llama 2: Open foundation and fine-tuned chat models},
  author={Touvron, Hugo and Martin, Louis and Stone, Kevin and Albert, Peter and Almahairi, Amjad and Babaei, Yasmine and Bashlykov, Nikolay and Batra, Soumya and Bhargava, Prajjwal and Bhosale, Shruti and others},
  journal={arXiv preprint arXiv:2307.09288},
  year={2023}
}

@inproceedings{wang2018glue,
  title={GLUE: A multi-task benchmark and analysis platform for natural language understanding},
  author={Wang, Alex and Singh, Amanpreet and Michael, Julian and Hill, Felix and Levy, Omer and Bowman, Samuel},
  booktitle={Proceedings of the 2018 EMNLP workshop BlackboxNLP: Analyzing and interpreting neural networks for NLP},
  pages={353--355},
  year={2018}
}

@article{zellers2019hellaswag,
  title={Hellaswag: Can a machine really finish your sentence?},
  author={Zellers, Rowan and Holtzman, Ari and Bisk, Yonatan and Farhadi, Ali and Choi, Yejin},
  journal={arXiv preprint arXiv:1905.07830},
  year={2019}
}

@article{kaplan2020scaling,
  title={Scaling laws for neural language models},
  author={Kaplan, Jared and McCandlish, Sam and Henighan, Tom and Brown, Tom B and Chess, Benjamin and Child, Rewon and Gray, Scott and Radford, Alec and Wu, Jeffrey and Amodei, Dario},
  journal={arXiv preprint arXiv:2001.08361},
  year={2020}
}

@article{muralidharan2024compact,
  title={Compact language models via pruning and knowledge distillation},
  author={Muralidharan, Saurav and Turuvekere Sreenivas, Sharath and Joshi, Raviraj and Chochowski, Marcin and Patwary, Mostofa and Shoeybi, Mohammad and Catanzaro, Bryan and Kautz, Jan and Molchanov, Pavlo},
  journal={Advances in Neural Information Processing Systems},
  volume={37},
  pages={41076--41102},
  year={2024}
}

@book{shalev2014understanding,
  title={Understanding machine learning: From theory to algorithms},
  author={Shalev-Shwartz, Shai and Ben-David, Shai},
  year={2014},
  publisher={Cambridge university press}
}

@inproceedings{hardt2016train,
  title={Train faster, generalize better: Stability of stochastic gradient descent},
  author={Hardt, Moritz and Recht, Ben and Singer, Yoram},
  booktitle={International conference on machine learning},
  pages={1225--1234},
  year={2016},
  organization={PMLR}
}

@article{bair2023adaptive,
  title={Adaptive sharpness-aware pruning for robust sparse networks},
  author={Bair, Anna and Yin, Hongxu and Shen, Maying and Molchanov, Pavlo and Alvarez, Jose},
  journal={arXiv preprint arXiv:2306.14306},
  year={2023}
}

@inproceedings{blanchet2024stability,
  title={Stability evaluation through distributional perturbation analysis},
  author={Blanchet, Jose and Cui, Peng and Li, Jiajin and Liu, Jiashuo},
  booktitle={Forty-first International Conference on Machine Learning},
  year={2024}
}

@article{clark2018think,
  title={Think you have solved question answering? try arc, the ai2 reasoning challenge},
  author={Clark, Peter and Cowhey, Isaac and Etzioni, Oren and Khot, Tushar and Sabharwal, Ashish and Schoenick, Carissa and Tafjord, Oyvind},
  journal={arXiv preprint arXiv:1803.05457},
  year={2018}
}

@inproceedings{gordon2020compressing,
  title={Compressing bert: Studying the effects of weight pruning on transfer learning},
  author={Gordon, Mitchell and Duh, Kevin and Andrews, Nicholas},
  booktitle={Proceedings of the 5th Workshop on Representation Learning for NLP},
  pages={143--155},
  year={2020}
}

@article{liu2019roberta,
  title={Roberta: A robustly optimized bert pretraining approach},
  author={Liu, Yinhan and Ott, Myle and Goyal, Naman and Du, Jingfei and Joshi, Mandar and Chen, Danqi and Levy, Omer and Lewis, Mike and Zettlemoyer, Luke and Stoyanov, Veselin},
  journal={arXiv preprint arXiv:1907.11692},
  year={2019}
}

@inproceedings{lewis2020bart,
  title={BART: Denoising sequence-to-sequence pre-training for natural language generation, translation, and comprehension},
  author={Lewis, Mike and Liu, Yinhan and Goyal, Naman and Ghazvininejad, Marjan and Mohamed, Abdelrahman and Levy, Omer and Stoyanov, Veselin and Zettlemoyer, Luke},
  booktitle={Proceedings of the 58th annual meeting of the association for computational linguistics},
  pages={7871--7880},
  year={2020}
}

@inproceedings{molchanov2017variational,
  title={Variational dropout sparsifies deep neural networks},
  author={Molchanov, Dmitry and Ashukha, Arsenii and Vetrov, Dmitry},
  booktitle={International conference on machine learning},
  pages={2498--2507},
  year={2017},
  organization={PMLR}
}

@article{han2015learning,
  title={Learning both weights and connections for efficient neural network},
  author={Han, Song and Pool, Jeff and Tran, John and Dally, William},
  journal={Advances in neural information processing systems},
  volume={28},
  year={2015}
}

@article{sun2019patient,
  title={Patient knowledge distillation for bert model compression},
  author={Sun, Siqi and Cheng, Yu and Gan, Zhe and Liu, Jingjing},
  journal={arXiv preprint arXiv:1908.09355},
  year={2019}
}

@article{fan2019reducing,
  title={Reducing transformer depth on demand with structured dropout},
  author={Fan, Angela and Grave, Edouard and Joulin, Armand},
  journal={arXiv preprint arXiv:1909.11556},
  year={2019}
}

@article{lagunas2021block,
  title={Block pruning for faster transformers},
  author={Lagunas, Fran{\c{c}}ois and Charlaix, Ella and Sanh, Victor and Rush, Alexander M},
  journal={arXiv preprint arXiv:2109.04838},
  year={2021}
}

@article{hou2020dynabert,
  title={Dynabert: Dynamic bert with adaptive width and depth},
  author={Hou, Lu and Huang, Zhiqi and Shang, Lifeng and Jiang, Xin and Chen, Xiao and Liu, Qun},
  journal={Advances in Neural Information Processing Systems},
  volume={33},
  pages={9782--9793},
  year={2020}
}

@article{xia2022structured,
  title={Structured pruning learns compact and accurate models},
  author={Xia, Mengzhou and Zhong, Zexuan and Chen, Danqi},
  journal={arXiv preprint arXiv:2204.00408},
  year={2022}
}

@article{saha2024compressing,
  title={Compressing large language models using low rank and low precision decomposition},
  author={Saha, Rajarshi and Sagan, Naomi and Srivastava, Varun and Goldsmith, Andrea and Pilanci, Mert},
  journal={Advances in Neural Information Processing Systems},
  volume={37},
  pages={88981--89018},
  year={2024}
}

@inproceedings{li2023losparse,
  title={Losparse: Structured compression of large language models based on low-rank and sparse approximation},
  author={Li, Yixiao and Yu, Yifan and Zhang, Qingru and Liang, Chen and He, Pengcheng and Chen, Weizhu and Zhao, Tuo},
  booktitle={International Conference on Machine Learning},
  pages={20336--20350},
  year={2023},
  organization={PMLR}
}

@article{wong2025a3,
  title={A3: an analytical low-rank approximation framework for attention},
  author={Wong, Jeffrey TH and Zhang, Cheng and Cao, Xinye and Gimenes, Pedro and Constantinides, George A and Luk, Wayne and Zhao, Yiren},
  journal={arXiv preprint arXiv:2505.12942},
  year={2025}
}

@inproceedings{chen2025efficientqat,
  title={Efficientqat: Efficient quantization-aware training for large language models},
  author={Chen, Mengzhao and Shao, Wenqi and Xu, Peng and Wang, Jiahao and Gao, Peng and Zhang, Kaipeng and Luo, Ping},
  booktitle={Proceedings of the 63rd Annual Meeting of the Association for Computational Linguistics (Volume 1: Long Papers)},
  pages={10081--10100},
  year={2025}
}

@inproceedings{liu2021ebert,
  title={EBERT: Efficient BERT inference with dynamic structured pruning},
  author={Liu, Zejian and Li, Fanrong and Li, Gang and Cheng, Jian},
  booktitle={Findings of the Association for Computational Linguistics: ACL-IJCNLP 2021},
  pages={4814--4823},
  year={2021}
}

@article{kurtic2022optimal,
  title={The optimal bert surgeon: Scalable and accurate second-order pruning for large language models},
  author={Kurtic, Eldar and Campos, Daniel and Nguyen, Tuan and Frantar, Elias and Kurtz, Mark and Fineran, Benjamin and Goin, Michael and Alistarh, Dan},
  journal={arXiv preprint arXiv:2203.07259},
  year={2022}
}

@article{xu2021rethinking,
  title={Rethinking network pruning--under the pre-train and fine-tune paradigm},
  author={Xu, Dongkuan and Yen, Ian EH and Zhao, Jinxi and Xiao, Zhibin},
  journal={arXiv preprint arXiv:2104.08682},
  year={2021}
}

@inproceedings{sun2020contrastive,
  title={Contrastive distillation on intermediate representations for language model compression},
  author={Sun, Siqi and Gan, Zhe and Fang, Yuwei and Cheng, Yu and Wang, Shuohang and Liu, Jingjing},
  booktitle={Proceedings of the 2020 Conference on Empirical Methods in Natural Language Processing (EMNLP)},
  pages={498--508},
  year={2020}
}

@inproceedings{pan2021meta,
  title={Meta-KD: A meta knowledge distillation framework for language model compression across domains},
  author={Pan, Haojie and Wang, Chengyu and Qiu, Minghui and Zhang, Yichang and Li, Yaliang and Huang, Jun},
  booktitle={Proceedings of the 59th Annual Meeting of the Association for Computational Linguistics and the 11th International Joint Conference on Natural Language Processing (Volume 1: Long Papers)},
  pages={3026--3036},
  year={2021}
}

@inproceedings{bai2021binarybert,
  title={Binarybert: Pushing the limit of bert quantization},
  author={Bai, Haoli and Zhang, Wei and Hou, Lu and Shang, Lifeng and Jin, Jin and Jiang, Xin and Liu, Qun and Lyu, Michael and King, Irwin},
  booktitle={Proceedings of the 59th Annual Meeting of the Association for Computational Linguistics and the 11th International Joint Conference on Natural Language Processing (Volume 1: Long Papers)},
  pages={4334--4348},
  year={2021}
}

@article{yao2022zeroquant,
  title={Zeroquant: Efficient and affordable post-training quantization for large-scale transformers},
  author={Yao, Zhewei and Yazdani Aminabadi, Reza and Zhang, Minjia and Wu, Xiaoxia and Li, Conglong and He, Yuxiong},
  journal={Advances in neural information processing systems},
  volume={35},
  pages={27168--27183},
  year={2022}
}

@inproceedings{zafrir2019q8bert,
  title={Q8bert: Quantized 8bit bert},
  author={Zafrir, Ofir and Boudoukh, Guy and Izsak, Peter and Wasserblat, Moshe},
  booktitle={2019 Fifth Workshop on Energy Efficient Machine Learning and Cognitive Computing-NeurIPS Edition (EMC2-NIPS)},
  pages={36--39},
  year={2019},
  organization={IEEE}
}

@article{wang2025harnessing,
  title={Harnessing uncertainty: Entropy-modulated policy gradients for long-horizon llm agents},
  author={Wang, Jiawei and Liu, Jiacai and Fu, Yuqian and Li, Yingru and Wang, Xintao and Lin, Yuan and Yue, Yu and Zhang, Lin and Wang, Yang and Wang, Ke},
  journal={arXiv preprint arXiv:2509.09265},
  year={2025}
}

@inproceedings{bharadhwaj2020model,
  title={Model-predictive control via cross-entropy and gradient-based optimization},
  author={Bharadhwaj, Homanga and Xie, Kevin and Shkurti, Florian},
  booktitle={Learning for Dynamics and Control},
  pages={277--286},
  year={2020},
  organization={PMLR}
}

@article{clark2019does,
  title={What does bert look at? an analysis of bert's attention},
  author={Clark, Kevin and Khandelwal, Urvashi and Levy, Omer and Manning, Christopher D},
  journal={arXiv preprint arXiv:1906.04341},
  year={2019}
}

@article{rogers2020primer,
  title={A primer in BERTology: What we know about how BERT works},
  author={Rogers, Anna and Kovaleva, Olga and Rumshisky, Anna},
  journal={Transactions of the association for computational linguistics},
  volume={8},
  pages={842--866},
  year={2020}
}

@inproceedings{li2023towards,
  title={Towards Lossless Head Pruning through Automatic Peer Distillation for Language Models.},
  author={Li, Bingbing and Wang, Zigeng and Huang, Shaoyi and Bragin, Mikhail A and Li, Ji and Ding, Caiwen},
  booktitle={IJCAI},
  pages={5113--5121},
  year={2023}
}

@article{ghattas2025pruning,
  title={On pruning state-space llms},
  author={Ghattas, Tamer and Hassid, Michael and Schwartz, Roy},
  journal={arXiv preprint arXiv:2502.18886},
  year={2025}
}

@article{xiao2021entropy,
  title={An entropy regularization free mechanism for policy-based reinforcement learning},
  author={Xiao, Changnan and Shi, Haosen and Fan, Jiajun and Deng, Shihong},
  journal={arXiv preprint arXiv:2106.00707},
  year={2021}
}

@article{sakaguchi2021winogrande,
  title={Winogrande: An adversarial winograd schema challenge at scale},
  author={Sakaguchi, Keisuke and Bras, Ronan Le and Bhagavatula, Chandra and Choi, Yejin},
  journal={Communications of the ACM},
  volume={64},
  number={9},
  pages={99--106},
  year={2021},
  publisher={ACM New York, NY, USA}
}

@article{mihaylov2018can,
  title={Can a suit of armor conduct electricity? a new dataset for open book question answering},
  author={Mihaylov, Todor and Clark, Peter and Khot, Tushar and Sabharwal, Ashish},
  journal={arXiv preprint arXiv:1809.02789},
  year={2018}
}

@article{wang2024grokking,
  title={Grokking of implicit reasoning in transformers: A mechanistic journey to the edge of generalization},
  author={Wang, Boshi and Yue, Xiang and Su, Yu and Sun, Huan},
  journal={Advances in Neural Information Processing Systems},
  volume={37},
  pages={95238--95265},
  year={2024}
}

@inproceedings{artzy2024attend,
  title={Attend first, consolidate later: On the importance of attention in different llm layers},
  author={Artzy, Amit Ben and Schwartz, Roy},
  booktitle={Proceedings of the 7th BlackboxNLP Workshop: Analyzing and Interpreting Neural Networks for NLP},
  pages={177--184},
  year={2024}
}

@article{zhou2024survey,
  title={A survey on efficient inference for large language models},
  author={Zhou, Zixuan and Ning, Xuefei and Hong, Ke and Fu, Tianyu and Xu, Jiaming and Li, Shiyao and Lou, Yuming and Wang, Luning and Yuan, Zhihang and Li, Xiuhong and others},
  journal={arXiv preprint arXiv:2404.14294},
  year={2024}
}

@inproceedings{kim2020autoscale,
  title={Autoscale: Energy efficiency optimization for stochastic edge inference using reinforcement learning},
  author={Kim, Young Geun and Wu, Carole-Jean},
  booktitle={2020 53rd Annual IEEE/ACM international symposium on microarchitecture (MICRO)},
  pages={1082--1096},
  year={2020},
  organization={IEEE}
}

@article{hoefler2021sparsity,
  title={Sparsity in deep learning: Pruning and growth for efficient inference and training in neural networks},
  author={Hoefler, Torsten and Alistarh, Dan and Ben-Nun, Tal and Dryden, Nikoli and Peste, Alexandra},
  journal={Journal of Machine Learning Research},
  volume={22},
  number={241},
  pages={1--124},
  year={2021}
}

@inproceedings{yang2020procrustes,
  title={Procrustes: a dataflow and accelerator for sparse deep neural network training},
  author={Yang, Dingqing and Ghasemazar, Amin and Ren, Xiaowei and Golub, Maximilian and Lemieux, Guy and Lis, Mieszko},
  booktitle={2020 53rd Annual IEEE/ACM International Symposium on Microarchitecture (MICRO)},
  pages={711--724},
  year={2020},
  organization={IEEE}
}

@inproceedings{wang2019eigendamage,
  title={Eigendamage: Structured pruning in the kronecker-factored eigenbasis},
  author={Wang, Chaoqi and Grosse, Roger and Fidler, Sanja and Zhang, Guodong},
  booktitle={International conference on machine learning},
  pages={6566--6575},
  year={2019},
  organization={PMLR}
}

@article{han2023entropy,
  title={Entropy regularization methods for parameter space exploration},
  author={Han, Shuai and Zhou, Wenbo and L{\"u}, Shuai and Zhu, Sheng and Gong, Xiaoyu},
  journal={Information Sciences},
  volume={622},
  pages={476--489},
  year={2023},
  publisher={Elsevier}
}

@inproceedings{hong2025variance,
  title={Variance Sensitivity Induces Attention Entropy Collapse and Instability in Transformers},
  author={Hong, Jonghyun and Lee, Sungyoon},
  booktitle={Proceedings of the 2025 Conference on Empirical Methods in Natural Language Processing},
  pages={8371--8389},
  year={2025}
}

@article{arriojas2023entropy,
  title={Entropy regularized reinforcement learning using large deviation theory},
  author={Arriojas, Argenis and Adamczyk, Jacob and Tiomkin, Stas and Kulkarni, Rahul V},
  journal={Physical Review Research},
  volume={5},
  number={2},
  pages={023085},
  year={2023},
  publisher={APS}
}

@article{ashlag2025state,
  title={State entropy regularization for robust reinforcement learning},
  author={Ashlag, Yonatan and Koren, Uri and Mutti, Mirco and Derman, Esther and Bacon, Pierre-Luc and Mannor, Shie},
  journal={arXiv preprint arXiv:2506.07085},
  year={2025}
}

@article{dosovitskiy2020image,
  title={An image is worth 16x16 words: Transformers for image recognition at scale},
  author={Dosovitskiy, Alexey},
  journal={arXiv preprint arXiv:2010.11929},
  year={2020}
}

@article{liu2023visual,
  title={Visual instruction tuning},
  author={Liu, Haotian and Li, Chunyuan and Wu, Qingyang and Lee, Yong Jae},
  journal={Advances in neural information processing systems},
  volume={36},
  pages={34892--34916},
  year={2023}
}

@inproceedings{deng2009imagenet,
  title={Imagenet: A large-scale hierarchical image database},
  author={Deng, Jia and Dong, Wei and Socher, Richard and Li, Li-Jia and Li, Kai and Fei-Fei, Li},
  booktitle={2009 IEEE conference on computer vision and pattern recognition},
  pages={248--255},
  year={2009},
  organization={Ieee}
}

@article{krizhevsky2009learning,
  title={Learning multiple layers of features from tiny images},
  author={Krizhevsky, Alex and Hinton, Geoffrey and others},
  year={2009},
  publisher={Toronto, ON, Canada}
}

@inproceedings{gurari2018vizwiz,
  title={Vizwiz grand challenge: Answering visual questions from blind people},
  author={Gurari, Danna and Li, Qing and Stangl, Abigale J and Guo, Anhong and Lin, Chi and Grauman, Kristen and Luo, Jiebo and Bigham, Jeffrey P},
  booktitle={Proceedings of the IEEE conference on computer vision and pattern recognition},
  pages={3608--3617},
  year={2018}
}

@article{yu2023mm,
  title={Mm-vet: Evaluating large multimodal models for integrated capabilities},
  author={Yu, Weihao and Yang, Zhengyuan and Li, Linjie and Wang, Jianfeng and Lin, Kevin and Liu, Zicheng and Wang, Xinchao and Wang, Lijuan},
  journal={arXiv preprint arXiv:2308.02490},
  year={2023}
}

@inproceedings{bossard2014food,
  title={Food-101--mining discriminative components with random forests},
  author={Bossard, Lukas and Guillaumin, Matthieu and Van Gool, Luc},
  booktitle={European conference on computer vision},
  pages={446--461},
  year={2014},
  organization={Springer}
}

@article{xiao2017fashion,
  title={Fashion-mnist: a novel image dataset for benchmarking machine learning algorithms},
  author={Xiao, Han and Rasul, Kashif and Vollgraf, Roland},
  journal={arXiv preprint arXiv:1708.07747},
  year={2017}
}

@article{jaradat2024hybrid,
  title={Hybrid Dynamic Pruning: A Pathway to Efficient Transformer Inference},
  author={Jaradat, Ghadeer and Tolba, Mohammed and Alsuhli, Ghada and Saleh, Hani and Al-Qutayri, Mahmoud and Stouraitis, Thanos and Mohammad, Baker},
  journal={arXiv preprint arXiv:2407.12893},
  year={2024}
}

@article{de2023choice,
  title={The choice of scaling technique matters for classification performance},
  author={De Amorim, Lucas BV and Cavalcanti, George DC and Cruz, Rafael MO},
  journal={Applied Soft Computing},
  volume={133},
  pages={109924},
  year={2023},
  publisher={Elsevier}
}

@article{hendrycks2020measuring,
  title={Measuring massive multitask language understanding},
  author={Hendrycks, Dan and Burns, Collin and Basart, Steven and Zou, Andy and Mazeika, Mantas and Song, Dawn and Steinhardt, Jacob},
  journal={arXiv preprint arXiv:2009.03300},
  year={2020}
}

@article{cobbe2021training,
  title={Training verifiers to solve math word problems},
  author={Cobbe, Karl and Kosaraju, Vineet and Bavarian, Mohammad and Chen, Mark and Jun, Heewoo and Kaiser, Lukasz and Plappert, Matthias and Tworek, Jerry and Hilton, Jacob and Nakano, Reiichiro and others},
  journal={arXiv preprint arXiv:2110.14168},
  year={2021}
}

\newpage
\appendix
\onecolumn

\clearpage

\appendix

\section*{Appendix}
\noindent\textbf{Table of Contents}

\begin{tabularx}{\linewidth}{@{}X r@{}}
\toprule
    \hyperref[Appendix_A]{A. Related Work} & p.~\pageref{Appendix_A} \\ \\
    \hyperref[Appendix_B]{B. Proofs and Details} & p.~\pageref{Appendix_B} \\
        \quad \hyperref[Appendix_B1]{B.1. Loss-increase Control via Head Importance (Section 4.2.1)} \dotfill & p.~\pageref{Appendix_B1} \\
            \quad \quad \hyperref[Appendix_B11]{B.1.1. Loss Bound with Operator-Norm Control} \dotfill & p.~\pageref{Appendix_B11} \\ 
            \quad \quad \hyperref[Appendix_B12]{B.1.2. Norms and Inner Products} \dotfill & p.~\pageref{Appendix_B12} \\
            \quad \quad \hyperref[Appendix_B13]{B.1.3. Estimating $\|W^O\|_2$ and Blockwise Norms via Power Iteration} \dotfill & p.~\pageref{Appendix_B13} \\
            \quad \quad \hyperref[Appendix_B14]{B.1.4. \texorpdfstring{Cross-Entropy Curvature and Propagation to $\mathbf{y}$}{CE Curvature and Propagation to y}} \dotfill & p.~\pageref{Appendix_B14} \\
            \quad \quad \hyperref[Appendix_B15]{B.1.5. Why the Quadratic Term Is Typically Negligible} \dotfill & p.~\pageref{Appendix_B15} \\    
            \quad \quad \hyperref[Appendix_B16]{B.1.6. Remarks on HIS with Absolute Values} \dotfill & p.~\pageref{Appendix_B16} \\
        \quad \hyperref[Appendix_B2]{B.2. Generalization Gap and Attention Entropy (Section 4.2.2)} \dotfill & p.~\pageref{Appendix_B2} \\
            \quad \quad \hyperref[Appendix_B21]{B.2.1. Notation and Token Averaging} \dotfill & p.~\pageref{Appendix_B21} \\ 
            \quad \quad \hyperref[Appendix_B22]{B.2.2. Neighboring Datasets and Why They Appear} \dotfill & p.~\pageref{Appendix_B22} \\ 
            \quad \quad \hyperref[Appendix_B23]{B.2.3. From Attention Perturbation to Output Perturbation} \dotfill & p.~\pageref{Appendix_B23} \\ 
            \quad \quad \hyperref[Appendix_B24]{B.2.4. Entropy–Total Variation (TV) Control} \dotfill & p.~\pageref{Appendix_B24} \\ 
            \quad \quad \hyperref[Appendix_B25]{B.2.5. Stability and Generalization} \dotfill & p.~\pageref{Appendix_B25} \\ 
            \quad \quad \hyperref[Appendix_B26]{B.2.6. Constants and Practical Remarks} \dotfill & p.~\pageref{Appendix_B26} \\ 
        \quad \hyperref[Appendix_B3]{B.3. Risk Upper Bound and HIES Minimization (Section 4.2.3)} \dotfill & p.~\pageref{Appendix_B3} \\
        \quad \hyperref[Appendix_B4]{B.4. Orthogonality and Complementarity (Section 4.2.4)} \dotfill & p.~\pageref{Appendix_B4} \\ \\
    \hyperref[Appendix_C]{C. Experimental Setup} & p.~\pageref{Appendix_C} \\ 
        \quad \hyperref[Appendix_C1]{C.1. Experimental Setup for Motivation Study} \dotfill & p.~\pageref{Appendix_C1} \\
        \quad \hyperref[Appendix_C2]{C.2. Model} \dotfill & p.~\pageref{Appendix_C2} \\
        \quad \hyperref[Appendix_C3]{C.3. Computing Resources} \dotfill & p.~\pageref{Appendix_C3} \\
        \quad \hyperref[Appendix_C4]{C.4. Dataset Statistics} \dotfill & p.~\pageref{Appendix_C4} \\
            \quad \quad \hyperref[Appendix_C41]{C.4.1. Natural Language Understanding Task} \dotfill & p.~\pageref{Appendix_C41} \\ 
            \quad \quad \hyperref[Appendix_C42]{C.4.2. Image Classification Task} \dotfill & p.~\pageref{Appendix_C42} \\ 
            \quad \quad \hyperref[Appendix_C43]{C.4.3. Reasoning Task} \dotfill & p.~\pageref{Appendix_C43} \\ 
            \quad \quad \hyperref[Appendix_C44]{C.4.4. Multi-modal Vision-Language Task} \dotfill & p.~\pageref{Appendix_C44} \\ \\
    \hyperref[Appendix_D]{D. Additional Experimental Results} & p.~\pageref{Appendix_D} \\ 
        \quad \hyperref[Appendix_D1]{D.1. Orthogonality Analysis} \dotfill & p.~\pageref{Appendix_D1} \\
        \quad \hyperref[Appendix_D2]{D.2. Seed Sensitivity and Statistical Significance} \dotfill & p.~\pageref{Appendix_D2} \\
        \quad \hyperref[Appendix_D3]{D.3. Effect of Calibration Dataset Size} \dotfill & p.~\pageref{Appendix_D3} \\
        \quad \hyperref[Appendix_D4]{D.4. Heatmap of Importance Scores and Pruning Results} \dotfill & p.~\pageref{Appendix_D4} \\
        \quad \hyperref[Appendix_D5]{D.5. 3D Analysis of Attention Head Importance Scores} \dotfill & p.~\pageref{Appendix_D5} \\
            \quad \quad \hyperref[Appendix_D51]{D.5.1. Difference in Pruning Patterns} \dotfill & p.~\pageref{Appendix_D51} \\ 
            \quad \quad \hyperref[Appendix_D52]{D.5.2. Performance Inversion Across Sparsity Regimes} \dotfill & p.~\pageref{Appendix_D52} \\ 
        \quad \hyperref[Appendix_D6]{D.6. Experimental Results on Large-scale Reasoning Tasks} \dotfill & p.~\pageref{Appendix_D6} \\
        \quad \hyperref[Appendix_D7]{D.7. Experimental Results on Downstream Tasks} \dotfill & p.~\pageref{Appendix_D7} \\
        \quad \hyperref[Appendix_D8]{D.8. \texorpdfstring{Sensitivity Analysis - Ablation on $\alpha$ }{Ablation on alpha}} \dotfill & p.~\pageref{Appendix_D8} \\ 
        \quad \hyperref[Appendix_D9]{D.9. Combining Attention Entropy with Other Importance Signals} \dotfill & p.~\pageref{Appendix_D9} \\ \\
    \hyperref[Appendix_E]{E. Efficiency Analysis} & p.~\pageref{Appendix_E} \\ 
        \quad \hyperref[Appendix_E1]{E.1. Computational Efficiency and FLOPs Reduction} \dotfill & p.~\pageref{Appendix_E1} \\
        \quad \hyperref[Appendix_E2]{E.2. Runtime Overhead and Implementation Details} \dotfill & p.~\pageref{Appendix_E2} \\
\bottomrule
\end{tabularx}

\clearpage
\section{Related Work}
\label{related_work}
\label{Appendix_A}
\textbf{Model Compression.}
Language models \citep{devlin2019bert, liu2019roberta, lewis2020bart} have rapidly advanced in scale and capability, intensifying the demand to reduce their parameter sizes and inference latency \citep{molchanov2017variational, gordon2020compressing}.  
To compress large language models efficiently, various approaches have been explored, including pruning \citep{liu2021ebert, kurtic2022optimal, xu2021rethinking}, knowledge distillation \citep{sun2019patient, sun2020contrastive, pan2021meta}, quantization \citep{bai2021binarybert, yao2022zeroquant, zafrir2019q8bert, chen2025efficientqat}, and other techniques, like low-rank approximation methods \citep{saha2024compressing, li2023losparse, wong2025a3} or weight space decomposition methods such as \citep{ashkboos2024slicegpt}.

\textbf{Structured Pruning.}
Among these, structured pruning—which removes entire architectural components rather than individual weights— can be performed at various granularities, such as whole layers \citep{fan2019reducing}, multi-head attention modules \citep{michel2019sixteen, voita2019analyzing}, or feed-forward networks \citep{lagunas2021block, hou2020dynabert}. Recently, attention head pruning has gained particular traction in the context of large language models. It directly reduces attention FLOPs and KV-cache size by lowering the number of active heads while preserving the layer topology, thereby simplifying checkpoint compatibility and serving integration. Consequently, several large-scale studies adopt head-level pruning as a practical axis in LLM compression pipelines \citep{ma2023llm, jaradat2024hybrid, muralidharan2024compact}. 
\citet{ma2023llm} propose a unified framework that integrates structured head pruning into the training pipeline of large language models, achieving substantial sparsity without accuracy degradation. 

\textbf{Entropy in Reinforcement Learning and Transformers.} Entropy has long been employed in reinforcement learning (RL) as a means of encouraging exploration and preserving policy diversity. By introducing an entropy term into the policy objective, RL methods prevent premature convergence to deterministic strategies and mitigate policy collapse \citep{bharadhwaj2020model, xiao2021entropy, wang2025harnessing}. Extensions of this idea include parameter-space entropy regularization to explicitly control diversity \citep{han2023entropy}, large-deviation interpretations of entropy-regularized RL \citep{arriojas2023entropy}, and state-distribution entropy regularization for improved robustness and generalization \citep{ashlag2025state}.

In Transformer architectures, attention entropy (AE) has been introduced to quantify the concentration of each head’s focus across input tokens. \citet{zhai2023stabilizing} report that persistently low AE—an ``entropy collapse'' state—correlates strongly with training instabilities such as oscillations in the loss surface and divergence across model scales. Their theoretical analysis ties AE to the spectral norm of attention logits, and they propose $\sigma$Reparam to prevent collapse by enforcing a lower bound on entropy. More recent work identifies variance sensitivity in the softmax transformation as another source of entropy collapse \citep{hong2025variance}. Together, these findings highlight attention entropy as a critical factor for stability, motivating its integration as a complementary signal in pruning frameworks.

\clearpage
\section{Proofs and Details}
\label{Appendix_B}
\noindent\textbf{Overview.}
This appendix collects the formal analyses that underpin Section~\ref{4.2} and fixes notation and technical conventions used throughout. We first develop loss-increase control for head pruning via gradient-based HIS, derive operator-norm curvature bounds for the cross-entropy objective, and justify the first-order approximation by quantifying when quadratic terms are negligible (Appendix~\ref{app:321}). 

We then turn to generalization: starting from token-averaged notation and a neighboring-dataset construction, we link perturbations of attention distributions to output deviations and establish an entropy–total variation inequality that couples attention entropy with stability, culminating in a stability–generalization connection and practical constraints (Appendix~\ref{app:ae-appendix}). 

Building on these components, we present a risk upper bound whose surrogate minimization yields the proposed HIES objective and clarifies its role as a principled pruning criterion (Appendix~\ref{A.3}). Finally, we prove an orthogonality result between the centered HIS and entropy directions, showing their complementarity and explaining why combining the two signals improves robustness across pruning regimes (Appendix~\ref{app:orth}). 

Collectively, these results provide (i) tight loss-control guarantees under operator-norm curvature, (ii) an entropy-based route from stability to generalization, and (iii) a unified risk-motivated justification for HIES.

\subsection{Loss-increase Control via Head Importance (Section \ref{subsec:HIS})}
\label{app:321}
\label{Appendix_B1}
\subsubsection{Loss Bound with Operator-Norm Control}
\label{app:bound}
\label{Appendix_B11}
We write $\beta_y := \|\nabla_{\mathbf y}^2 \mathcal L\|_2$ for the operator-norm curvature at the representation $\mathbf y$.

\noindent Consider the second-order Taylor expansion in $\mathbf y$:
\[
\mathcal L(\mathbf y+\delta\mathbf y)
\le
\mathcal L(\mathbf y)+
\langle \nabla_{\mathbf y}\mathcal L,\,\delta\mathbf y\rangle_F
+
\frac{1}{2}\,\delta\mathbf y^\top \nabla_{\mathbf y}^2\mathcal L\,\delta\mathbf y.
\]
Taking expectations and using the operator-norm bound yields
\[
\Delta\mathcal L
~:=~
\mathbb E_{\mathcal D}\!\bigl[\mathcal L(\mathbf y+\delta\mathbf y)-\mathcal L(\mathbf y)\bigr]
~\le~
\mathbb E\bigl[\langle \nabla_{\mathbf y}\mathcal L,\,\delta\mathbf y\rangle_F\bigr]
+
\frac{\beta_y}{2}\,\mathbb E\bigl[\|\delta\mathbf y\|_F^2\bigr].
\]

\noindent Since $\delta\mathbf y=\operatorname{Concat}(\delta A_1,\dots,\delta A_{|\text{H}|})$,
$\|\delta\mathbf y\|_F^2=\sum_h \|\delta A_h\|_F^2$ and $\delta A_h=-(1-m_h)A_h$,
the quadratic term equals
$\tfrac{\beta_y}{2}\sum_h (1-m_h)\|A_h\|_F^2$.
For the first-order term, using the absolute value in the HIS definition and head-wise triangle inequality,
\[
\mathbb E\bigl[\langle \nabla_{\mathbf y}\mathcal L,\,\delta\mathbf y\rangle_F\bigr]
=
-\sum_h (1-m_h)\,\mathbb E\bigl[\langle \nabla_{A_h}\mathcal L,\,A_h\rangle_F\bigr]
~\le~
\sum_h (1-m_h)\,\mathrm{HIS}_h,
\]
hence
\begin{equation}
\label{eq:main-global-app}
\Delta\mathcal L
~\le~
\sum_{h=1}^{|\text{H}|} (1-m_h)\,\mathrm{HIS}_h
~+~
\frac{\beta_y}{2}\sum_{h=1}^{|\text{H}|} (1-m_h)\,\|A_h\|_F^2.
\end{equation}

\paragraph{Plug-ins (default: binary).}
\[
\begin{aligned}
\text{\textbf{Binary (Sigmoid) CE.}}\quad
& \beta_y \le \tfrac14 \|W^O\|_2^2,\\
& \Rightarrow\quad
\Delta\mathcal L \le
\sum_h (1-m_h)\,\mathrm{HIS}_h
~+~
\frac{1}{8}\,\|W^O\|_2^2
\sum_h (1-m_h)\,\|A_h\|_F^2.
\\[0.8ex]
\text{\textbf{Multiclass softmax CE.}}\quad
& \beta_y \le \tfrac12 \|W^O\|_2^2,\\
& \Rightarrow\quad
\Delta\mathcal L \le
\sum_h (1-m_h)\,\mathrm{HIS}_h
~+~
\frac{1}{4}\,\|W^O\|_2^2
\sum_h (1-m_h)\,\|A_h\|_F^2.
\end{aligned}
\]

\subsubsection{Norms and Inner Products}
\label{app:impl}
\label{app:notation}
\label{Appendix_B12}
We use token-averaged Frobenius norms and inner products:
$\|A_h\|_F^2 = \tfrac{1}{n}\sum_{i=1}^n \|A_h(i)\|_2^2$ and 
$\langle U, V\rangle_F = \tfrac{1}{n}\sum_{i=1}^n \langle U(i), V(i)\rangle$
(\textit{with batching}: replace $\tfrac{1}{n}$ by $\tfrac{1}{Bn}$).

\subsubsection{Estimating $\|W^O\|_2$ and Blockwise Norms via Power Iteration}
\label{app:spectral_norm}
\label{Appendix_B13}
The spectral norm of a matrix $M \in \mathbb{R}^{m \times n}$ is defined as
\[
\|M\|_2 := \max_{\|x\|_2 = 1} \|M x\|_2,
\]
which measures the maximum $\ell_2$-amplification factor over all unit vectors.
By the singular value decomposition (SVD), $M = U \Sigma V^\top$, where 
$\Sigma = \mathrm{diag}(\sigma_1, \sigma_2, \dots)$ with $\sigma_1 \ge \sigma_2 \ge \dots \ge 0$, we have
\[
\|M\|_2 = \sigma_{\max}(M),
\]
i.e., the spectral norm equals the largest singular value. This follows since $U$ and $V$ are orthogonal and preserve the $\ell_2$-norm, so the maximization reduces to aligning $x$ with the right singular vector corresponding to $\sigma_{\max}$. Exact computation via full SVD costs $O(\min\{m^2n, mn^2\})$, which is prohibitive for large $M$. 
Instead, we approximate $\sigma_{\max}(M)$ using the \emph{power iteration} method:
starting from a random unit vector $v_0$, iterate
\begin{align*}
u_t &\leftarrow \frac{M v_t}{\|M v_t\|_2}, \\
v_{t+1} &\leftarrow \frac{M^\top u_t}{\|M^\top u_t\|_2}.
\end{align*}

\medskip

\noindent After $T$ iterations, $\|M v_T\|_2$ converges to $\sigma_{\max}(M)$, and $v_T$ approximates the corresponding right singular vector. We apply this procedure to $W^O$ and, for a tighter quadratic term in our bound, to its head-wise column blocks $W^O_{(:,\mathcal{I}_h)}$, forming
\[
\sum_h (1-m_h) \|W^O_{(:,\mathcal{I}_h)}\|_2^2 \,\|A_h\|_F^2.
\]

\noindent Here, $\mathcal{I}_h \subset \{1,\dots,d\}$ denotes the set of column indices in $W^O$ 
corresponding to the $d_v$ output dimensions of head $h$.
Thus, $W^O_{(:,\mathcal{I}_h)} \in \mathbb{R}^{d \times d_v}$ is the column block of $W^O$ 
mapping the $d_v$-dimensional output of head $h$ to the $d$-dimensional model space.

\subsubsection{\texorpdfstring{Cross-Entropy Curvature and Propagation to $\mathbf{y}$}{CE Curvature and Propagation to y}}
\label{app:softmax}
\label{Appendix_B14}
\paragraph{Logit-space Hessian (binary vs.\ multiclass).} \textbf{Binary (sigmoid) CE.} For a single logit $z$ with $p=\sigma(z)$,
\[
\frac{d^2\mathcal L}{dz^2} \;=\; p(1-p) \;\le\; \tfrac14,
\]
hence $\|\nabla_{\mathbf z}^2\mathcal L\|_2 \le \tfrac14$.

\medskip
\medskip

\noindent\textbf{Multiclass softmax CE.} For logits $\mathbf z\in\mathbb R^{C}$ and $\mathbf p=\mathrm{softmax}(\mathbf z)$,
\[
\nabla_{\mathbf z}^2\mathcal L \;=\; \operatorname{diag}(\mathbf p) - \mathbf p\mathbf p^\top,
\quad
\|\nabla_{\mathbf z}^2\mathcal L\|_2 \;\le\; \tfrac12 .
\]
\emph{Proof sketch.} For any unit vector $v$, $v^\top (\operatorname{diag}(p)-pp^\top) v
= \sum_i p_i v_i^2 - (\sum_i p_i v_i)^2 = \mathrm{Var}_p(v)$.
By Popoviciu’s inequality, $\mathrm{Var}_p(v)\le \tfrac{(\max_i v_i - \min_i v_i)^2}{4}\le \tfrac12$ for $\|v\|_2=1$.
Tightness holds at $C=2$, $p=(\tfrac12,\tfrac12)$.

\medskip

\paragraph{Mapping through $W^O$.}
With the immediate linear projection $\mathbf z=\mathbf y W^O$,
\[
\nabla_{\mathbf y}^2\mathcal L \;=\; (W^O)^\top\,\nabla_{\mathbf z}^2\mathcal L\,W^O,
\qquad
\beta_y := \|\nabla_{\mathbf y}^2\mathcal L\|_2
\;\le\;
\begin{cases}
\tfrac14\,\|W^O\|_2^2, & \text{binary CE},\\[2pt]
\tfrac12\,\|W^O\|_2^2, & \text{multiclass softmax CE}.
\end{cases}
\]
(A blockwise refinement replaces $\|W^O\|_2$ by $\|W^O_{(:,\mathcal I_h)}\|_2$ head-wise.)

\subsubsection{Why the Quadratic Term Is Typically Negligible}
\label{app:quad-negligible}
\label{Appendix_B15}
We first note
\[
\mathrm{HIS}_h
~=~
\mathbb{E}\!\big[\,|\langle \nabla_{A_h}\mathcal L, A_h\rangle_F|\,\big]
~=~
\mathbb{E}\!\big[\,|\cos\phi_h|\,\|\nabla_{A_h}\mathcal L\|_F\,\|A_h\|_F\,\big],
\]
where the expectation is over $x\!\sim\!\mathcal D$ with token-averaging as in Appendix~\ref{app:notation}.
Assume there exists $g>0$ such that, for all heads under consideration,
\[
\mathrm{HIS}_h \;\ge\; g\,\mathbb E\!\big[\|A_h\|_F\big],
\]
e.g., define
\[
g \;:=\; \min_{h\in\{1,\dots,\text{|H|}\}} \;\mathbb{E}\!\big[\,|\cos\phi_h|\,\|\nabla_{A_h}\mathcal L\|_F\,\big],
\]
where $\cos\phi_h \coloneqq \frac{\langle \nabla_{A_h}\mathcal L,\,A_h\rangle_F}{\|\nabla_{A_h}\mathcal L\|_F\,\|A_h\|_F}$
denotes the cosine alignment between the head’s gradient and activation.

\medskip

\noindent Then

\[
\frac{\text{quadratic}}{\text{first-order}}
~\le~
\frac{\tfrac{\beta_y}{2}\sum_h(1-m_h)\,\mathbb E[\|A_h\|_F^2]}
{\sum_h(1-m_h)\,\mathrm{HIS}_h}
~\le~
\frac{\beta_y}{2}\cdot
\frac{\max_h \mathbb E[\|A_h\|_F]}{g},
\]

\medskip
\noindent where the second inequality uses
$\sum_h(1-m_h)\mathbb E[\|A_h\|_F^2]\le \big(\max_h \mathbb E[\|A_h\|_F]\big)\sum_h(1-m_h)\mathbb E[\|A_h\|_F]$
and the per-head lower bound $\mathrm{HIS}_h\ge g\,\mathbb E[\|A_h\|_F]$.

\medskip
\noindent Recalling $\beta_y \le c\,\|W^O\|_2^2$ with
$c=\tfrac14$ for binary CE and $c=\tfrac12$ for multiclass softmax CE (cf. Appendix~\ref{app:softmax}),
we obtain

\[
\frac{\text{quadratic}}{\text{first-order}}
~\le~
\frac{c}{2}\,\|W^O\|_2^2\cdot \frac{\max_h \mathbb E[\|A_h\|_F]}{g}.
\]

\medskip

\noindent A blockwise refinement further tightens this by replacing $\|W^O\|_2^2$ with
$\max_h \|W^O_{(:,\mathcal I_h)}\|_2^2$.
Since (i) LayerNorm controls token-wise activation scales (thus $\max_h \mathbb E[\|A_h\|_F]$),
and (ii) $g$ is bounded away from zero under non-degenerate alignment,
the ratio is typically small. Hence the first-order term dominates in practice, while the second-order term remains explicitly controlled by the plug-in bounds in Appendix~\ref{app:bound}.

\subsubsection{Remarks on HIS with Absolute Values}
\label{app:his-abs}
\label{Appendix_B16}
The absolute value in \eqref{eq:his-def} is part of the definition to prevent cancellation across samples; consequently, the triangle inequality turns the first-order term into an additive upper bound $\sum_h (1-m_h)\,\mathrm{HIS}_h$ (cf. Appendix~\ref{app:bound}). If $\langle\nabla_{A_h}\mathcal L,A_h\rangle_F<0$ on some samples, masking that head could locally decrease the loss; the metric remains conservative by construction.

\subsection{Generalization Gap and Attention Entropy (Section~\ref{subsec:gen-gap-ae})}
\label{app:ae-appendix}
\label{Appendix_B2}

\subsubsection{Notation and Token Averaging}
\label{app:ae-notation}
\label{Appendix_B21}
For head $h$ and query token $t\in\{1,\dots,n(x)\}$, let 
$\boldsymbol{\alpha}^{(h)}_t(x)\in\Delta^{n(x)-1}$ denote the attention distribution over keys, and
$H(\mathbf p):=-\sum_j p_j\log p_j$ the entropy.
Define the \emph{token-averaged, length-normalized entropy} and \emph{deficit} by
\[
\begin{aligned}
\mathrm{AE}_h(x)
&:= \frac{1}{n(x)\,\log n(x)}
\sum_{t=1}^{n(x)} H\!\bigl(\boldsymbol{\alpha}^{(h)}_t(x)\bigr) \in [0,1],\\[4pt]
\mathrm{AD}_h(x)
&:= \frac{1}{n(x)\,\log n(x)}
\sum_{t=1}^{n(x)}\Bigl(\log n(x)-H\!\bigl(\boldsymbol{\alpha}^{(h)}_t(x)\bigr)\Bigr)
\;=\; 1 - \mathrm{AE}_h(x) \in [0,1].
\end{aligned}
\]
For neighboring datasets $(\mathcal S,\mathcal S')$, write the symmetric aggregation
\[
\overline{\mathrm{AD}}_h(x)\;:=\;\tfrac12\bigl(\mathrm{AD}_h(x)+\mathrm{AD}'_h(x)\bigr).
\]
All token averages exclude padding positions and use the effective context length for causal masking(cf. Appendix~\ref{app:ae-notation}). Here $(\mathcal S,\mathcal S')$ are \emph{neighboring} datasets that differ in one example.

\subsubsection{Neighboring Datasets and Why They Appear}
\label{app:neighboring}
\label{Appendix_B22}
We call two datasets $S=(z_1,\dots,z_N)$ and $S'=(z_1,\dots,z_{i-1},z_i',z_{i+1},\dots,z_N)$ \emph{neighboring} if they differ in exactly one example.

\paragraph{Why neighboring datasets?}
\begin{itemize}
  \item \textbf{Symmetrization.} Introduce an i.i.d.\ ghost sample \(S'\!\sim\!\mathcal D^{N}\) to rewrite the expected generalization gap as an average of sample-wise differences, e.g.,
  \(\mathbb E_{S,S'}\!\bigl[\tfrac1N\sum_{i=1}^{N}\bigl(\ell(f_S;z_i)-\ell(f_{S'};z'_i)\bigr)\bigr]\),
  which is amenable to concentration and stability arguments.
  \item \textbf{Replace-one stability.} Measure sensitivity to a single replacement by comparing \(f_S\) with \(f_{S^{(i\leftarrow z')}}\), where \(S^{(i\leftarrow z')}\) replaces \(z_i\) by \(z'_i\); under \(\gamma\)-uniform stability and bounded loss \(B\), this yields
  \(\mathbb E_S[\mathcal G(S)] \le 2\gamma + \tfrac{B}{N}\).
  \item \textbf{Symmetric inequalities.} Our entropy–total variation (TV) control is symmetric in two distributions \((\alpha,\alpha')\); we thus aggregate via
  \(\overline{\mathrm{AD}}_h(x) := \tfrac12\bigl(\mathrm{AD}_h(x)+\mathrm{AD}'_h(x)\bigr)\),
  which streamlines notation and tightens constants in the perturbation bound.
\end{itemize}

\subsubsection{From Attention Perturbation to Output Perturbation}
\label{app:ae-perturb}
\label{Appendix_B23}
For token $t$, the head output is 
$a_h(t)=\bigl(\boldsymbol{\alpha}^{(h)}_t\bigr)^\top V_h\in\mathbb R^{d_v}$,
hence for neighboring datasets,
\[
\Delta_h(t)
~:=~
a_h(t)-a_h'(t)
~=~
\bigl(\boldsymbol{\alpha}^{(h)}_t-\boldsymbol{\alpha}'^{(h)}_t\bigr)^\top V_h .
\]
With $\|V_h\|_{\infty\to 2}:=\max_{j}\|V_h(j,:)\|_2$ and $\|V_h\|_{\infty\to 2}\le M$,
\begin{equation}
\label{eq:op-ineq}
\|\Delta_h(t)\|_2
~\le~
\|\boldsymbol{\alpha}^{(h)}_t-\boldsymbol{\alpha}'^{(h)}_t\|_1\cdot\|V_h\|_{\infty\to 2}
~\le~
M\,\|\boldsymbol{\alpha}^{(h)}_t-\boldsymbol{\alpha}'^{(h)}_t\|_1.
\end{equation}
Averaging over tokens and applying the mask $m_h$,
\[
\|\Delta(x)\|_2 
~:=~ 
\frac{1}{n(x)}\sum_{t=1}^{n(x)} \sum_{h=1}^{|\text{H}|} (1-m_h)\,\|\Delta_h(t)\|_2 .
\]

\subsubsection{Entropy–Total Variation (TV) Control}
\label{app:ae-entropy}
\label{Appendix_B24}
\begin{lemma}[Entropy–TV inequality]
\label{lem:entropy-tv}
For $\mathbf p,\mathbf q\in\Delta^{n-1}$ and $\mathbf u$ uniform,
\(
\|\mathbf p-\mathbf q\|_1^2
\le
4\bigl[H(\mathbf u)-H(\mathbf p)+H(\mathbf u)-H(\mathbf q)\bigr].
\)
\end{lemma}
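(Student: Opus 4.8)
\textbf{Proof proposal for Lemma~\ref{lem:entropy-tv}.}
The plan is to reduce the claim to Pinsker's inequality combined with the elementary identity that the entropy deficit of a distribution against the uniform law equals its relative entropy to the uniform law. Concretely, since $\mathbf{u}=(1/n,\dots,1/n)$, for any $\mathbf{p}\in\Delta^{n-1}$ we have
\[
\KL(\mathbf{p}\,\|\,\mathbf{u})=\sum_{i}p_i\log(n\,p_i)=\log n+\sum_i p_i\log p_i=\log n-H(\mathbf{p})=H(\mathbf{u})-H(\mathbf{p}),
\]
and likewise for $\mathbf{q}$. Hence the right-hand side of the asserted inequality is exactly $4\bigl[\KL(\mathbf{p}\,\|\,\mathbf{u})+\KL(\mathbf{q}\,\|\,\mathbf{u})\bigr]$, and it suffices to bound $\|\mathbf{p}-\mathbf{q}\|_1^2$ by this quantity.

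Next I would route through the uniform distribution. By the triangle inequality for $\|\cdot\|_1$ and the elementary bound $(a+b)^2\le 2a^2+2b^2$,
\[
\|\mathbf{p}-\mathbf{q}\|_1^2
\;\le\;\bigl(\|\mathbf{p}-\mathbf{u}\|_1+\|\mathbf{u}-\mathbf{q}\|_1\bigr)^2
\;\le\;2\|\mathbf{p}-\mathbf{u}\|_1^2+2\|\mathbf{q}-\mathbf{u}\|_1^2 .
\]
Now invoke Pinsker's inequality in the form $\|\mathbf{a}-\mathbf{b}\|_1^2\le 2\,\KL(\mathbf{a}\,\|\,\mathbf{b})$ (i.e.\ $\mathrm{TV}(\mathbf{a},\mathbf{b})^2\le\tfrac12\KL(\mathbf{a}\,\|\,\mathbf{b})$ with $\mathrm{TV}=\tfrac12\|\cdot\|_1$), applied with $\mathbf{b}=\mathbf{u}$, to get $\|\mathbf{p}-\mathbf{u}\|_1^2\le 2\KL(\mathbf{p}\,\|\,\mathbf{u})$ and $\|\mathbf{q}-\mathbf{u}\|_1^2\le 2\KL(\mathbf{q}\,\|\,\mathbf{u})$. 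Substituting and using the identity from the first paragraph,
\[
\|\mathbf{p}-\mathbf{q}\|_1^2\;\le\;4\KL(\mathbf{p}\,\|\,\mathbf{u})+4\KL(\mathbf{q}\,\|\,\mathbf{u})
\;=\;4\bigl[H(\mathbf{u})-H(\mathbf{p})+H(\mathbf{u})-H(\mathbf{q})\bigr],
\]
which is the claim.

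There is no deep obstacle here; the only points requiring care are bookkeeping ones. First, one must orient Pinsker correctly: it is the relative entropy \emph{from} each of $\mathbf{p},\mathbf{q}$ \emph{to} the uniform distribution that appears (equivalently the entropy deficit), not the reverse direction, which would not even be finite when a coordinate vanishes. Second, the $\ell_1$-versus-total-variation normalization must be tracked so that the constants line up; with the convention $\|\mathbf{p}-\mathbf{q}\|_1=\sum_i|p_i-q_i|=2\,\mathrm{TV}$, the chain above produces precisely the factor $4$ stated in the lemma. (It is worth noting that this constant is essentially the best obtainable by this argument, since both the split $(a+b)^2\le 2a^2+2b^2$ and Pinsker's inequality are tight in the near-uniform regime; a sharper constant would require a different, e.g.\ direct Bretagnolle--Huber-type, argument, which the subsequent generalization bound does not need.)
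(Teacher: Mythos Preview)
Your proof is correct and follows essentially the same route as the paper: triangle inequality plus $(a+b)^2\le 2(a^2+b^2)$ to split through $\mathbf{u}$, then Pinsker's inequality with respect to the uniform distribution, using the identity $\KL(\mathbf{p}\,\|\,\mathbf{u})=H(\mathbf{u})-H(\mathbf{p})$. The only difference is that you spell out the KL--entropy identity and the constant bookkeeping explicitly, which the paper leaves implicit.
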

\begin{proof}
Triangle inequality and $(a+b)^2\le 2(a^2+b^2)$ give
$\|\mathbf p-\mathbf q\|_1^2\le 2(\|\mathbf p-\mathbf u\|_1^2+\|\mathbf q-\mathbf u\|_1^2)$.
Pinsker w.r.t.\ $\mathbf u$ yields $\|\mathbf p-\mathbf u\|_1^2\le 2(\log n-H(\mathbf p))$ and likewise for $\mathbf q$.
\end{proof}

\noindent Applying Lemma~\ref{lem:entropy-tv} to \eqref{eq:op-ineq} token-wise and averaging,
\[
\frac{1}{n(x)}\sum_{t=1}^{n(x)} \|\boldsymbol{\alpha}^{(h)}_t-\boldsymbol{\alpha}'^{(h)}_t\|_1
~\le~
\sqrt{\frac{1}{n(x)}\sum_{t=1}^{n(x)} \|\boldsymbol{\alpha}^{(h)}_t-\boldsymbol{\alpha}'^{(h)}_t\|_1^2}
~\le~
\sqrt{8\log n(x)}\;\sqrt{\overline{\mathrm{AD}}_h(x)}.
\]
Therefore,
\[
\|\Delta(x)\|_2
~\le~
M\sqrt{8\log n(x)}\sum_{h=1}^{|\text{H}|} (1-m_h)\,\sqrt{\overline{\mathrm{AD}}_h(x)}.
\]
By Cauchy–Schwarz and $\sum_h(1-m_h)=|\text{H}|\rho$,
\begin{equation}
\label{eq:c-constant}
\|\Delta(x)\|_2
~\le~
\underbrace{\sqrt{8}\,M\,\sqrt{|\text{H}|\rho\,\log n(x)}}_{=:~C_{\mathrm{AE}}(x)}
\cdot
\sqrt{\sum_{h=1}^{|\text{H}|} (1-m_h)\,\overline{\mathrm{AD}}_h(x)}.
\end{equation}

\subsubsection{Stability and Generalization}
\label{app:stab-proof}
\label{Appendix_B25}
Let $\gamma := L_\ell\,\mathbb E_{\mathcal S}[\|\Delta(x)\|_2]$.
By on-average replace-one stability \citep{bousquet2002stability},
\[
\mathbb E_{\mathcal S}\!\bigl[\mathcal G(\mathcal S)\bigr]\le 2\gamma,
\qquad
\mathcal G(\mathcal S)\le 2\gamma + \tfrac{B}{N}.
\]
Using \eqref{eq:c-constant} and Jensen for $\sqrt{\cdot}$,
\[
\gamma 
~\le~
L_\ell\,\mathbb E_{\mathcal S}\!\bigl[C_{\mathrm{AE}}(x)\bigr]\cdot
\sqrt{\sum_{h=1}^{|\text{H}|} (1-m_h)\,\mathbb E_{\mathcal S}\bigl[\overline{\mathrm{AD}}_h(x)\bigr]}.
\]
Taking a representative $n$ (e.g., average/max effective length) yields the main-text constant $C_{\mathrm{AE}}=\sqrt{8}\,M\,\sqrt{|\text{H}|\rho\,\log n}$ and Eq.~\eqref{eq:gap-main}.

\subsubsection{Constants and Practical Remarks}
\label{app:ae-constants}
\label{Appendix_B26}
\begin{itemize}
\item \textbf{Operator norm.} $\|V_h\|_{\infty\to 2}:=\max_j\|V_h(j,:)\|_2$; take $M:=\max_h\|V_h\|_{\infty\to 2}$ (controlled by LayerNorm/weight norms).
\item \textbf{Sequence length.} For padding/causal masking, replace $n(x)$ by the effective context length; averages exclude padded positions.
\item \textbf{Deficit aggregation.} On-average: $\overline{\mathrm{AD}}_h=\tfrac12(\mathrm{AD}_h+\mathrm{AD}'_h)$; Uniform: $\overline{\mathrm{AD}}_h=\max\{\mathrm{AD}_h,\mathrm{AD}'_h\}$.
\item \textbf{Do not pool entropies.} Since using $H(\tfrac1n\sum_t\alpha_t)$ can underestimate deficit (Jensen) and weaken control, token-wise entropies are required.
\end{itemize}


\subsection{Risk Upper Bound and HIES Minimization (Section~\ref{subsec:optimal})}
\label{A.3}
\label{Appendix_B3}
\begin{proof}
Let $\operatorname{supp}(m):=\{h:\, m_h=1\}$ denote the set of retained heads.
Suppose an admissible mask $m'$ with $|\operatorname{supp}(m')|=k$ is not optimal.
Then there exist $i\in\operatorname{supp}(m')$ and $j\notin\operatorname{supp}(m')$ such that
$\text{HIES}_{j}>\text{HIES}_{i}$.
Consider the mask $\tilde m$ that swaps $i$ and $j$ (retain $j$, prune $i$); the constraint in~\eqref{knapsack} is preserved.
The objective in~\eqref{R} changes by
\[
\Delta \mathcal{R}
= \bigl[\text{HIES}_{i}\bigr] - \bigl[\text{HIES}_{j}\bigr]
< 0,
\]
since $j$ was contributing to the sum (pruned) and $i$ was not (retained).
Hence $\tilde m$ has a strictly smaller objective, contradicting the minimality of $m'$.
Therefore retaining the $k$ heads with the largest HIES is optimal; equivalently, pruning the $|\text{H}|-k$ smallest HIES is optimal.
\end{proof}

\subsection{Orthogonality and Complementarity (Section~\ref{subsec:orth-comp})}
\label{app:orth}
\label{Appendix_B4}

\paragraph{Preliminaries.}
For head $h$, let $\boldsymbol{\alpha}^{(h)}\!\in\!\Delta^{n-1}$ be the attention
probability vector, $V_h\!\in\!\mathbb{R}^{n\times d_v}$ the value matrix, and
\[
  A_h \;=\; \boldsymbol{\alpha}^{(h)} V_h \;\in\; \mathbb{R}^{1\times d_v}.
\]
Define
\[
  g_h \;:=\; V_h\bigl(\nabla_{A_h}\mathcal L\bigr)^{\!\top} \;\in\; \mathbb{R}^{n},
  \qquad
  \mathrm{HIS}_h \;=\; \bigl|\boldsymbol{\alpha}^{(h)\!\top} g_h\bigr|,
  \qquad
  \mathrm{AE}_h \;=\; -\sum_{j=1}^{n}\alpha^{(h)}_j\log\alpha^{(h)}_j.
\]

\paragraph{Gradients w.r.t.\ attention (interior points).}
For $\alpha^{(h)}_j>0$,
\[
  \nabla_{\boldsymbol{\alpha}^{(h)}}\mathrm{HIS}_h
  = \operatorname{sign}\!\bigl(\boldsymbol{\alpha}^{(h)\!\top} g_h\bigr)\,g_h,
  \qquad
  \nabla_{\boldsymbol{\alpha}^{(h)}}\mathrm{AE}_h
  = -\bigl(\mathbf{1}+\log\boldsymbol{\alpha}^{(h)}\bigr),
\]
where $\log$ is applied elementwise.
(At $\boldsymbol{\alpha}^{(h)\!\top}g_h=0$, any subgradient in $\{s\,g_h:\, s\in[-1,1]\}$ is valid; this does not affect the result in expectation.)

\paragraph{Simplex projection.}
Since $\boldsymbol{\alpha}^{(h)}\!\in\!\Delta^{n-1}$, we project onto the tangent space
with $P := I - \tfrac{1}{n}\mathbf{1}\mathbf{1}^{\!\top}$ and define
\[
  \widetilde{\nabla}\mathrm{HIS}_h := P\,\nabla\mathrm{HIS}_h,\qquad
  \widetilde{\nabla}\mathrm{AE}_h := P\,\nabla\mathrm{AE}_h.
\]

\begin{proof}
By definition,
\(
  u_h := \operatorname{sign}(\boldsymbol{\alpha}^{(h)\!\top} g_h)\,g_h,\;
  v_h := \mathbf{1}+\log\boldsymbol{\alpha}^{(h)}
\),
and $\tilde u_h := P u_h$, $\tilde v_h := P v_h$.
Then
\[
  \widetilde{\nabla}_{\boldsymbol{\alpha}^{(h)}}\mathrm{HIS}_h
  = P\,\nabla_{\boldsymbol{\alpha}^{(h)}}\mathrm{HIS}_h
  = \tilde u_h,
  \qquad
  \widetilde{\nabla}_{\boldsymbol{\alpha}^{(h)}}\mathrm{AE}_h
  = P\,\nabla_{\boldsymbol{\alpha}^{(h)}}\mathrm{AE}_h
  = -\tilde v_h.
\]
Hence
\[
  \mathbb{E}_{\mathcal{S}}\!\bigl[\,
    \langle \widetilde{\nabla}\mathrm{HIS}_h,\,
            \widetilde{\nabla}\mathrm{AE}_h \rangle
  \,\bigr]
  = \mathbb{E}_{\mathcal{S}}\!\bigl[\langle \tilde u_h,\,-\tilde v_h\rangle\bigr]
  = -\,\mathrm{tr}\!\Bigl(\,\mathbb{E}_{\mathcal{S}}\bigl[\tilde u_h \tilde v_h^{\!\top}\bigr]\Bigr).
\]
Decomposing the second moment,
\[
  \mathbb{E}_{\mathcal{S}}\bigl[\tilde u_h \tilde v_h^{\!\top}\bigr]
  \;=\; \mathrm{Cov}(\tilde u_h,\tilde v_h)
       + \mathbb{E}_{\mathcal{S}}[\tilde u_h]\,
         \mathbb{E}_{\mathcal{S}}[\tilde v_h]^{\!\top}.
\]
Under $\mathrm{Cov}(\tilde u_h,\tilde v_h)=0$ and
$\langle \mathbb{E}_{\mathcal{S}}[\tilde u_h],\,\mathbb{E}_{\mathcal{S}}[\tilde v_h]\rangle=0$
(\emph{or} the stronger $\mathbb{E}_{\mathcal{S}}[\tilde u_h]=0$),
we obtain
\[
  \mathbb{E}_{\mathcal{S}}\!\bigl[\,
    \langle \widetilde{\nabla}\mathrm{HIS}_h,\,
            \widetilde{\nabla}\mathrm{AE}_h \rangle
  \,\bigr] \;=\; 0.\
\]
\end{proof}

\paragraph{Technical remarks.}
(i) At $\boldsymbol{\alpha}^{(h)\!\top} g_h = 0$, use any subgradient of $|\cdot|$ for
$\nabla_{\boldsymbol{\alpha}^{(h)}}\mathrm{HIS}_h$.
(ii) Since $\boldsymbol{\alpha}^{(h)}=\mathrm{softmax}(\cdot)$, we have $\alpha^{(h)}_j>0$, so
$\log\boldsymbol{\alpha}^{(h)}$ (elementwise) is well-defined.
(iii) ``$\mathrm{Cov}(x,y)=0$'' denotes the \emph{cross-covariance matrix} being zero, not merely componentwise uncorrelatedness.
(iv) If one omits the projection $P$, the same argument applies with $u_h,v_h$ replacing
$\tilde u_h,\tilde v_h$ under the analogous conditions
$\mathrm{Cov}(u_h,v_h)=0$ and
$\langle \mathbb{E}_{\mathcal{S}}[u_h],\,\mathbb{E}_{\mathcal{S}}[v_h]\rangle=0$.

\newpage
\section{Experimental Setup} \label{Experimental Setups}
\label{Appendix_C}
\subsection{Experimental Setup for Motivation Study}
\label{Appendix_C1}
We analyze accuracy degradation and head behaviors under HIS-based pruning. In our diagnostic study, we analyze the phenomena of pruning by HIS on BERT, focusing on detailed attention head behaviors during inference. Following prior work analyzing BERT’s attention geometry and mechanisms \citep{clark2019does,rogers2020primer, wang2024grokking}, we further explore attention head pruning dynamics.

\subsection{Model}
\label{Appendix_C2}
\begin{table}[h]
\centering
\footnotesize
\renewcommand{\arraystretch}{1.1}
\setlength{\tabcolsep}{2pt}
\caption{Summary of model parameters and architectures.}
\begin{tabular}{lrrr p{6.2cm}}
\toprule
\textbf{Model} & \textbf{Parameters} & \textbf{\# Layers} & \textbf{\# Attention Heads} & \textbf{Architecture / Key Details} \\
\midrule
$\text{BERT}_\text{base}$    & 110M  & 12 & 12 & Transformer encoder; pre-trained on masked language modeling and next sentence prediction tasks. \\ \midrule
$\text{LLaMA-2}_\text{7B}$   & 7B    & 32 & 32 & Transformer decoder-only; trained on large-scale text corpora for general-purpose language modeling. \\ \midrule
$\text{ViT}_\text{Large}$   & 307M  & 24 & 16 & Vision Transformer; patch-based image tokenization (16×16), pre-trained on ImageNet for image classification tasks. \\ \midrule
$\text{LLaVA-1.5}_\text{7B}$ & 7B    & 32 & 32 & Multi-modal LLaMA variant integrating a visual encoder; capable of joint image-text understanding and generation. \\
\bottomrule
\end{tabular}
\label{tab:model_summary}
\end{table}

\subsection{Computing Resources} 
\label{Appendix_C3}
Our experimental setup leverages two RTX 4090 GPUs with 24GB memory for NLU tasks using BERT and for image classification tasks using ViT. Experiments involving LLMs such as LLaMA and multi-modal VLMs such as LLaVA were conducted on H100 GPU with 80GB memory. For the MM-Vet benchmark, we evaluated model responses using the OpenAI API to handle open-ended answer scoring.

\subsection{Dataset Statistics} \label{Dataset Statistics}
\label{Appendix_C4}
\subsubsection{Natural Language Understanding Task} \label{Appendix_C41}
We present the dataset statistics of GLUE~\citep{wang2018glue} in Table~\ref{tab:glue_stat}.

\begin{table*}[ht]
\caption{Summary of the NLU benchmark.}
\label{tab:glue_stat}
\fontsize{8}{9}\selectfont 
\setlength\tabcolsep{2pt}
\renewcommand{\arraystretch}{1.1}
\centering
\begin{tabular}{@{}lrrrrll@{}}
\toprule
\multicolumn{7}{c}{\textbf{NLU Benchmark}}                                         \\ \midrule
\multicolumn{1}{c}{Dataset} &
  \multicolumn{1}{c}{\# Train} &
  \multicolumn{1}{c}{\# Valid} &
  \multicolumn{1}{c}{\# Test} &
  \multicolumn{1}{c}{\# Label} &
  \multicolumn{1}{c}{Task} &
  \multicolumn{1}{c}{Evaluation Metric} \\ \bottomrule
\multicolumn{7}{c}{Single-Sentence Classification (GLUE)}       \\ \midrule
CoLA  & 8,551   & 521   & 522    & 2 & Acceptability      & Matthews corr \\
SST-2 & 66,349  & 1,000 & 872    & 2 & Sentiment          & Accuracy      \\ \midrule
\multicolumn{7}{c}{Pairwise Text Classification (GLUE)}         \\ \midrule
MNLI  & 392,702 & 9,832 & 9,815  & 3 & NLI                & Accuracy      \\
RTE   & 2,490   & 138   & 139    & 2 & NLI                & Accuracy      \\
QQP   & 362,846 & 1,000 & 40,431 & 2 & Paraphrase         & Accuracy      \\
MRPC  & 3,668   & 204   & 204    & 2 & Paraphrase         & F1 score      \\
QNLI  & 103,743 & 1,000 & 5,463  & 2 & QA/NLI             & Accuracy      \\ \midrule
\multicolumn{7}{c}{Pairwise Text Classification (GLUE)}         \\ \midrule
STS-B & 5,749   & 750   & 750    & 1 & Similarity         & Pearson corr  \\ \bottomrule
\end{tabular}
\end{table*}

\subsubsection{Image Classification Task} \label{Appendix_C42}
Table~\ref{tab:cv_stat} lists dataset statistics for the image classification task in the Computer Vision (CV) domain.
\begin{table*}[ht]
\caption{Summary of the CV benchmark.}
\label{tab:cv_stat}
\footnotesize
\renewcommand{\arraystretch}{1.2}
\centering
\begin{tabular}{@{}lrrrrcc@{}}
\toprule
\multicolumn{7}{c}{\textbf{CV Benchmark}}                         \\ \midrule
\multicolumn{1}{c}{Dataset} &
  \multicolumn{1}{c}{\# Train} &
  \multicolumn{1}{c}{\# Valid} &
  \multicolumn{1}{c}{\# Test} &
  \multicolumn{1}{c}{\# Label} &
  Task &
  Evaluation Metric \\ \bottomrule
ImageNet1k      & 1,281,167 & 50,000 & 100,000 & 1,000 & Classification & Accuracy \\
CIFAR-100      & 45,000 & 5,000 & 10,000 & 100 & Classification & Accuracy \\
Fashion MNIST  & 54,000 & 6,000 & 10,000 & 10  & Classification & Accuracy \\
Oxford Flowers & 1,020  & 1,020   & 6,150  & 102 & Classification & Accuracy \\
\bottomrule
\end{tabular}
\end{table*}

\subsubsection{Reasoning Task} \label{Appendix_C43}
To evaluate the effectiveness of our pruning method on reasoning tasks, we use two benchmark datasets: \textbf{GSM8K}~\citep{cobbe2021training} and \textbf{MMLU}~\citep{hendrycks2020measuring}.

\textbf{GSM8K:} is a dataset of 8.5K high quality, linguistically diverse grade school math word problems. GSM8K supports the task of question answering on basic mathematical problems requiring multi-step reasoning.

\textbf{MMLU:} is a massive multitask benchmark consisting of multiple-choice questions spanning diverse domains, including the humanities, social sciences, and hard sciences. The benchmark covers 57 tasks such as elementary mathematics, U.S. history, computer science, and law, and requires models to possess broad world knowledge and strong problem-solving abilities to achieve high performance.

By using both GSM8K and MMLU, we evaluate our pruning methods from complementary perspectives: GSM8K assesses mathematical reasoning under multi-step computation, while MMLU measures the preservation of broad world knowledge and multi-domain problem-solving ability.

\subsubsection{Multi-modal Vision-Language Task} \label{Appendix_C44}
To evaluate the effectiveness of our pruning method on multi-modal vision-language models (VLMs), we use two benchmark datasets: \textbf{VizWiz-VQA}~\citep{gurari2018vizwiz} and \textbf{MM-Vet}~\citep{yu2023mm}. The evaluation was conducted using the $\text{LLaVA1.5}_\text{7B}$ model.

\textbf{VizWiz-VQA:} is designed for Visual Question Answering (VQA) in the context of assisting people who are blind. Each visual question originates from a real-world setting where blind users captured images and recorded spoken questions, and is accompanied by ten crowdsourced answers. The dataset poses two evaluation tasks: predicting the correct answer given an image and question, and detecting whether a question cannot be answered.

\textbf{MM-Vet:} is a benchmark intended to evaluate large multimodal models on complex tasks that require the integration of multiple vision-language capabilities. It defines six core VL skills and sixteen combinations of these skills, and employs an LLM-based evaluator to provide a unified scoring metric across diverse question types and answer formats. MM-Vet enables a systematic assessment of models’ generalization, reasoning, and open-ended answer generation abilities.

By using both VizWiz-VQA and MM-Vet, we comprehensively evaluate our pruning method across real-world visual questions, complex multimodal reasoning, and diverse answer styles, providing a thorough assessment of its impact on the overall quality of the pruned model. Note that our evaluation is conducted on a subset of the datasets. To illustrate the nature of the datasets used in our evaluation, we provide example entries from both VizWiz-VQA and MM-Vet.

\newpage
\vspace{-5mm}

\begin{figure} [H]
    \centering
    \includegraphics[width=0.78\linewidth]{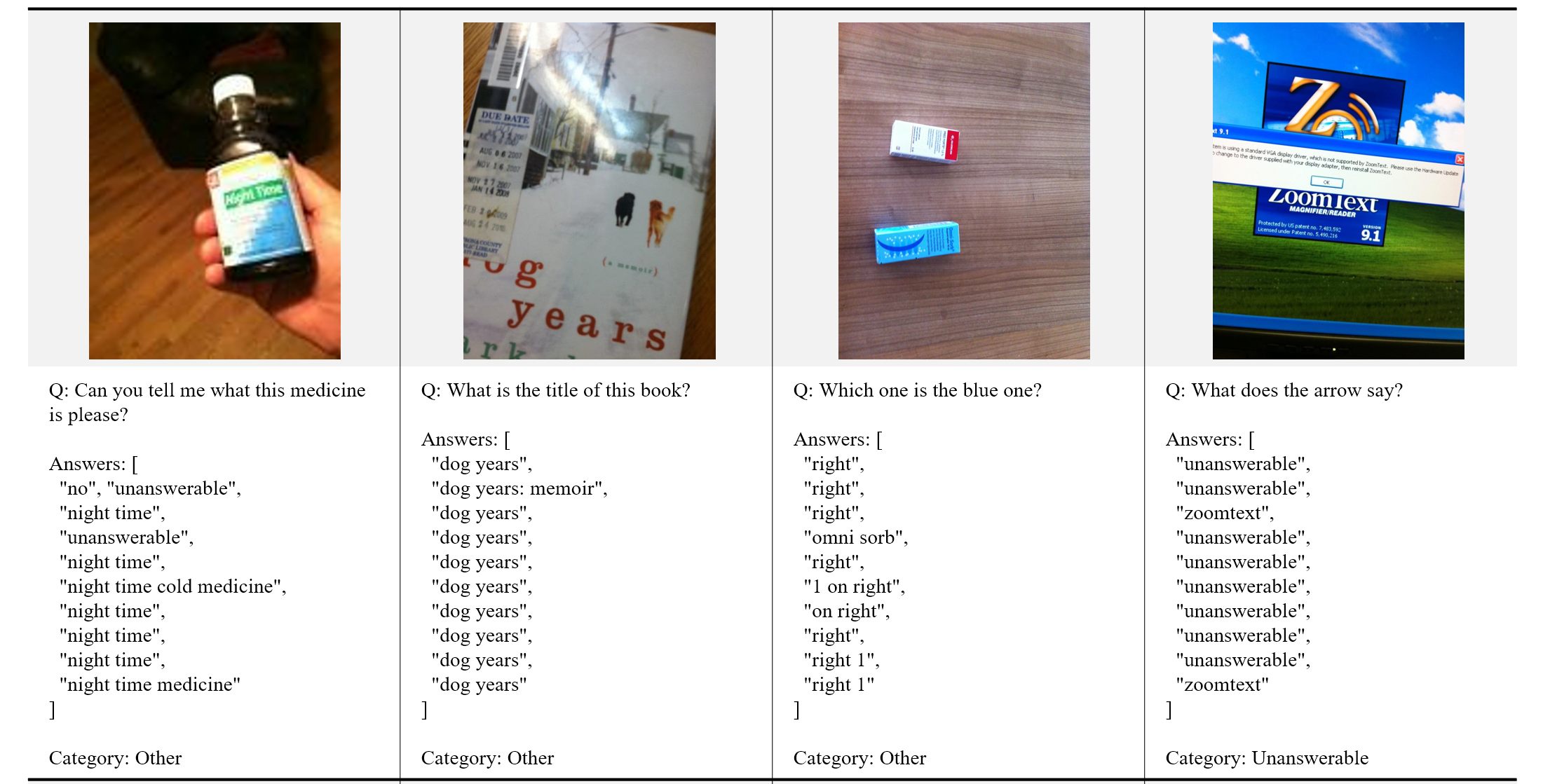}
\end{figure}
\vspace{-8.6mm}

\begin{figure} [H]
    \centering
    \includegraphics[width=0.78\linewidth]{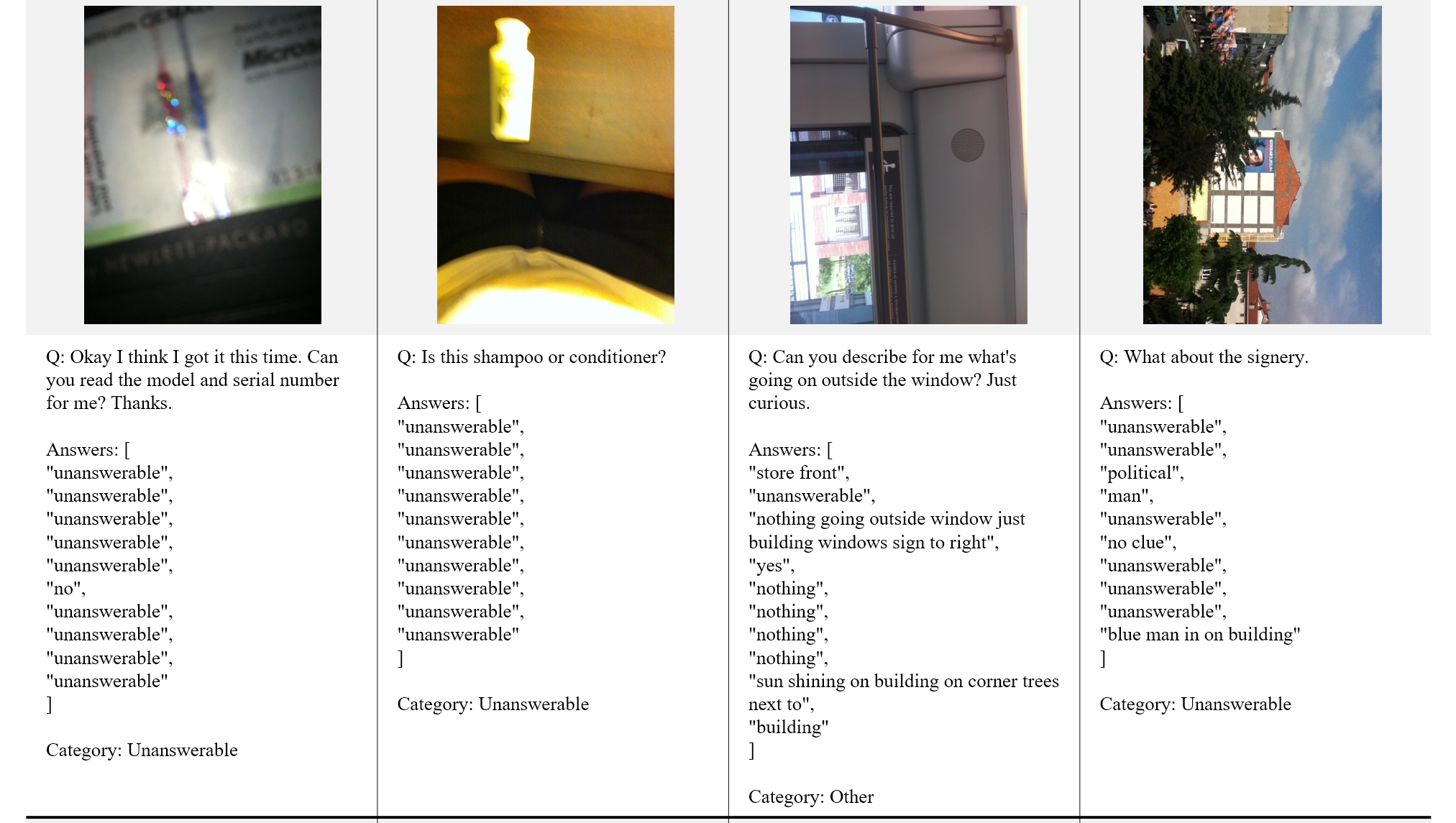}
\end{figure}
\vspace{-8.6mm}

\begin{figure} [H]
    \centering
    \includegraphics[width=0.78\linewidth]{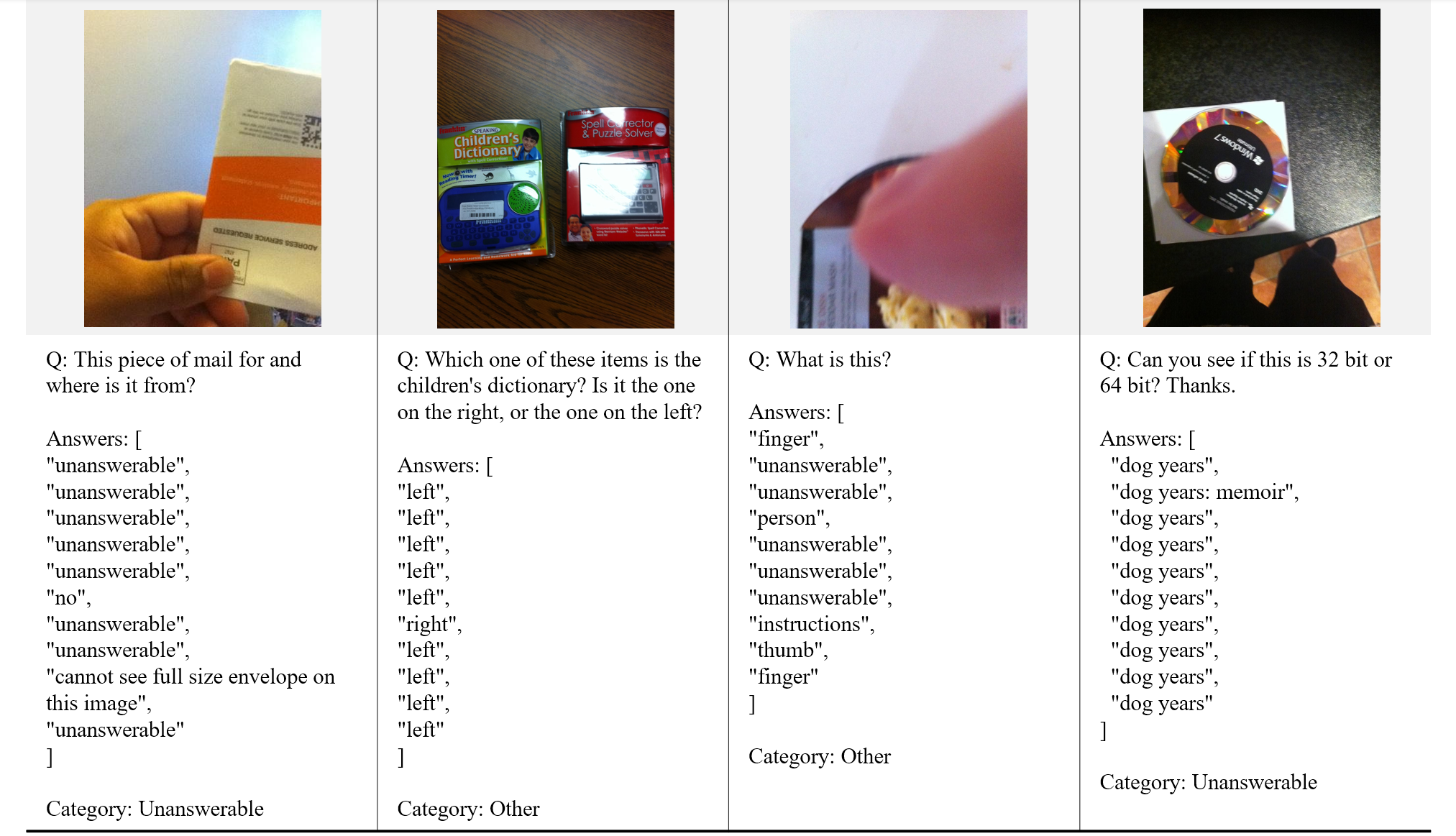}
    \caption{Examples from VizWiz-VQA showing visual questions asked by blind users and the corresponding answers from crowd workers. The examples include both questions that can be answered from the image and questions that cannot.}
\end{figure}

\newpage
\vspace{-5mm}

\begin{figure} [H]
    \centering
    \includegraphics[width=0.86\linewidth]{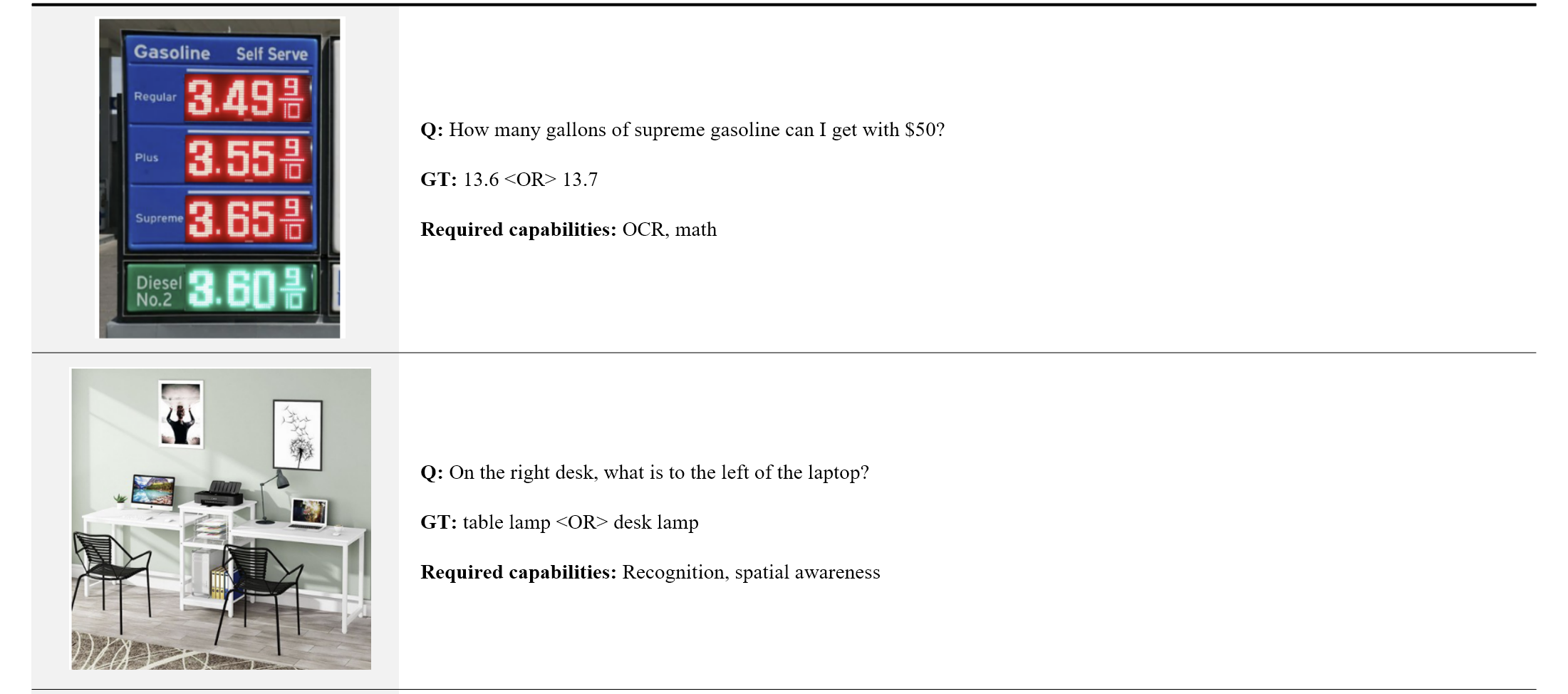}
\end{figure}
\vspace{-8.5mm}

\begin{figure} [H]
    \centering
    \includegraphics[width=0.86\linewidth]{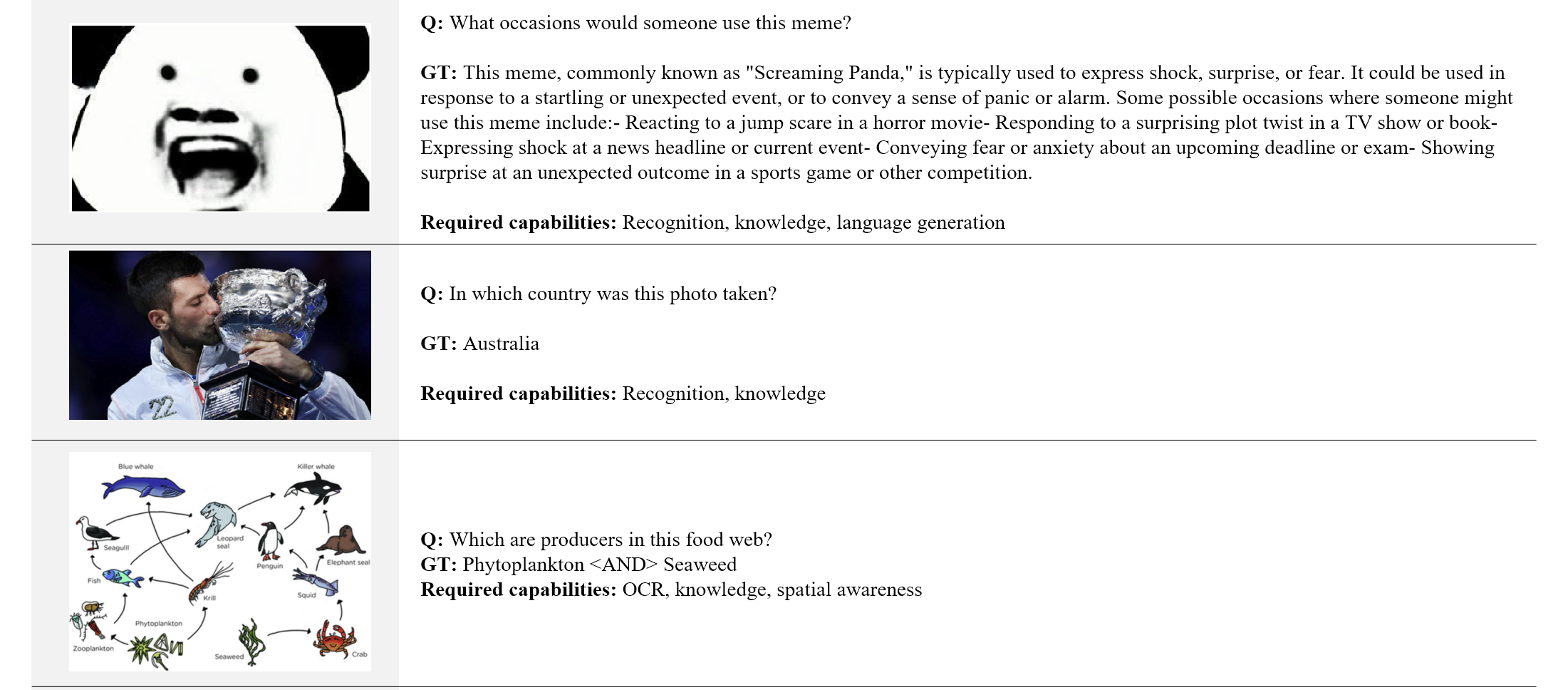}
\end{figure}
\vspace{-8.5mm}

\begin{figure} [H]
    \centering
    \includegraphics[width=0.86\linewidth]{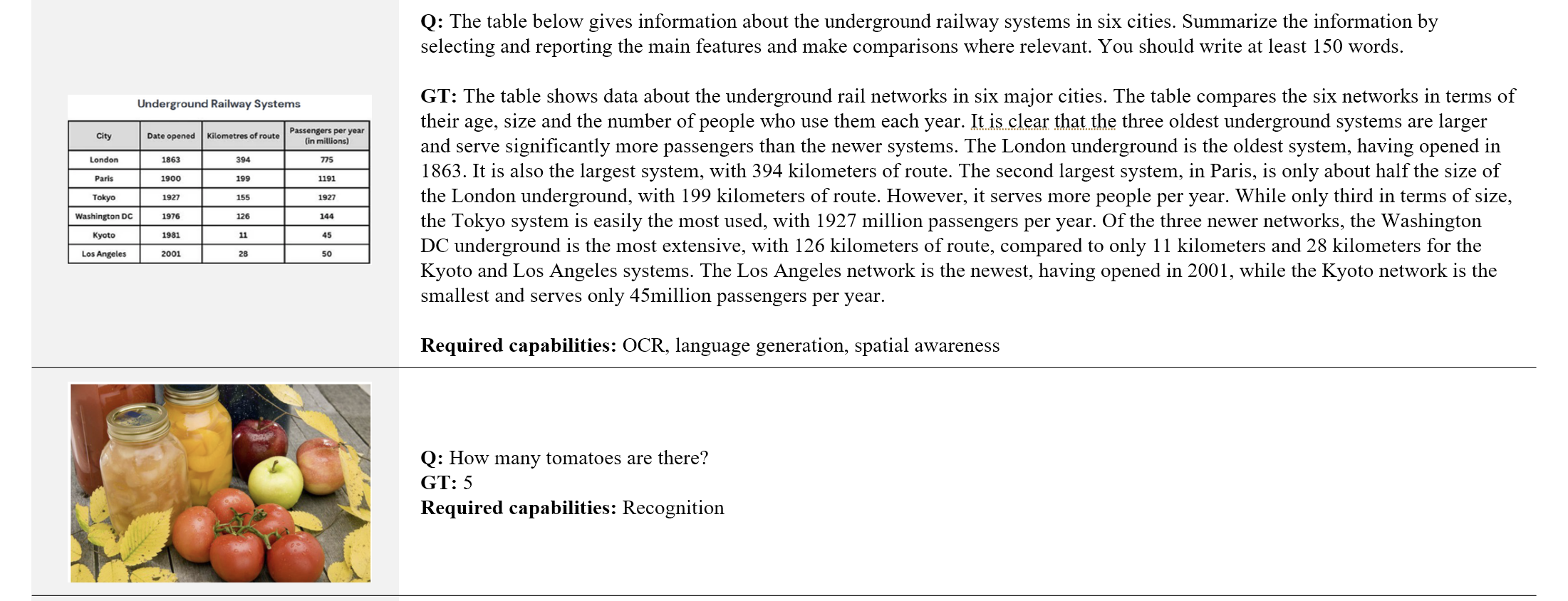}
\end{figure}

\vspace{-8.5mm}

\begin{figure} [H]
    \centering
    \includegraphics[width=0.86\linewidth]{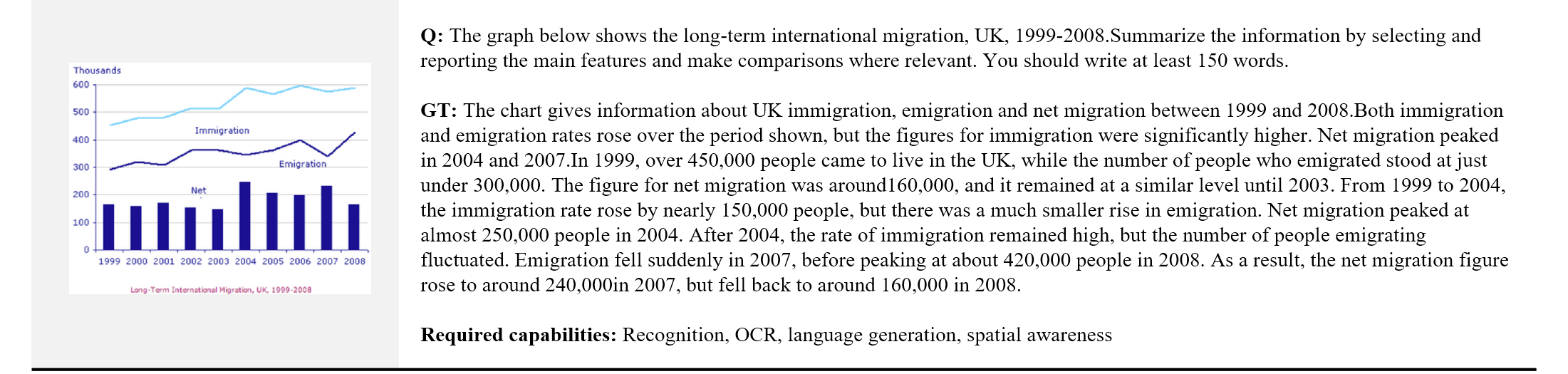}
    \caption{Eight example queries from the MM-Vet benchmark, each requiring different integrations of core vision–language capabilities to solve complicated multimodal tasks.}
\end{figure}

\newpage
\section{Additional Experimental Results}
\label{Appendix_D}
\subsection{Orthogonality Analysis}
\label{app:orthogonality_analysis}
\label{Appendix_D1}
We provide an empirical sanity check supporting the Lemma~\ref{thm:grad-orth}. Experiments use TinyBERT on SST-2 (\texttt{Vishnou/TinyBERT\_SST2}). For each head we compute layerwise-normalized HIS and AE scores, stack them into vectors \(u\) and \(v\), and form the centered versions \(\tilde u\) and \(\tilde v\) by subtracting each vector’s mean (a finite-sample proxy for projection onto the zero-sum subspace). The following sample statistics were obtained:
\[
\widehat{\mathrm{Cov}}(\tilde u,\tilde v)=0.030853,\qquad
\overline{u}=3.73\times 10^{-9},\qquad
\overline{v}=1.61\times 10^{-8}.
\]
Consequently,
\[
\widehat{\mathbb{E}}\bigl[\langle \tilde u,\,-\tilde v\rangle\bigr]
\;=\;-0.030853
\;=\;-\bigl(\widehat{\mathrm{Cov}}(\tilde u,\tilde v)+\overline{u}\,\overline{v}\bigr)
\quad(\text{up to numerical precision}).
\]
The covariance magnitude is small on this batch, indicating weak coupling between the two
directions and lending empirical support to the “\textbf{\textit{(near) uncorrelatedness}}’’ assumption. 

We further extend the orthogonality analysis across models, with results reported in Table~\ref{rebuttal_tab:tab_orthogonality_bert} and Table~\ref{rebuttal_tab:tab_orthogonality_llama}.

\begin{table}[H]
\centering
\renewcommand{\arraystretch}{1.1}
\caption{Orthogonality analysis using $\text{BERT}_\text{base}$ on the GLUE benchmark.}
\label{rebuttal_tab:tab_orthogonality_bert}
\fontsize{9}{10}\selectfont 
\setlength{\tabcolsep}{5pt}
\begin{tabular}{l|rrrr|r}
\toprule
\multicolumn{1}{c|}{\textbf{Task}} & \multicolumn{1}{c}{\textbf{E{[}$\tilde{\text{u}}${]}}} & \multicolumn{1}{c}{\textbf{E{[}$\tilde{\text{v}}${]}}} & \multicolumn{1}{c}{\textbf{Cov($\tilde{\text{u}}$,$\tilde{\text{v}}$)}} & \multicolumn{1}{c|}{\textbf{E{[}⟨$\tilde{\text{u}}$,$\tilde{\text{v}}$⟩{]}}} & \multicolumn{1}{c}{\textbf{Correlation}} \\ \midrule
COLA                              & 6.94E-03                              & 4.08E-01                              & -1.40E-05                             & 2.82E-03                                  & -1.54E-02                                \\
SST2                              & 6.94E-03                              & 4.42E-01                              & -3.60E-05                             & 3.03E-03                                  & -3.41E-02                                \\
MRPC                              & 6.94E-03                              & 5.27E-01                              & 2.41E-04                              & 3.90E-03                                  & 2.32E-01                                 \\
STSB                              & 6.94E-03                              & 4.30E-01                              & -1.96E-04                             & 2.79E-03                                  & -1.65E-01                                \\
QQP                               & 6.94E-03                              & 5.05E-01                              & 1.56E-04                              & 3.67E-03                                  & 1.95E-01                                 \\
MNLI                              & 6.94E-03                              & 4.89E-01                              & 1.73E-04                              & 3.57E-03                                  & 2.27E-01                                 \\
QNLI                              & 6.94E-03                              & 4.62E-01                              & 1.53E-04                              & 3.36E-03                                  & 2.23E-01                                 \\
RTE                               & 6.94E-03                              & 5.05E-01                              & 2.15E-04                              & 3.72E-03                                  & 2.88E-01                                \\ \bottomrule
\end{tabular}
\end{table}

\begin{table}[H]
\centering
\renewcommand{\arraystretch}{1.1}
\caption{Orthogonality analysis using $\text{LLaMA-2}_\text{7B}$ on 5 tasks.}
\label{rebuttal_tab:tab_orthogonality_llama}
\fontsize{9}{10}\selectfont 
\setlength{\tabcolsep}{5pt}
\begin{tabular}{c|rrrr|r}
\toprule
\multicolumn{1}{c|}{\textbf{Task}} & \multicolumn{1}{c}{\textbf{E{[}$\tilde{\text{u}}${]}}} & \multicolumn{1}{c}{\textbf{E{[}$\tilde{\text{v}}${]}}} & \multicolumn{1}{c}{\textbf{Cov($\tilde{\text{u}}$,$\tilde{\text{v}}$)}} & \multicolumn{1}{c|}{\textbf{E{[}⟨$\tilde{\text{u}}$,$\tilde{\text{v}}$⟩{]}}} & \multicolumn{1}{c}{\textbf{Correlation}} \\ \midrule
HellaSwag                     & 9.77E-04                              & 7.56E-01                              & -1.47E-04                             & 5.92E-04                                  & -1.44E-01                                \\
Winogrande                    & 9.77E-04                              & 7.66E-01                              & -1.85E-04                             & 5.63E-04                                  & -1.66E-01                                \\
ARC-e                         & 9.77E-04                              & 7.76E-01                              & -1.42E-04                             & 6.16E-04                                  & -1.44E-01                                \\
ARC-c                         & 9.77E-04                              & 7.71E-01                              & -1.27E-04                             & 6.26E-04                                  & -1.33E-01                                \\
OBQA                          & 9.77E-04                              & 8.01E-01                              & -1.29E-04                             & 6.53E-04                                  & -1.61E-01  \\ \bottomrule                              
\end{tabular}
\end{table}

\clearpage
\subsection{Seed Sensitivity and Statistical Significance}
\label{app:seed-sensitivity}
\label{Appendix_D2}
We evaluate the seed sensitivity of HIS and HIES, with results reported in Table~\ref{rebuttal_tab:tab_seed_bert} and Table~\ref{rebuttal_tab:tab_seed_llama} .

\begin{table}[H]
\centering
\captionsetup{name=Table R.\!\!}
\renewcommand{\arraystretch}{1.1}
\caption{Accuracy and standard deviation in accuracy across 5 random seeds for $\text{BERT}_\text{base}$ on GLUE benchamrk.}
\label{rebuttal_tab:tab_seed_bert}
\fontsize{8}{10}\selectfont 
\setlength{\tabcolsep}{4pt}
\begin{tabular}{cc|cc|cc}
\toprule
\multicolumn{1}{l}{}   &  Pruning Ratio (\%)  & \multicolumn{2}{c|}{HIS}            & \multicolumn{2}{c}{HIES (ours)}           \\
\multicolumn{1}{l}{}   &    & Accuracy (\%) & Standard Deviation & Accuracy (\%) & Standard Deviation \\ \midrule
\multirow{5}{*}{CoLA}  & 10 & 74.18         & 1.60E-02           & 75.85         & 1.21E-02           \\
                       & 20 & 74.18         & 5.44E-04           & 73.04         & 1.09E-02           \\
                       & 30 & 72.70         & 6.91E-03           & 75.17         & 2.82E-03           \\
                       & 40 & 70.28         & 5.52E-03           & 70.20         & 3.68E-02           \\
                       & 50 & 60.37         & 2.18E-02           & 71.17         & 5.29E-03           \\ \midrule
\multirow{5}{*}{MNLI}  & 10 & 84.13         & 0.00E+00           & 84.06         & 1.64E-01           \\
                       & 20 & 82.93         & 9.13E-02           & 82.87         & 0.00E+00           \\
                       & 30 & 82.40         & 0.00E+00           & 82.29         & 3.58E-01           \\
                       & 40 & 81.46         & 2.39E-01           & 81.39         & 1.98E-01           \\
                       & 50 & 79.47         & 4.34E-01           & 78.87         & 3.27E-01           \\ \midrule
\multirow{5}{*}{MRPC}  & 10 & 90.29         & 4.58E-03           & 90.97         & 8.04E-04           \\
                       & 20 & 89.69         & 4.50E-03           & 89.17         & 3.55E-03           \\
                       & 30 & 88.79         & 2.36E-03           & 89.52         & 3.69E-03           \\
                       & 40 & 88.02         & 7.41E-03           & 87.12         & 9.06E-03           \\
                       & 50 & 86.85         & 1.65E-02           & 87.19         & 1.66E-02           \\ \midrule
\multirow{5}{*}{QNLI}  & 10 & 90.55         & 8.69E-02           & 90.51         & 1.34E-01           \\
                       & 20 & 90.23         & 3.48E-01           & 90.05         & 2.73E-01           \\
                       & 30 & 88.33         & 7.47E-01           & 88.15         & 7.43E-01           \\
                       & 40 & 84.75         & 1.45E+00           & 85.02         & 8.90E-01           \\
                       & 50 & 80.81         & 3.51E+00           & 82.83         & 1.74E+00           \\ \midrule
\multirow{5}{*}{QQP}   & 10 & 90.68         & 2.33E-02           & 91.26         & 8.00E-03           \\
                       & 20 & 90.56         & 1.52E-01           & 90.56         & 7.31E-02           \\
                       & 30 & 89.84         & 3.67E-01           & 89.72         & 3.45E-01           \\
                       & 40 & 88.07         & 1.84E-01           & 87.96         & 2.54E-01           \\
                       & 50 & 85.33         & 9.93E-01           & 85.90         & 6.97E-01           \\ \midrule
\multirow{5}{*}{RTE}   & 10 & 72.42         & 2.89E-01           & 72.13         & 2.70E-01           \\
                       & 20 & 71.91         & 5.78E-01           & 70.76         & 9.69E-01           \\
                       & 30 & 71.26         & 1.18E+00           & 71.34         & 1.47E+00           \\
                       & 40 & 67.94         & 2.10E+00           & 66.79         & 1.92E+00           \\
                       & 50 & 65.99         & 1.36E+00           & 65.42         & 1.56E+00           \\ \midrule
\multirow{5}{*}{SST-2} & 10 & 91.86         & 1.62E-01           & 92.64         & 4.59E-02           \\
                       & 20 & 91.77         & 2.22E-01           & 92.11         & 1.69E-01           \\
                       & 30 & 91.19         & 3.03E-01           & 92.20         & 1.92E-01           \\
                       & 40 & 88.67         & 1.62E+00           & 90.96         & 7.16E-01           \\
                       & 50 & 86.31         & 1.35E+00           & 90.55         & 1.24E+00           \\ \midrule
\multirow{5}{*}{STS-B} & 10 & 87.85         & 1.88E-03           & 87.95         & 3.37E-03           \\
                       & 20 & 87.33         & 3.52E-03           & 87.35         & 2.06E-04           \\
                       & 30 & 86.82         & 2.15E-03           & 86.76         & 3.25E-03           \\
                       & 40 & 85.89         & 4.49E-03           & 85.87         & 4.41E-03           \\
                       & 50 & 83.38         & 8.95E-03           & 82.27         & 1.37E-02          \\ \midrule
\multirow{5}{*}{Avg.}  & 10 & 85.25         & 7.30E-02           & 85.67         & 7.98E-02           \\
                       & 20 & 84.82         & 1.75E-01           & 84.49         & 1.87E-01           \\
                       & 30 & 83.92         & 3.26E-01           & 84.39         & 3.90E-01           \\
                       & 40 & 81.89         & 7.00E-01           & 81.91         & 5.04E-01           \\
                       & 50 & 78.56         & 9.62E-01           & 80.53         & 6.99E-01          \\ \bottomrule
\end{tabular}
\end{table}

\clearpage
\begin{table}[H]
\centering
\captionsetup{name=Table R.\!\!}
\renewcommand{\arraystretch}{1.1}
\caption{Accuracy and standard deviation in accuracy across 5 random seeds for $\text{LLaMA-2}_\text{7B}$ on HellaSwag, Winogrande, and ARC-c.}
\label{rebuttal_tab:tab_seed_llama}
\fontsize{8}{10}\selectfont 
\setlength{\tabcolsep}{4pt}
\begin{tabular}{cc|cc|cc}
\toprule
\multicolumn{1}{l}{}        & Pruning Ratio (\%) & \multicolumn{2}{c|}{HIS}           & \multicolumn{2}{c}{HIES (ours)}    \\
\multicolumn{1}{l}{}        &                    & Accuracy (\%) & Standard Deviation & Accuracy (\%) & Standard Deviation  \\ \midrule
\multirow{5}{*}{HellaSwag}  & 10 & 53.63         & 5.36E-01 
& 53.98   {\fontsize{5pt}{6pt}\selectfont\textcolor{blue}{0.66\%$\uparrow$}}        
& 5.40E-01 \\
                            & 20 & 51.70         & 5.17E-01 
                            & 52.36       {\fontsize{5pt}{6pt}\selectfont\textcolor{blue}{1.28\%$\uparrow$}}       
                            & 5.24E-01 \\
                            & 30 & 49.00         & 4.90E-01 
                            & 49.84       {\fontsize{5pt}{6pt}\selectfont\textcolor{blue}{1.71\%$\uparrow$}}       
                            & 4.98E-01 \\
                            & 40 & 45.27         & 4.53E-01 
                            & 46.27       {\fontsize{5pt}{6pt}\selectfont\textcolor{blue}{2.20\%$\uparrow$}}       
                            & 4.63E-01 \\
                            & 50 & 37.01         & 3.70E-01 
                            & 36.13       {\fontsize{5pt}{6pt}\selectfont\textcolor{blue}{2.37\%$\downarrow$}}       
                            & 3.61E-01 \\ \midrule
\multirow{5}{*}{Winogrande} & 10 & 66.38         & 6.64E-01 
& 67.01  {\fontsize{5pt}{6pt}\selectfont\textcolor{blue}{0.95\%$\uparrow$}}            
& 6.70E-01 \\
                            & 20 & 64.07         & 6.41E-01 
                            & 66.27         {\fontsize{5pt}{6pt}\selectfont\textcolor{blue}{3.42\%$\uparrow$}}     
                            & 6.63E-01 \\
                            & 30 & 63.03         & 6.30E-01 
                            & 64.67         {\fontsize{5pt}{6pt}\selectfont\textcolor{blue}{2.60\%$\uparrow$}}     
                            & 6.47E-01 \\
                            & 40 & 60.16         & 6.02E-01 
                            & 61.06        {\fontsize{5pt}{6pt}\selectfont\textcolor{blue}{1.50\%$\uparrow$}}      
                            & 6.11E-01 \\
                            & 50 & 56.16         & 5.62E-01 
                            & 55.83        {\fontsize{5pt}{6pt}\selectfont\textcolor{blue}{0.59\%$\downarrow$}}      
                            & 5.58E-01 \\ \midrule
\multirow{5}{*}{ARC-c}      & 10 & 33.15         & 3.71E-03 
& 35.05  {\fontsize{5pt}{6pt}\selectfont\textcolor{blue}{5.73\%$\uparrow$}}            
& 7.03E-03 \\
                            & 20 & 33.15         & 5.57E-03 
                            & 36.27       {\fontsize{5pt}{6pt}\selectfont\textcolor{blue}{9.41\%$\uparrow$}}       
                            & 0.00E+00 \\
                            & 30 & 32.34         & 1.86E-03 
                            & 35.46       {\fontsize{5pt}{6pt}\selectfont\textcolor{blue}{9.64\%$\uparrow$}}       
                            & 5.67E-03 \\
                            & 40 & 29.69         & 1.86E-03 
                            & 34.24       {\fontsize{5pt}{6pt}\selectfont\textcolor{blue}{15.30\%$\uparrow$}}       
                            & 1.15E-02 \\
                            & 50 & 25.29         & 7.43E-03 
                            & 28.95       {\fontsize{5pt}{6pt}\selectfont\textcolor{blue}{14.48\%$\uparrow$}}       
                            & 4.22E-02 \\ \midrule
\multirow{5}{*}{Avg.}       & 10 & 51.05         & 5.70E-03 
& 52.01  {\fontsize{5pt}{6pt}\selectfont\textcolor{blue}{1.88\%$\uparrow$}}            
& 5.06E-03 \\
                            & 20 & 49.64         & 5.12E-03 
                            & 51.63       {\fontsize{5pt}{6pt}\selectfont\textcolor{blue}{4.01\%$\uparrow$}}       
                            & 2.72E-03 \\
                            & 30 & 48.12         & 5.30E-03 
                            & 49.99       {\fontsize{5pt}{6pt}\selectfont\textcolor{blue}{3.88\%$\uparrow$}}       
                            & 6.11E-03 \\
                            & 40 & 45.04         & 5.49E-03 
                            & 47.19        {\fontsize{5pt}{6pt}\selectfont\textcolor{blue}{4.76\%$\uparrow$}}      
                            & 8.82E-03 \\
                            & 50 & 39.49         & 9.20E-03 
                            & 40.30       {\fontsize{5pt}{6pt}\selectfont\textcolor{blue}{2.07\%$\uparrow$}}       
                            & 2.17E-02  
\\ \bottomrule
\end{tabular}
\end{table}

\clearpage
\subsection{Effect of Calibration Dataset Size}
\label{Appendix_D3}
\label{app:calibration}
We analyze the effect of calibration dataset size on the performance and stability of HIES.
Tables~\ref{rebuttal_tab:tab_calib_size_bert} and~\ref{rebuttal_tab:tab_calib_size_llama}
report accuracy and standard deviation across calibration sizes ranging from 1 to 1024
samples for $\text{BERT}_\text{base}$ and $\text{LLaMA-2}_\text{7B}$, respectively. Across both model families, HIES exhibits stable performance once a modest number of
calibration samples is used. In particular, increasing the calibration set size beyond small values yields diminishing returns in accuracy, while the variance across random seeds remains low. These results indicate that HIES does not require large calibration datasets to obtain reliable head-level statistics, supporting its practical applicability in low-cost
calibration settings.

\begin{table}[H]
\centering
\captionsetup{name=Table R.\!\!}
\renewcommand{\arraystretch}{1.1}
\caption{Accuracy and standard deviation in accuracy for different calibration dataset sizes (1, 16, 32, 64, 128, 512, 1024) for $\text{BERT}_\text{base}$ on GLUE Benchmarks. Note that we use a default size of 32 in the main results.}
\label{rebuttal_tab:tab_calib_size_bert}
\fontsize{8}{10}\selectfont 
\setlength{\tabcolsep}{4pt}
\begin{tabular}{cc|cc|cc}
\toprule
\multicolumn{1}{l}{}   & Pruning Ratio (\%)  & \multicolumn{2}{c|}{HIS}            & \multicolumn{2}{c}{HIES (ours)}           \\
\multicolumn{1}{l}{}   &    & Accuracy (\%) & Standard Deviation & Accuracy (\%) & Standard Deviation \\ \midrule
\multirow{3}{*}{CoLA}  & 10 & 74.46         & 1.33E-02           & 75.71         & 1.34E-02           \\
                       & 30 & 74.09         & 8.40E-03           & 75.10         & 3.04E-03           \\
                       & 50 & 62.01         & 3.37E-02           & 71.04         & 1.13E-02           \\ \midrule
\multirow{3}{*}{MNLI}  & 10 & 84.13         & 0.00E+00           & 84.13         & 0.00E+00           \\
                       & 30 & 82.40         & 0.00E+00           & 82.35         & 3.48E-01           \\
                       & 50 & 78.93         & 5.72E-01           & 78.65         & 1.98E-01           \\ \midrule
\multirow{3}{*}{MRPC}  & 10 & 90.52         & 2.82E-03           & 91.05         & 2.08E-03           \\
                       & 30 & 89.12         & 5.97E-03           & 89.61         & 2.94E-03           \\
                       & 50 & 86.45         & 1.55E-02           & 86.92         & 1.44E-02           \\ \midrule
\multirow{3}{*}{QNLI}  & 10 & 90.51         & 9.57E-02           & 90.54         & 1.25E-01           \\
                       & 30 & 88.25         & 5.97E-01           & 88.01         & 5.53E-01           \\
                       & 50 & 80.50         & 2.96E+00           & 81.97         & 1.39E+00           \\ \midrule
\multirow{3}{*}{QQP}   & 10 & 90.74         & 7.25E-02           & 91.24         & 4.16E-02           \\
                       & 30 & 90.12         & 2.31E-01           & 90.01         & 2.37E-01           \\
                       & 50 & 84.75         & 6.91E-01           & 85.60         & 4.29E-01           \\ \midrule
\multirow{3}{*}{RTE}   & 10 & 72.36         & 4.09E-01           & 72.25         & 3.25E-01           \\
                       & 30 & 71.17         & 1.13E+00           & 70.81         & 1.13E+00           \\
                       & 50 & 65.45         & 1.14E+00           & 65.50         & 2.13E+00           \\ \midrule
\multirow{3}{*}{SST-2} & 10 & 91.79         & 1.30E-01           & 92.56         & 1.68E-01           \\
                       & 30 & 90.78         & 4.53E-01           & 92.12         & 2.06E-01           \\
                       & 50 & 85.76         & 2.30E+00           & 90.33         & 1.16E+00           \\ \midrule
\multirow{3}{*}{STS-B} & 10 & 87.98         & 7.08E-04           & 87.96         & 2.88E-03           \\
                       & 30 & 86.87         & 2.03E-03           & 86.75         & 2.64E-03           \\
                       & 50 & 83.53         & 6.39E-03           & 82.80         & 1.37E-02           \\ \midrule
\multirow{3}{*}{Avg.}  & 10 & 85.31         & 9.06E-02           & 85.68         & 8.48E-02           \\
                       & 30 & 84.10         & 3.04E-01           & 84.34         & 3.10E-01           \\
                       & 50 & 78.42         & 9.63E-01           & 80.35         & 6.68E-01          \\ \midrule
\end{tabular}
\end{table}

\clearpage
\begin{table}[H]
\centering
\captionsetup{name=Table R.\!\!}
\renewcommand{\arraystretch}{1.1}
\caption{Accuracy and standard deviation in accuracy for different calibration dataset sizes (1, 16, 32, 64, 128, 512, 1024) for $\text{LLaMA-2}_\text{7B}$ on HellaSwag, Winogrande, and ARC-c. Note that we use a default size of 32 in the main results.}
\label{rebuttal_tab:tab_calib_size_llama}
\fontsize{8}{10}\selectfont 
\setlength{\tabcolsep}{4pt}
\begin{tabular}{cc|cc|cc}
\toprule
\multicolumn{1}{l}{}        & Pruning Ratio (\%) & \multicolumn{2}{c|}{HIS}           & \multicolumn{2}{c}{HIES (ours)}    \\
\multicolumn{1}{l}{}        &                    & Accuracy (\%) & Standard Deviation & Accuracy (\%) & Standard Deviation  \\ \midrule
\multirow{6}{*}{HellaSwag}  & 10                 & 53.54          & 1.57E-03         
    & 54.09 {\fontsize{5pt}{6pt}\selectfont\textcolor{blue}{1.02\%$\uparrow$}}          
    & 1.99E-03         \\
                            & 20                 & 51.44          & 3.51E-03         
    & 52.37 {\fontsize{5pt}{6pt}\selectfont\textcolor{blue}{1.82\%$\uparrow$}}    
    & 1.93E-03         \\
                            & 30                 & 49.09          & 4.54E-03        
    & 50.22 {\fontsize{5pt}{6pt}\selectfont\textcolor{blue}{2.30\%$\uparrow$}}   
    & 1.66E-03         \\
                            & 40                 & 44.66          & 2.80E-03         
    & 45.59  {\fontsize{5pt}{6pt}\selectfont\textcolor{blue}{2.09\%$\uparrow$}}  
    & 4.20E-03         \\
                            & 50                 & 34.91          & 3.60E-03         
    & 34.43 {\fontsize{5pt}{6pt}\selectfont\textcolor{blue}{1.36\%$\downarrow$}}   
    & 8.42E-03         \\
                            & 60                 & 28.46          & 4.47E-03         
    & 28.91 {\fontsize{5pt}{6pt}\selectfont\textcolor{blue}{1.57\%$\uparrow$}}  
    & 2.64E-03         \\ \midrule
\multirow{6}{*}{Winogrande} & 10                 & 65.87          & 5.51E-03         
    & 67.42 {\fontsize{5pt}{6pt}\selectfont\textcolor{blue}{2.35\%$\uparrow$}}           
    & 3.14E-03         \\
                            & 20                 & 64.31          & 9.97E-03         
    & 66.42 {\fontsize{5pt}{6pt}\selectfont\textcolor{blue}{3.27\%$\uparrow$}}          
    & 4.20E-03         \\
                            & 30                 & 63.21          & 5.59E-03         
    & 65.42 {\fontsize{5pt}{6pt}\selectfont\textcolor{blue}{3.49\%$\uparrow$}}          
    & 5.50E-03         \\
                            & 40                 & 60.38          & 9.65E-03         
    & 61.47 {\fontsize{5pt}{6pt}\selectfont\textcolor{blue}{1.81\%$\uparrow$}}          
    & 5.67E-03         \\
                            & 50                 & 56.17          & 7.89E-03         
    & 56.66 {\fontsize{5pt}{6pt}\selectfont\textcolor{blue}{0.86\%$\uparrow$}}          
    & 5.77E-03         \\
                            & 60                 & 51.23          & 8.90E-03         
    & 51.33 {\fontsize{5pt}{6pt}\selectfont\textcolor{blue}{0.18\%$\uparrow$}}          
    & 1.06E-02         \\  \midrule
\multirow{6}{*}{ARC-c}      & 10                 & 34.51          & 1.66E-02         
    & 34.41  {\fontsize{5pt}{6pt}\selectfont\textcolor{blue}{0.29\%$\downarrow$}}         
    & 8.51E-03         \\
                            & 20                 & 33.83          & 6.95E-03         
    & 35.68   {\fontsize{5pt}{6pt}\selectfont\textcolor{blue}{5.46\%$\uparrow$}}        
    & 6.61E-03         \\
                            & 30                 & 33.02          & 7.73E-03         
    & 34.75    {\fontsize{5pt}{6pt}\selectfont\textcolor{blue}{5.24\%$\uparrow$}}       
    & 1.19E-02         \\
                            & 40                 & 31.25          & 1.50E-02         
    & 33.47   {\fontsize{5pt}{6pt}\selectfont\textcolor{blue}{7.10\%$\uparrow$}}        
    & 4.42E-03         \\
                            & 50                 & 26.10          & 1.58E-02         
    & 26.86   {\fontsize{5pt}{6pt}\selectfont\textcolor{blue}{2.92\%$\uparrow$}}        
    & 1.24E-02         \\
                            & 60                 & 22.17          & 2.37E-02         
    & 23.64   {\fontsize{5pt}{6pt}\selectfont\textcolor{blue}{6.65\%$\uparrow$}}        
    & 6.78E-03         \\ \midrule
\multirow{6}{*}{Avg.}       & 10                 & 51.31          & 7.90E-03         
    & 51.97   {\fontsize{5pt}{6pt}\selectfont\textcolor{blue}{1.29\%$\uparrow$}}        
    & 4.54E-03         \\
                            & 20                 & 49.86          & 6.81E-03         
    & 51.49 {\fontsize{5pt}{6pt}\selectfont\textcolor{blue}{3.27\%$\uparrow$}}          
    & 4.25E-03         \\
                            & 30                 & 48.44          & 5.95E-03         
    & 50.13 {\fontsize{5pt}{6pt}\selectfont\textcolor{blue}{3.49\%$\uparrow$}}          
    & 6.35E-03         \\
                            & 40                 & 45.43          & 9.17E-03         
    & 46.85 {\fontsize{5pt}{6pt}\selectfont\textcolor{blue}{3.12\%$\uparrow$}}          
    & 4.76E-03         \\
                            & 50                 & 39.06          & 9.11E-03         
    & 39.32 {\fontsize{5pt}{6pt}\selectfont\textcolor{blue}{0.66\%$\uparrow$}}          
    & 8.85E-03         \\
                            & 60                 & 33.95          & 1.23E-02         
    & 34.63  {\fontsize{5pt}{6pt}\selectfont\textcolor{blue}{1.98\%$\uparrow$}}         
    & 6.66E-03        
\\ \bottomrule
\end{tabular}
\end{table}

\newpage
\subsection{Heatmap of Importance Scores and Pruning Results}
\label{Appendix_D4}
\label{app:heatmap-analysis}

\vspace{-1mm}

\begin{figure} [H]
    \centering
    \includegraphics[width=0.87\linewidth]{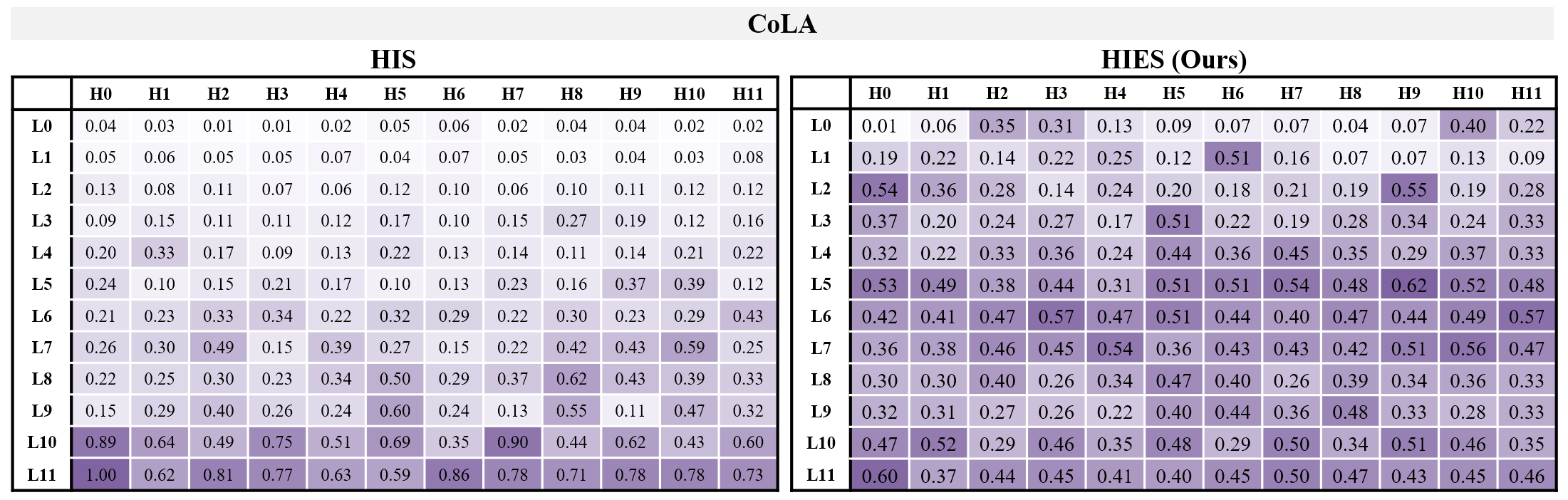}
\end{figure}

\vspace{-7mm}

\begin{figure} [H]
    \centering
    \includegraphics[width=0.87\linewidth]{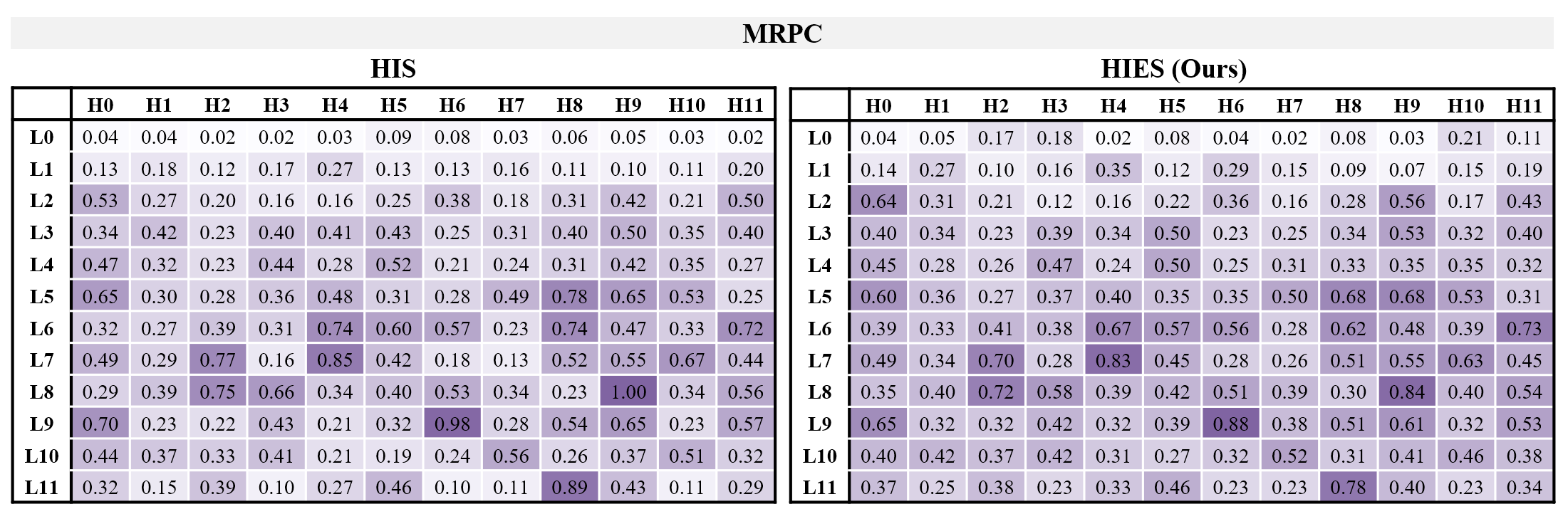}
\end{figure}

\vspace{-7mm}

\begin{figure} [H]
    \centering
    \includegraphics[width=0.87\linewidth]{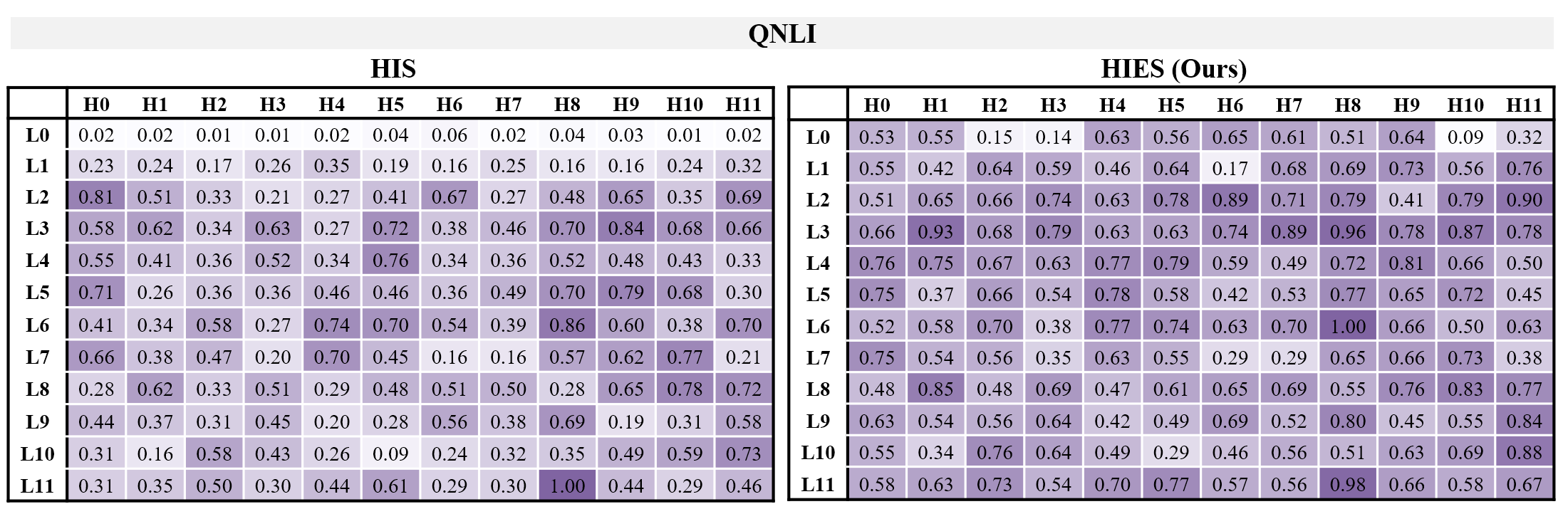}
\end{figure}

\vspace{-7mm}

\begin{figure} [H]
    \centering
    \includegraphics[width=0.87\linewidth]{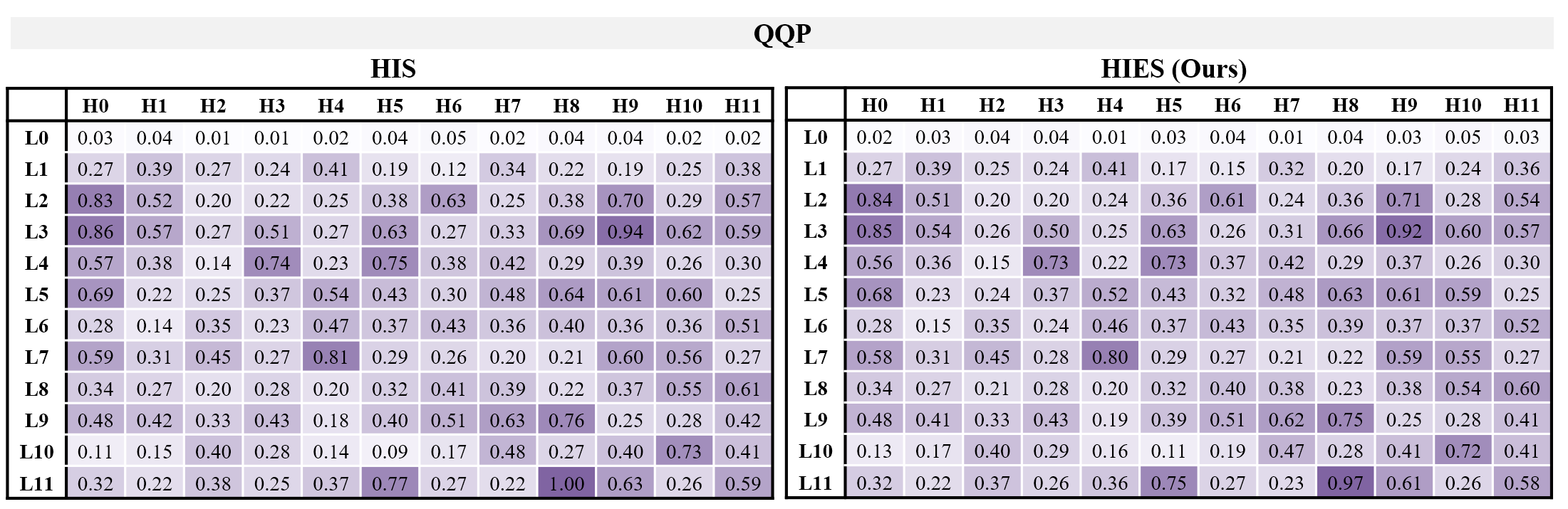}
    \caption{Heatmaps of head-importance scores across four GLUE tasks (CoLA, MRPC, QNLI, QQP). Left: HIS; Right: HIES (ours). Rows = layers (L0–L11); columns = heads (H0–H11).}

    \label{fig:Adx_Fig_1}
\end{figure}
\vspace{-4mm}
We analyze the pruning patterns and performance dynamics of HIS- and HIES-based methods across varying sparsity levels. This section highlights the fundamental distinctions in head selection strategies and the underlying mechanisms responsible for the observed performance inversion.

\newpage
\subsection{3D Analysis of Attention Head Importance Scores}
\label{app:heatmap-analysis_3d}
\label{Appendix_D5}

\vspace{-1mm}

\begin{figure} [H]
    \centering
    \includegraphics[width=0.72\linewidth]{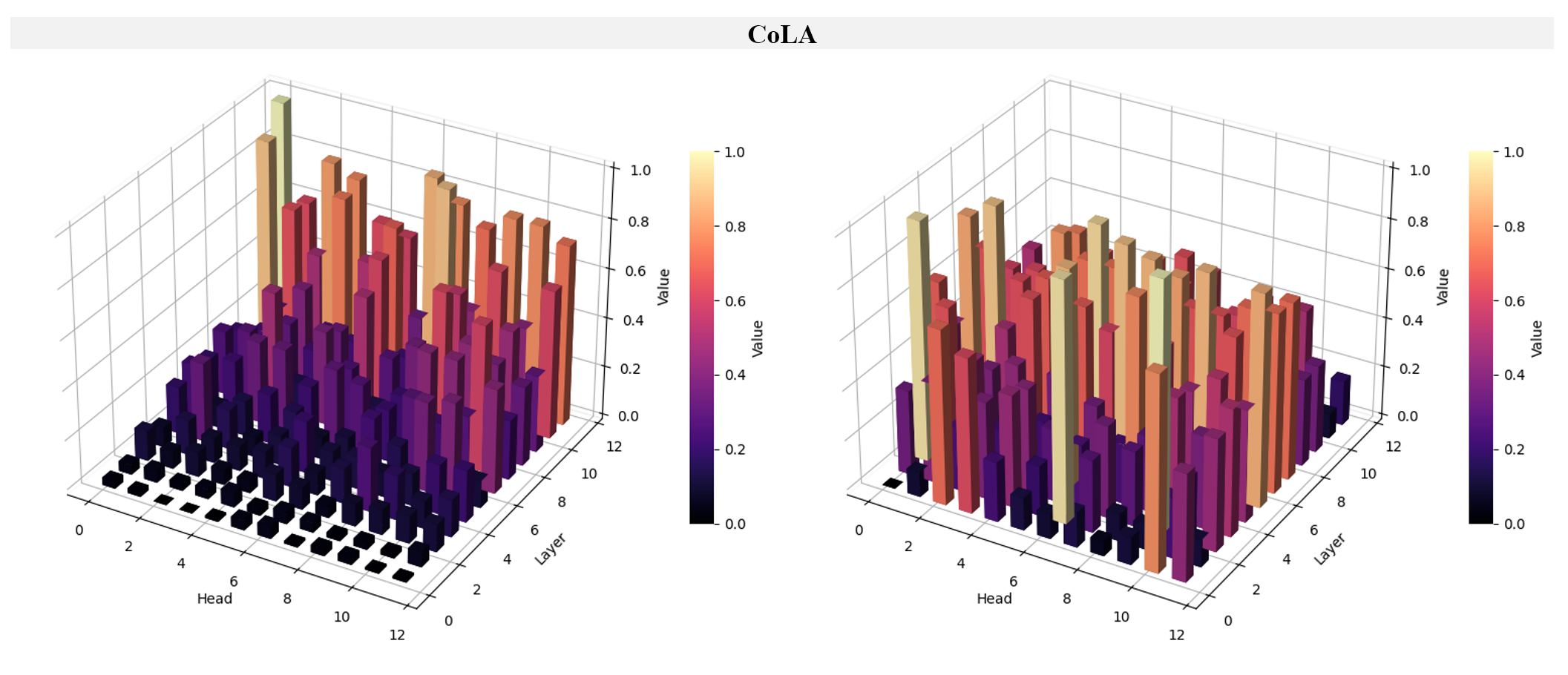}
\end{figure}

\vspace{-7mm}

\begin{figure} [H]
    \centering
    \includegraphics[width=0.72\linewidth]{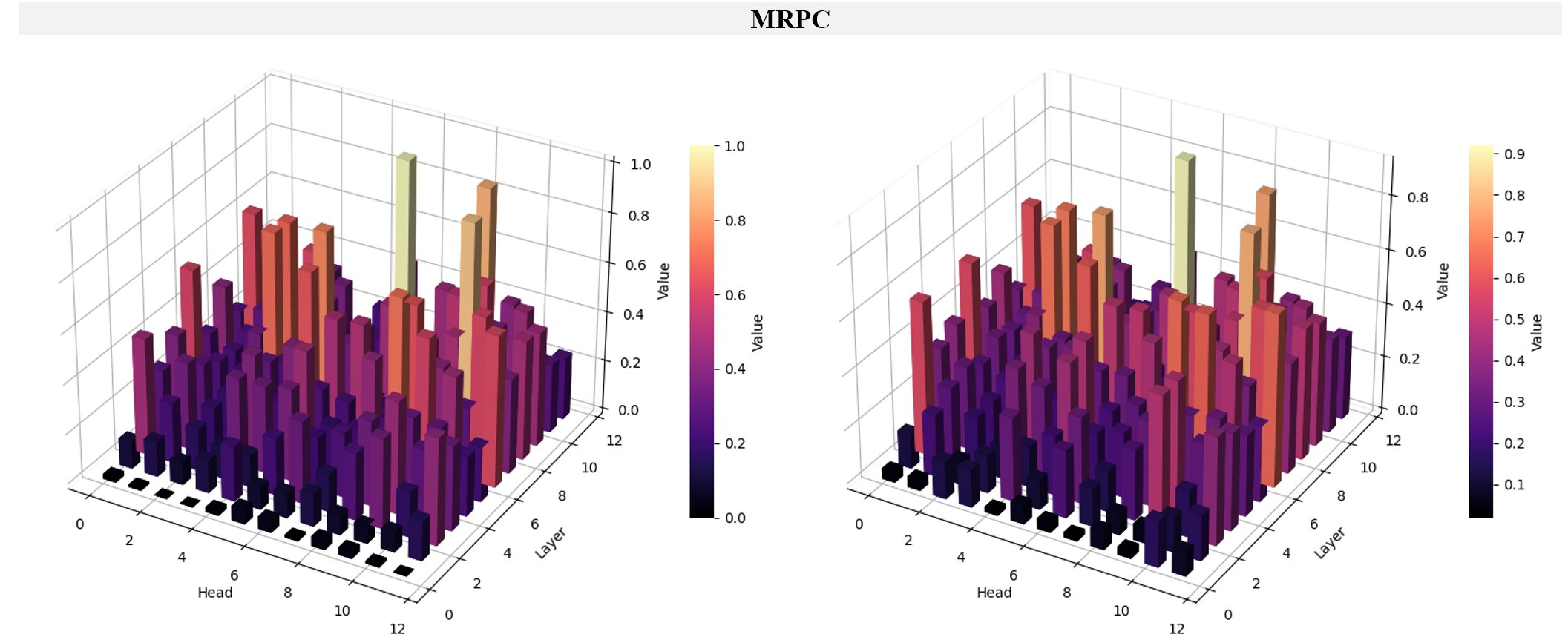}
\end{figure}

\vspace{-7mm}

\begin{figure} [H]
    \centering
    \includegraphics[width=0.72\linewidth]{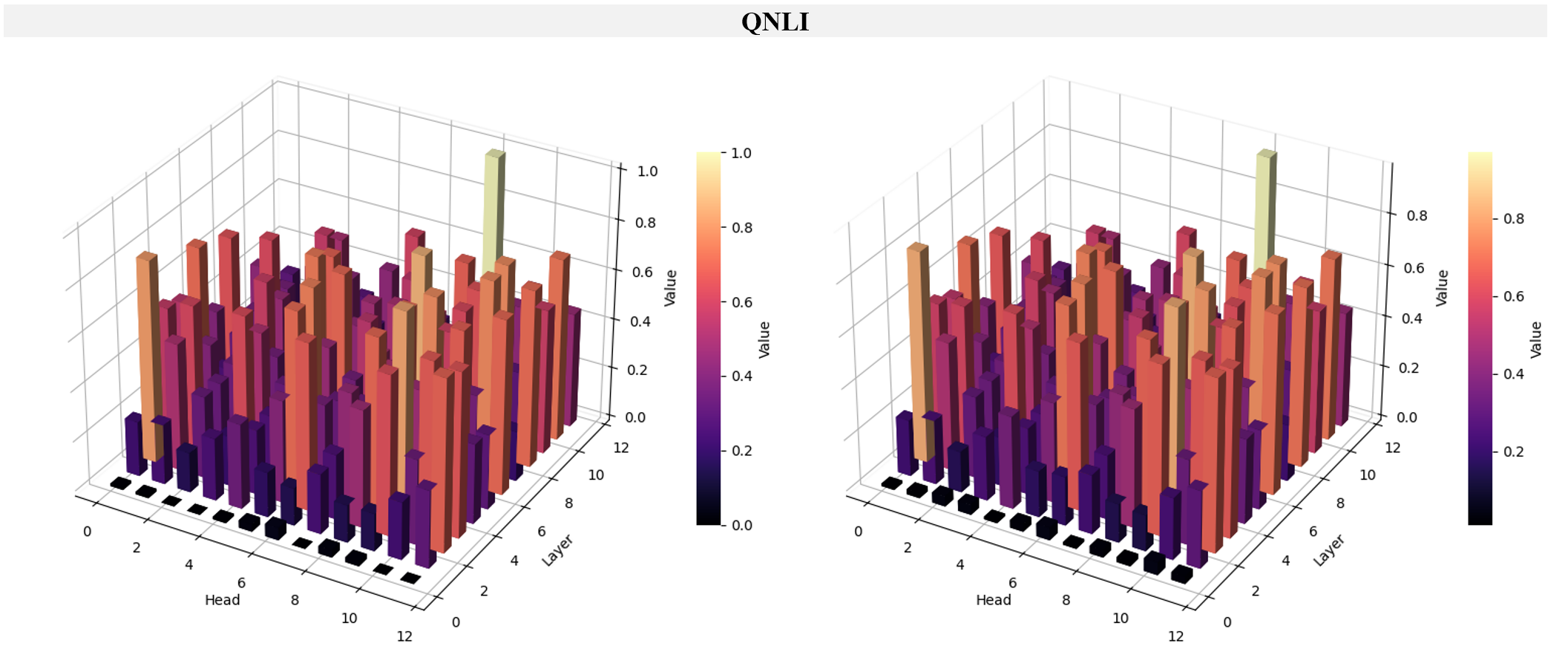}
\end{figure}

\vspace{-7mm}

\begin{figure} [H]
    \centering
    \includegraphics[width=0.72\linewidth]{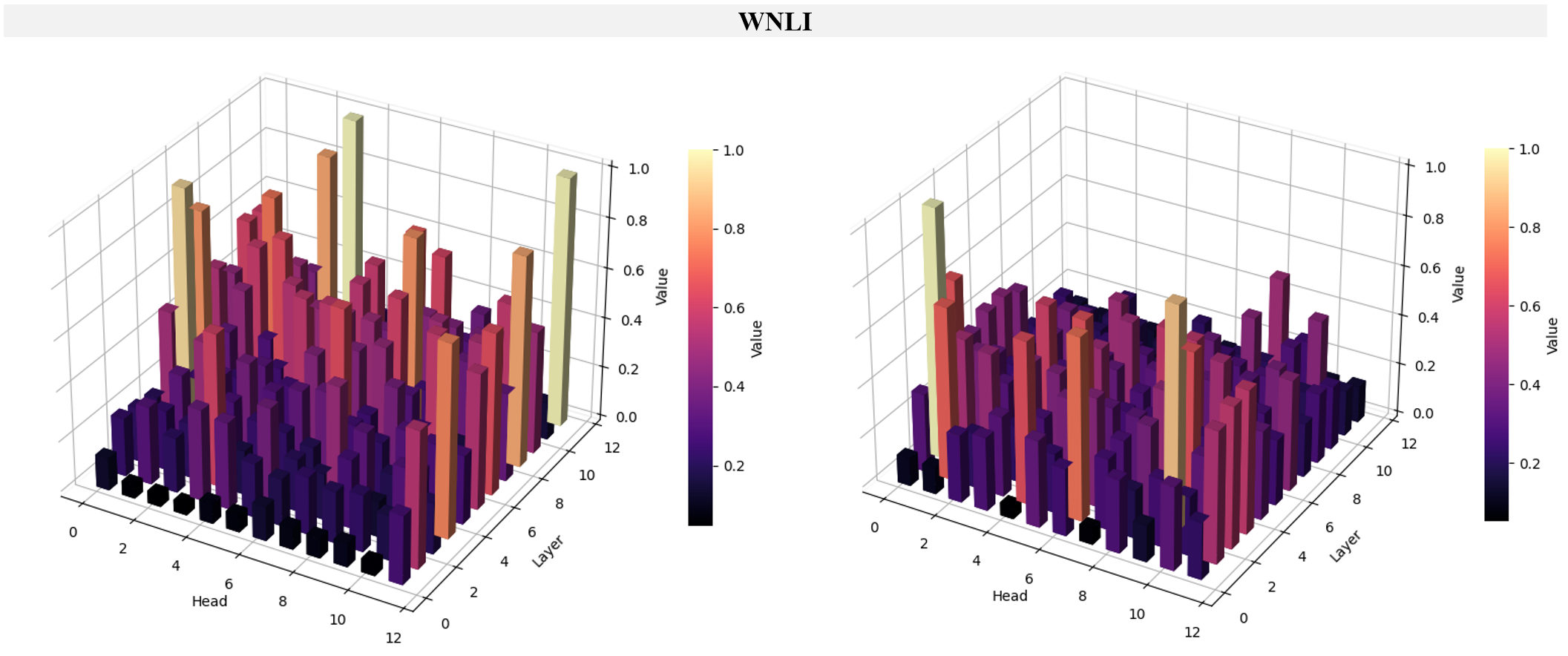}
    \caption{3D Analysis of Attention Head Importance Scores}

    \label{fig:Adx_Fig_3D}
\end{figure}

\newpage
\subsubsection{Difference in Pruning Patterns}
\label{Appendix_D51}
As shown in Figure~\ref{fig:Adx_Fig_2}, Pruning heatmaps reveal systematic differences between the methods.  HIS-based pruning tends to remove heads primarily from the lower layers, producing an approximately bottom-up pattern consistent with its one-step gradient saliency. In contrast, HIES yields a more dispersed selection spanning lower, middle, and upper layers. We attribute this to the entropy-aware term, which leverages structural properties of the attention distribution (concentration vs.\ dispersion) in addition to gradient sensitivity, thereby promoting diversity across layers in pruning decisions.

\subsubsection{Performance Inversion Across Sparsity Regimes}
\label{Appendix_D52}

\noindent We identify two distinct pruning regimes:

\paragraph{Redundancy Regime ($\leq$\,10\% pruning).}
In the early pruning phase, the model contains a substantial number of redundant heads. Here, gradient-based HIS is sufficient to identify and remove low-sensitivity heads, as they reflect the immediate (one-step) loss change. Consequently, HIS performs slightly better than HIES in both accuracy and stability under light pruning.

\paragraph{Specialization Regime ($\geq$\,30\% pruning).}

As pruning becomes more aggressive, redundant heads are mostly exhausted, and specialized heads begin to be targeted. In this regime, HIS alone struggles to distinguish critical heads from less important ones, as gradient magnitudes no longer capture long-term utility. In contrast, HIES leverages attention entropy to preferentially preserve highly concentrated (low-entropy) heads—which are typically more specialized—and prune high-entropy, less task-specific heads. This leads to superior accuracy and stability under higher pruning ratios.

\paragraph{Summary}
\begin{itemize}
  \item \textbf{Pruning $\leq$\,10\%:} Redundancy regime $\Rightarrow$ HIS outperforms HIES.
  \item \textbf{Pruning $\geq$\,30\%:} Specialization regime $\Rightarrow$ HIES outperforms HIS.
\end{itemize}

\noindent These findings demonstrate that HIS and HIES prioritize head preservation differently—HIS reflects short-horizon gradient sensitivity, whereas HIES incorporates extended inference-time stability by preserving low-entropy specialized heads. 

\newpage
\begin{figure} [H]
    \centering
    \includegraphics[width=1\linewidth]{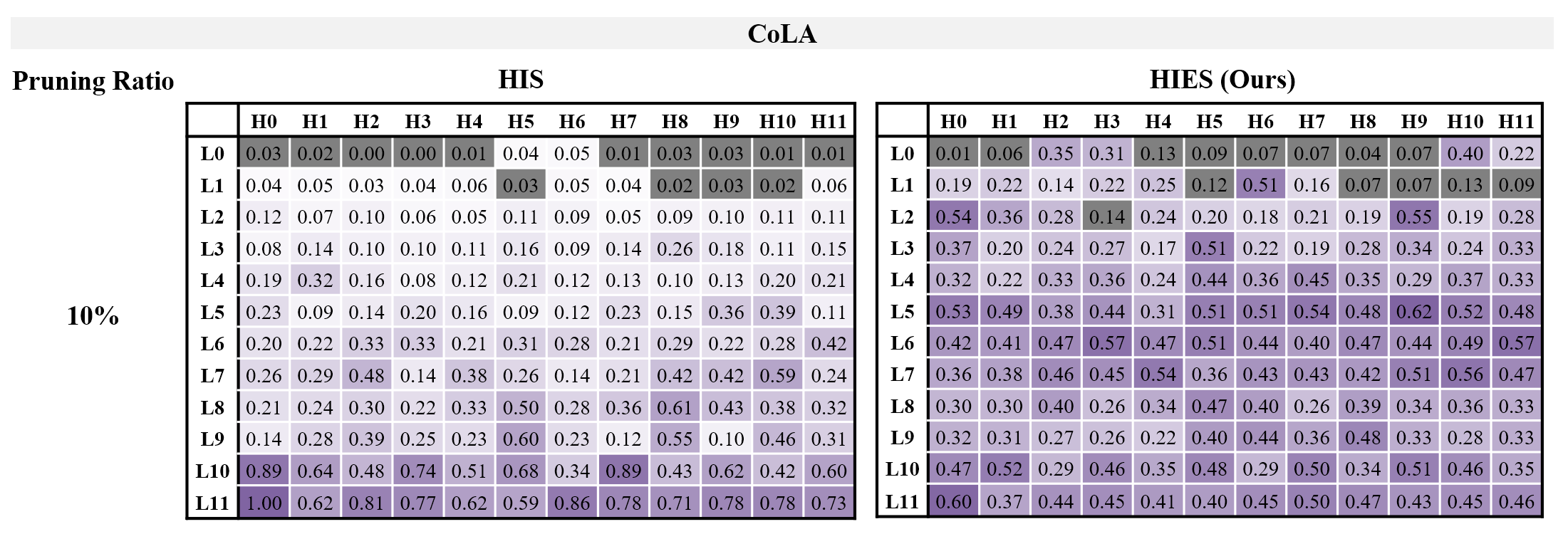}
\end{figure}

\vspace{-6mm}

\begin{figure} [H]
    \centering
    \includegraphics[width=1\linewidth]{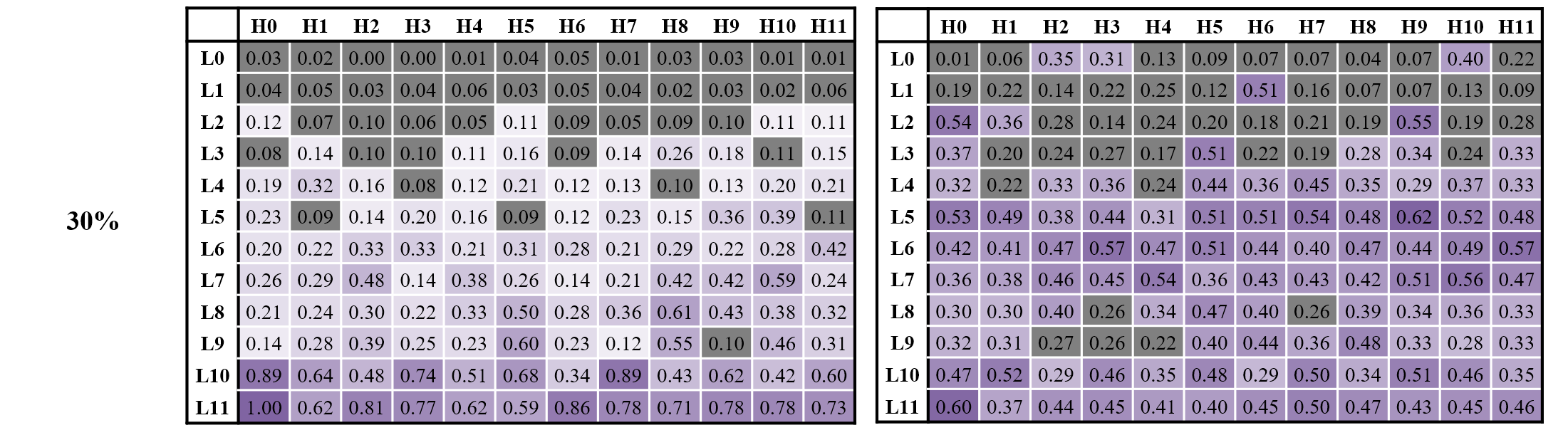}
\end{figure}

\vspace{-6mm}

\begin{figure} [H]
    \centering
    \includegraphics[width=1\linewidth]{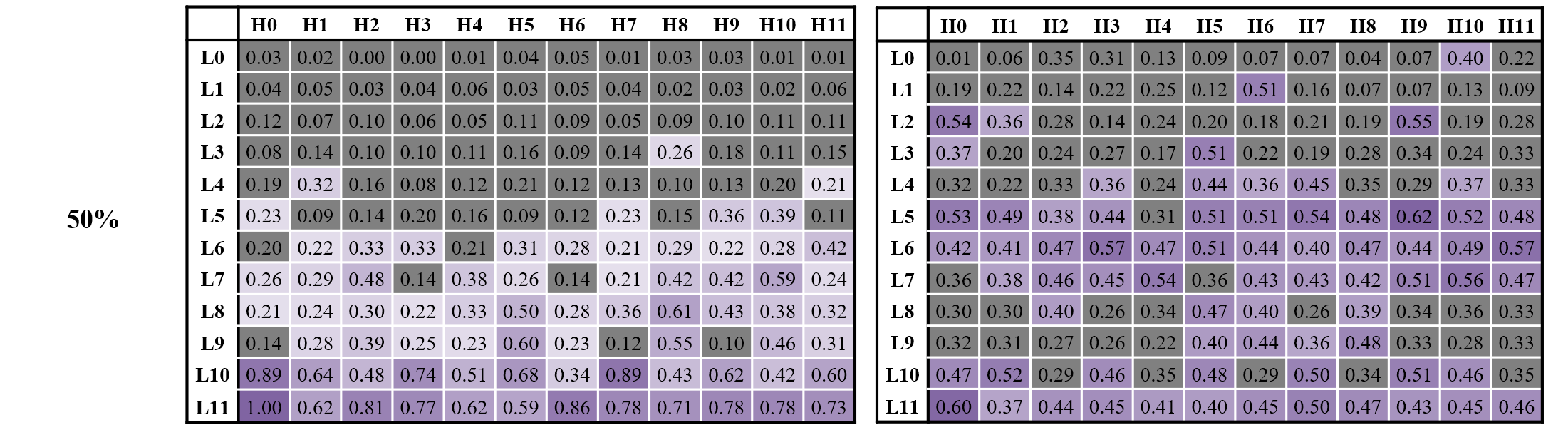}
\end{figure}

\vspace{-6mm}

\begin{figure} [H]
    \centering
    \includegraphics[width=1\linewidth]{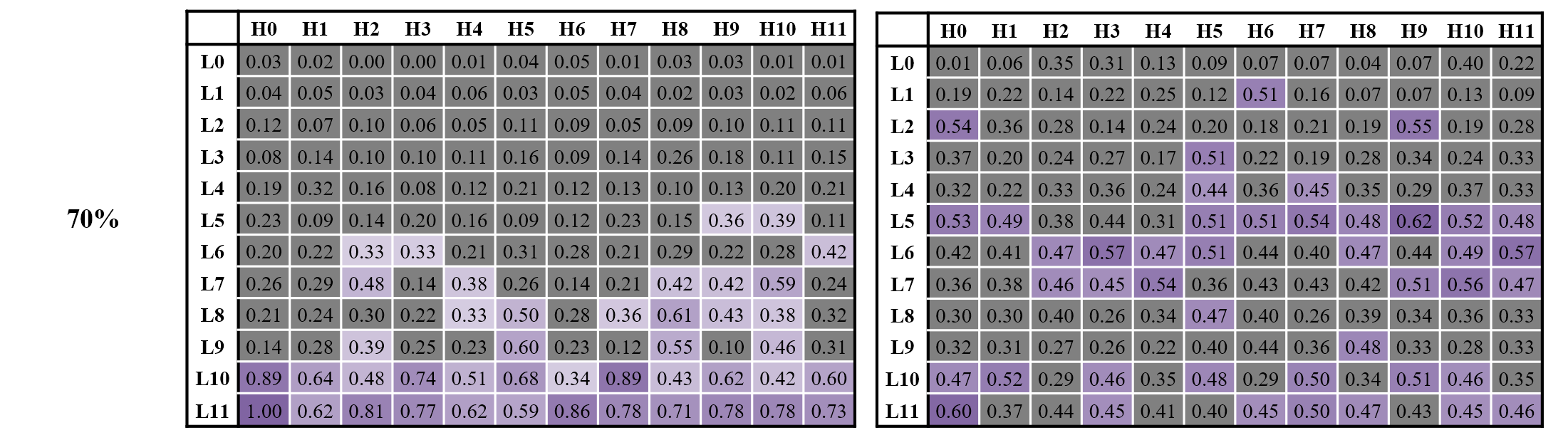}
    \caption{CoLA: heatmaps of head importance and pruning across sparsity levels. For each pruning ratio (10\%, 30\%, 50\%, 70\%), we show HIS (left) and HIES (right). Rows = layers (L0–L11); columns = heads (H0–H11). Dark/grey cells mark heads pruned at the target ratio.}
    \label{fig:Adx_Fig_2}
\end{figure}

\newpage
\subsection{Experimental Results on Large-scale Reasoning Tasks}
\label{large_scale_reasoning}
\label{Appendix_D6}
Figure~\ref{rebuttal_fig:fig_reasoning} reports reasoning performance on GSM8K and MMLU with $\text{LLaMA-2}_\text{7B}$, comparing HIES against HIS-based pruning. Across both benchmarks, HIES consistently outperforms HIS across pruning regimes, achieving an average accuracy improvement of 14.67\% and demonstrating improved robustness on reasoning tasks.

\begin{figure}[H]
    \centering
    \captionsetup{name=Figure R.\!\!}\includegraphics[width=0.8\linewidth]{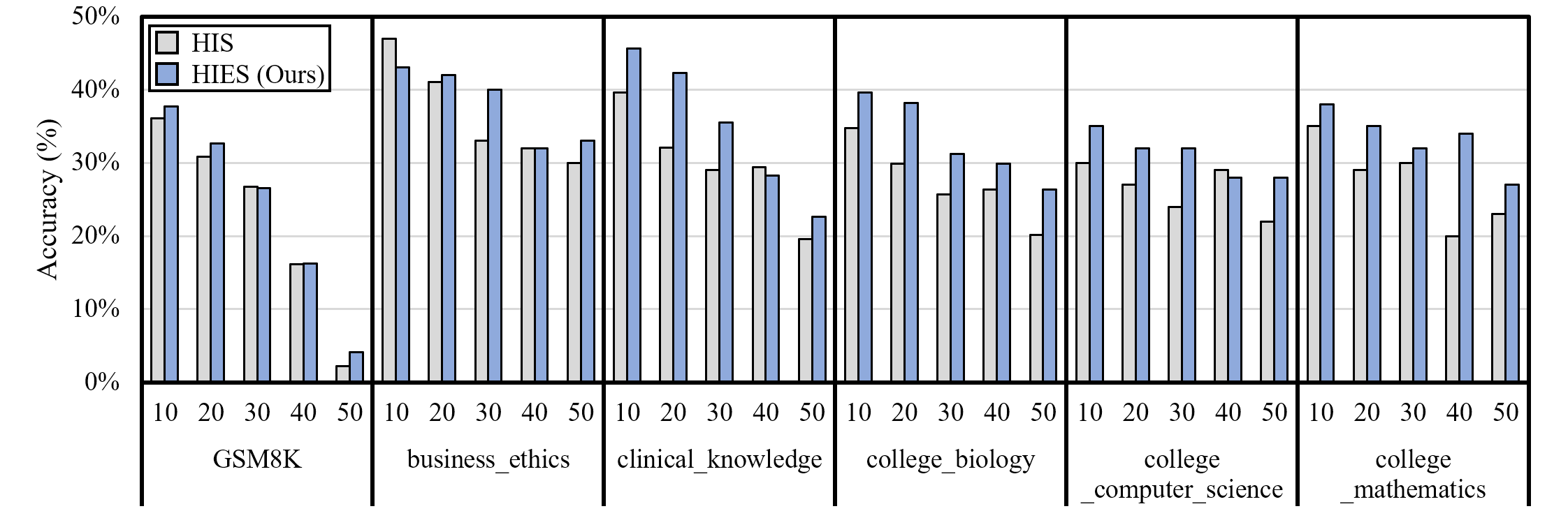}
\end{figure}
\begin{figure}[H]
    \centering
    \includegraphics[width=0.8\linewidth]{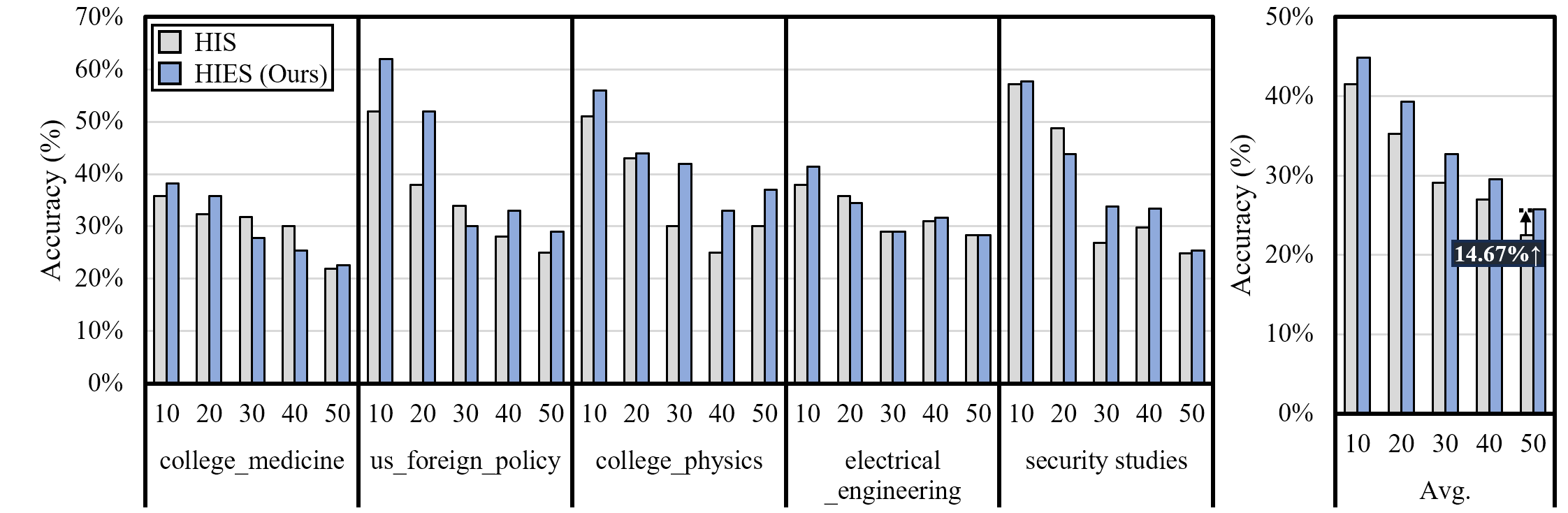}
    \caption{Results on GSM8K (math word-problem reasoning) and MMLU (10-task knowledge reasoning) with $\text{LLaMA-2}_\text{7B}$.}
    \label{rebuttal_fig:fig_reasoning}
\end{figure}

\newpage
\subsection{Experimental Results on Downstream Tasks}
\label{app:downstream}
\label{Appendix_D7}
Figure~\ref{fig:Adx_Fig_downstream} reports downstream evaluations of HIES versus the HIS baseline on CIFAR-100, Food-101, and Fashion-MNIST. Across all three benchmarks, HIES consistently sustains higher accuracy under aggressive pruning, whereas HIS exhibits rapid degradation once the pruning ratio exceeds 20\%. On CIFAR-100, HIS collapses beyond moderate sparsity, while HIES exhibits slower degradation and retains substantially higher accuracy relative to HIS even at 40--50\%. Food-101 reveals a similar trend, with HIES delivering substantial and consistent gains over HIS across all pruning levels. On Fashion-MNIST, HIS undergoes steep drops after 20\% pruning, in contrast to the stable performance of HIES up to 50\%. These results demonstrate that HIES reliably mitigates sharp-drop phenomena and delivers robust, stable improvements over HIS across heterogeneous downstream tasks.

\begin{figure}[H]
    \vspace{-1mm}
    \centering
    \includegraphics[width=0.8\linewidth]{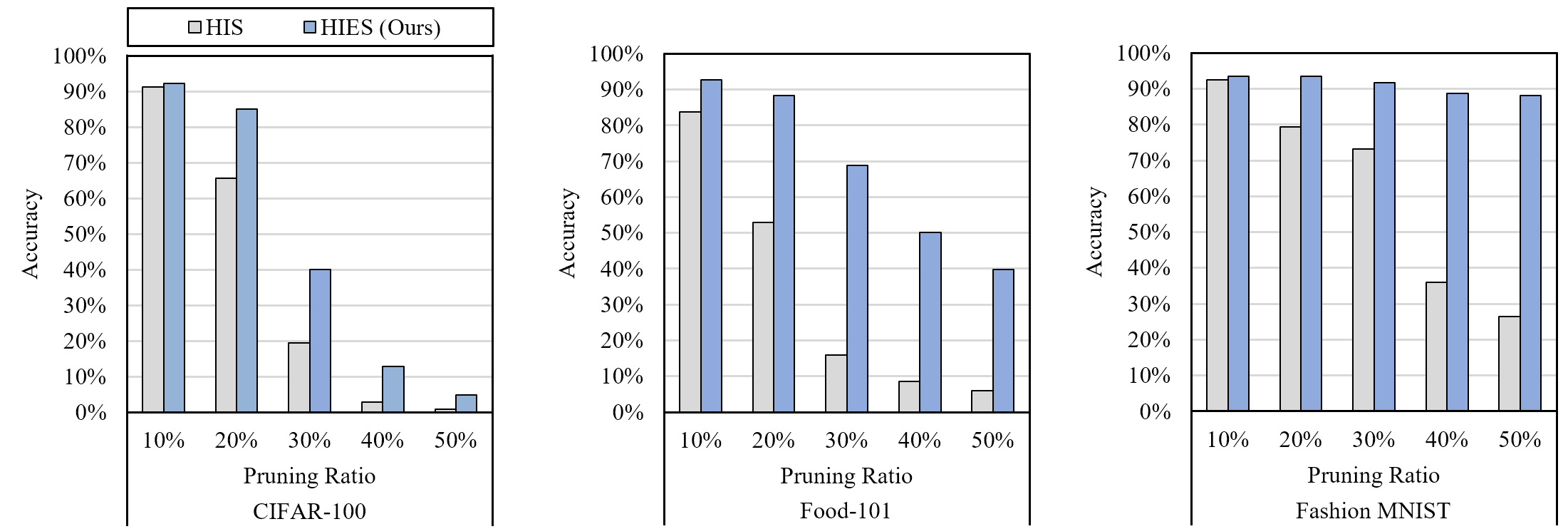}
    \caption{Evaluation of HIES on the image classification benchmarks. HIES consistently outperforms baseline, demonstrating robust and stable performance across downstream tasks.}
    \label{fig:Adx_Fig_downstream}
\end{figure}

\clearpage
\subsection{\texorpdfstring{Sensitivity Analysis - Ablation on $\alpha$ }{Ablation on alpha}}
\label{sensitivity}
\label{Appendix_D8}

\begin{figure} [H]
    \centering
    \includegraphics[width=0.9\linewidth]{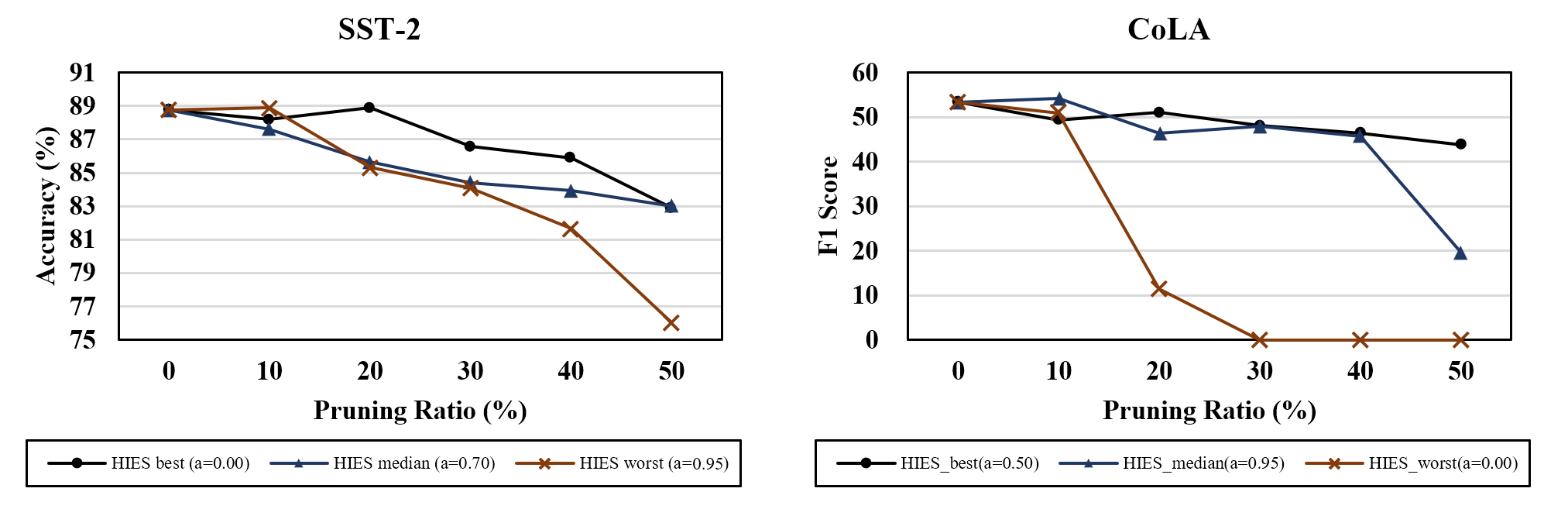}
\end{figure}

\vspace{-8mm}

\begin{figure} [H]
    \centering
    \includegraphics[width=0.9\linewidth]{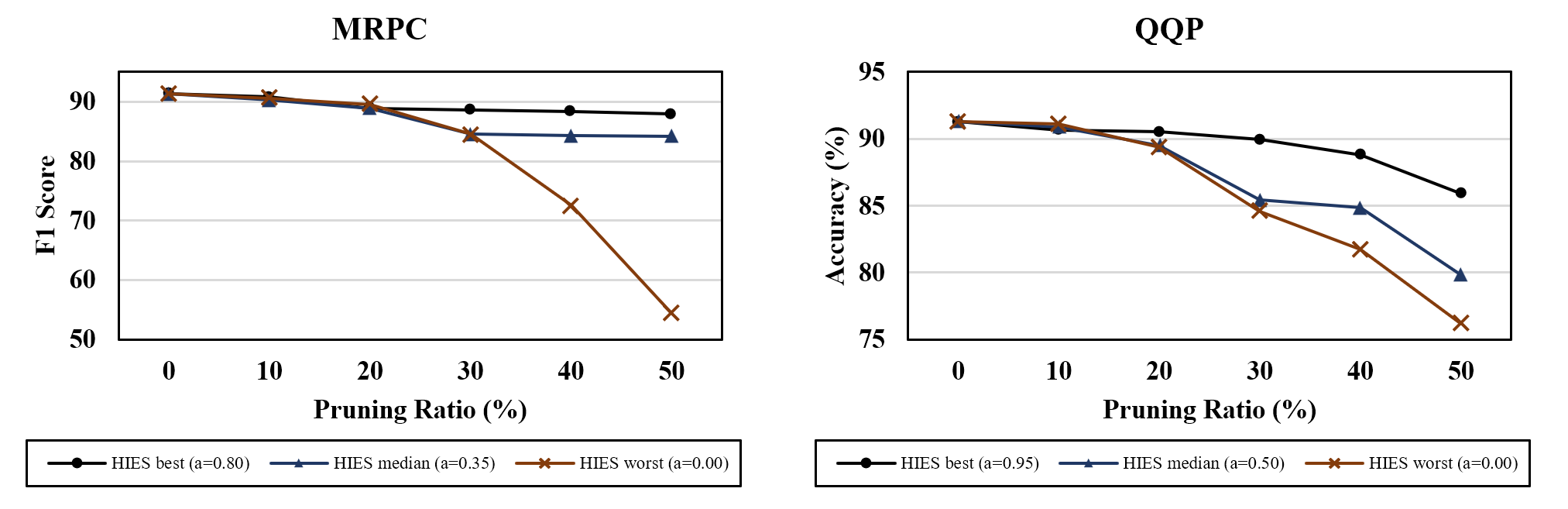}
\end{figure}

\vspace{-8mm}

\begin{figure} [H]
    \centering
    \includegraphics[width=0.9\linewidth]{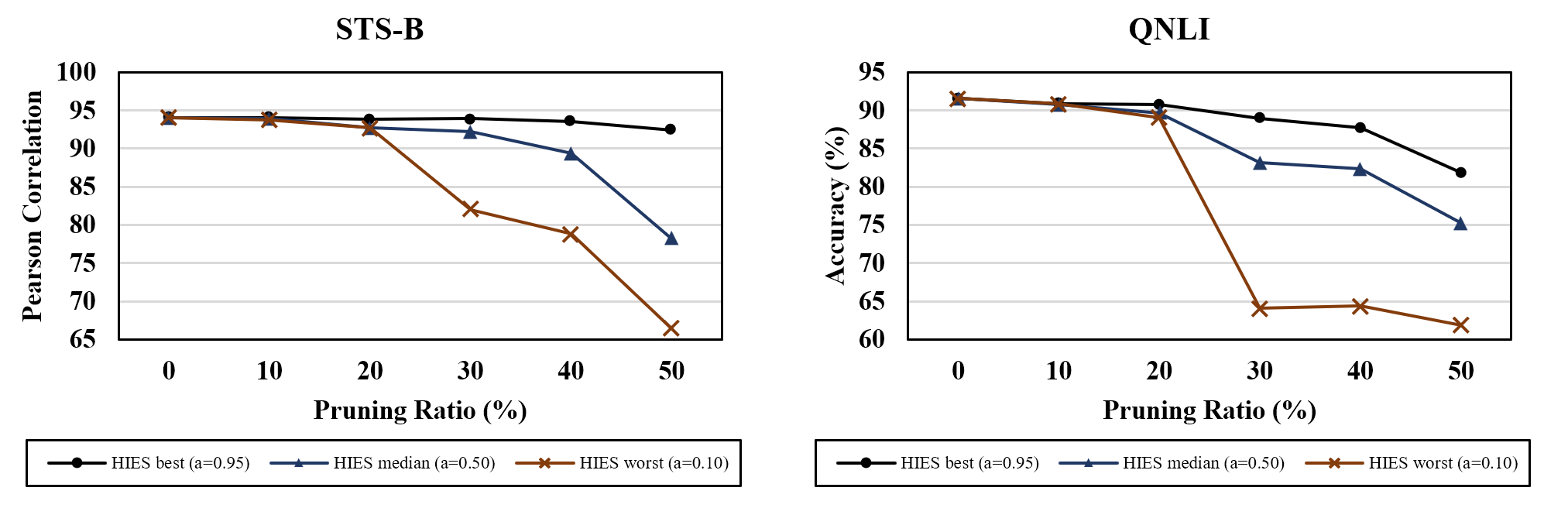}
\end{figure}

\vspace{-8mm}

\begin{figure} [H]
    \centering
    \includegraphics[width=0.9\linewidth]{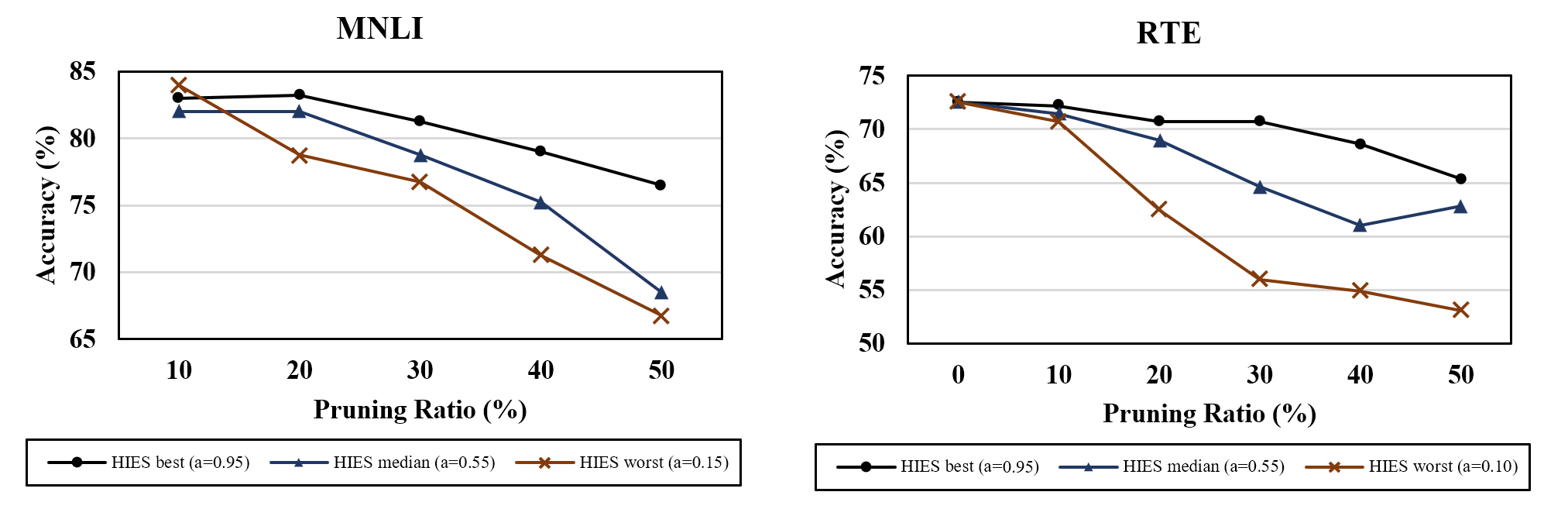}
    \caption{HIES sensitivity to the mixing coefficient \(\alpha\) on GLUE. For each task, we sweep \(\alpha\) and report three choices—\(\alpha_{\text{best}}\), \(\alpha_{\text{median}}\), \(\alpha_{\text{worst}}\)—selected by weighted AUC (wAUC) across pruning ratios. Curves plot performance versus pruning ratio for these three settings.}
    \label{fig:Adx_Fig_3}
\end{figure}

We sweep the mixing coefficient \(\alpha\in[0,1)\) that interpolates the gradient-based HIS and AE signals in HIES,
\[
\mathrm{HIES}_h(\alpha)=\alpha\,\widehat{\mathrm{HIS}}_h+(1-\alpha)\,\widehat{\mathrm{AE}}_h.
\]
As expected, larger \(\alpha\) upweights HIS and preserves heads with strong task relevance, whereas smaller \(\alpha\) upweights AE and retains low-entropy, focused heads. We choose a single \(\alpha^\star\) on a held-out validation split and fix it for all reported experiments; the resulting accuracy–sparsity profiles are shown in Figure~\ref{fig:Adx_Fig_3}.


\subsection{Combining Attention Entropy with Other Importance Signals}
\label{Appendix_D9}
To examine the applicability of AE beyond gradient-based scores, we combine it with a different importance signal based on the L2 norm of attention outputs. This experiment assesses the generality of AE and verifies that it can serve as a complementary stabilization term under different importance signals.

Figure~\ref{rebuttal_fig:fig_l2_norm} reports the impact of combining AE with the L2-norm–based importance score on $\text{LLaMA-2}_\text{7B}$ across five benchmarks: HellaSwag, Winogrande, ARC-e, ARC-c, and OBQA. Across all benchmarks, incorporating AE consistently mitigates performance degradation under aggressive pruning, indicating that the stabilizing effect of AE is not restricted to a particular importance formulation.
 
\begin{figure}[H]
    \centering
    \captionsetup{name=Figure R.\!\!}
    \includegraphics[width=\linewidth]{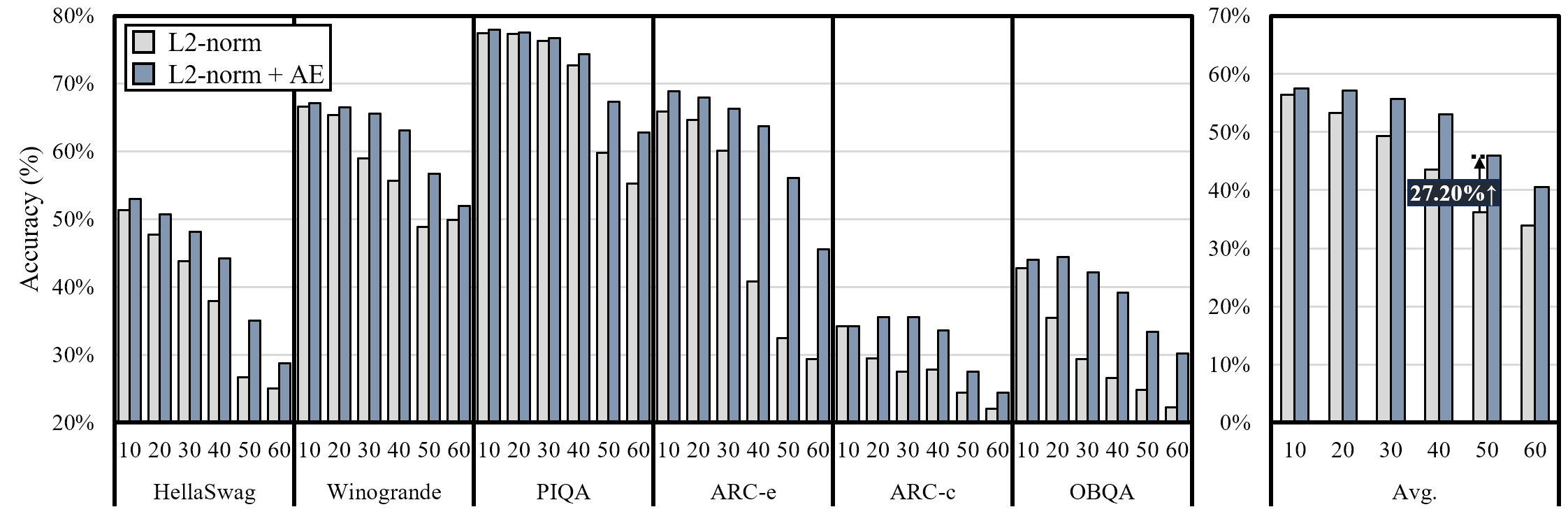}
\end{figure}
\begin{figure}[H]
    \centering
    \includegraphics[width=\linewidth]{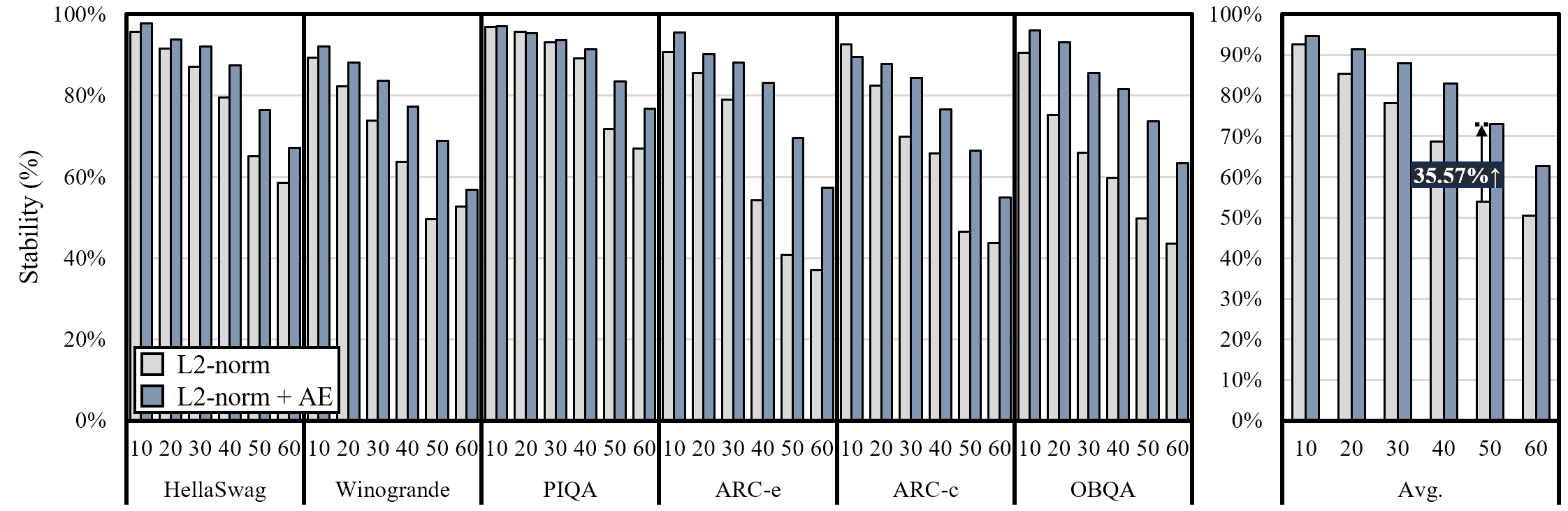}
    \caption{Accuracy and stability improvements when combining AE with gradient L2-norm–based importance scores, compared to using the L2-norm alone. We conduct experiments on $\text{LLaMA-2}_\text{7B}$ for HellaSwag, Winogrande, PIQA, ARC-e, ARC-c, and OBQA.}
    \label{rebuttal_fig:fig_l2_norm}
\end{figure}

\clearpage
\section{Efficiency Analysis}
\label{app:efficiency}
\label{Appendix_E}
\subsection{Computational Efficiency and FLOPs Reduction}
\label{Appendix_E1}
As the pruning ratio increases, the total FLOPs decrease approximately linearly: removing 10\% of attention heads yields an $\approx$4\% reduction in FLOPs, while pruning 50\% of heads achieves an $\approx$20\% reduction. Under HIS-only pruning, model accuracy on TinyBERT with SST-2 sharply degrades beyond a 42\% pruning ratio, at which point only an $\approx$16\% FLOPs reduction can be attained without critical performance loss. In contrast, HIES-based pruning maintains at least 80\% validation accuracy up to a 60\% pruning ratio, corresponding to an $\approx$23\% FLOPs reduction relative to the original model. Extending the feasible pruning regime from 42\% to 60\% therefore delivers an additional
$\approx$7 percentage-point reduction in FLOPs, demonstrating that HIES enables substantially greater computational efficiency without compromising task performance.

\subsection{Runtime Overhead and Implementation Details}
\label{Appendix_E2}
The above efficiency gains are achieved with only a small computational overhead. HIES is computed using lightweight forward hooks attached
to the attention modules. Specifically, HIS reuses gradients produced during standard backpropagation, while AE is obtained directly from attention probabilities in the forward pass. Both signals are collected via non-intrusive forward hooks without modifying model internals or introducing additional forward or backward passes. As a result, the runtime overhead is minimal—typically on the order of a few seconds—
with negligible additional memory cost beyond a small buffer for storing per-head statistics. Despite this low overhead, HIES consistently outperforms baselines that rely on full fine-tuning, achieving superior efficiency in both computation and accuracy.

\end{document}